\crefname{equation}{}{}
\crefname{section}{section}{sections}
\crefname{figure}{figure}{figures}
\crefname{table}{table}{tables}
\crefname{example}{example}{examples}
\crefname{proposition}{proposition}{propositions}
\Crefname{section}{Section}{Sections}
\Crefname{figure}{Figure}{Figures}
\Crefname{table}{Table}{Tables}
\Crefname{definition}{Definition}{Definitions}
\Crefname{theorem}{Theorem}{Theorems}
\Crefname{remark}{Remark}{Remarks}
\Crefname{example}{Example}{Examples}
\Crefname{proposition}{Proposition}{Propositions}
\numberwithin{equation}{section}
\newtheorem{theorem}{Theorem}[section]
\newtheorem{lemma}{Lemma}[section]
\newtheorem{corollary}{Corollary}[section]
\newtheorem{proposition}{Proposition}[section]
\theoremstyle{definition}
\newtheorem{definition}{Definition}[section]
\newtheorem{remark}{Remark}[section]
\newtheorem{assumption}{Assumption}[section]
\title{Transferability of Graph Neural Networks using Graphon and Sampling Theories}
\author{A. Martina Neuman$^1$, Jason J.\ Bramburger$^2$}
\date{$^1$Faculty of Mathematics, University of Vienna, Vienna, Austria\\
$^2$Department of Mathematics and Statistics, Concordia University, Montr\'eal, QC, Canada}
\begin{document}

\maketitle

\begin{abstract}
Graph neural networks (GNNs) have become powerful tools for processing graph-based information in various domains. A desirable property of GNNs is transferability, where a trained network can swap in information from a different graph without retraining and retain its accuracy. A recent method of capturing transferability of GNNs is through the use of graphons, which are symmetric, measurable functions representing the limit of large dense graphs. In this work, we contribute to the application of graphons to GNNs by presenting an explicit two-layer graphon neural network (WNN) architecture. We prove its ability to approximate bandlimited graphon signals within a specified error tolerance using a minimal number of network weights. We then leverage this result, to establish the transferability of an explicit two-layer GNN over all sufficiently large graphs in a convergent sequence. 
Our work addresses transferability between both deterministic weighted graphs and simple random graphs and overcomes issues related to the curse of dimensionality that arise in other GNN results. The proposed WNN and GNN architectures offer practical solutions for handling graph data of varying sizes while maintaining performance guarantees without extensive retraining.
\end{abstract}

\paragraph{Key words:} graphon, regularized sampling, graph neural network, transferability, random graph, curse of dimensionality


\section{Introduction}

Graph neural networks (GNNs), falling under geometric deep learning models \citep{bronstein2021geometric}, are powerful tools for processing graph-structured data \citep{wu2022graph,scarselli2008graph,micheli2009neural}. They leverage graph topology to facilitate information exchange among neighboring vertices, finding applications in diverse domains such as natural language processing \citep{wu2023graph}, chemistry \citep{jiang2021could,fung2021benchmarking,gilmer2017neural}, citation networks \citep{bhagavatula2018content,cummings2020structured,hamilton2017inductive}, and recommender systems \citep{ruiz2020graphon,huang2021mixgcf,gao2022graph,wu2022graph,ricci2021recommender,resnick1997recommender,gao2023survey}. GNNs have been shown to exhibit superior predictive power compared to traditional neural networks \citep{ala2020improving,ma2021deep} as well as achieve efficient generalization of bandlimited functions with fewer network weights than the best-known results for deep neural networks \citep{neuman2022superiority}. For a comprehensive introduction to the theory and applications of GNNs, see the review by \citep{zhou2020graph}.

One of the most desirable properties of GNNs is transferability. This allows a GNN to be used across different graphs without necessitating re-training while maintaining performance guarantees. An instance of transferability would be ensuring robustness when there are alterations in the underlying edge or weight structure.
Another example of transferability would be the adaptability between graphs of varying sizes, as seen with recommender systems where vertices represent users, and edges denote similarities.
When users frequently join or leave the platform, yet the graph structure remains essentially consistent among those continuing on, a GNN-based system capable of curating recommendations for similar individuals across networks with fluctuating sizes is paramount.

To achieve transferability of GNNs across networks of varying sizes, \citep{ruiz2020graphon} introduced graphon neural networks (WNNs), in which graphons, represented by symmetric, measurable functions $W:[0,1]^2 \to [0,1]$, served as a key component. 
These limit objects capture the essence of large, dense graphs as the number of vertices approaches infinity \citep{lovasz2006limits,Janson,borgs2017sparse,glasscock2015graphon}.
Members of a sequence of graphs converging to a graphon $W$ can be seen as sharing structural characteristics, much along the line of recommender systems. 
This underscores the increasing use of graphons to capture transferability and other key properties of GNNs. 
As an example, graphons were used in \citep{ruiz2020graphon} to prove GNN transferability across large diverse graphs, with similar transferability results achieved in \citep{levie2021transferability}. Along the same vein, the transferability of spectral graph neural networks was demonstrated in \citep{maskey2023transferability}.
Further research explores GNN stability under graphon and graph perturbations \citep{ruiz2021graphon, keriven2020convergence}. Experimental studies such as \citep{ruiz2021transferability} evaluate the theoretical performance guarantees of GNNs trained on moderate-sized graphs when transferred to larger ones.
Graphons are also employed to estimate parameters for GNN training \citep{hu2021training}, and for pooling and sampling in GNNs, with the former application demonstrably enhancing resilience against overfitting \citep{parada2021graphon}. 
In a primarily theoretical context, \citep{cervino2023learning} proposes a method to learn WNNs by training GNNs on expanding networks. 
Finally, there is an emerging literature on signal processing leveraging graphons \citep{ruiz2021graphon,morency2021graphon,ruiz2021graphonsignal,ruiz2021graphonprocessing} to describe structurally similar graph signals over various networks.  

Our work contributes to the emerging application of graphons to data science by presenting a two-layer WNN architecture that can approximate a bandlimited signal within an $\varepsilon > 0$ error tolerance using $\mathcal{O}(\varepsilon^{-10/9})$ network weights.
This result is then leveraged to prove transferability of an explicit two-layer GNN among all sufficiently large graphs converging to a graphon, accounting for both deterministic weighted graphs and simple random graphs, of which the latter is often overlooked in the WNN literature. 
Importantly, the use of graphons enables us to circumvent any issues related to the curse of dimensionality, keeping, in particular, the number of layers bounded independently of $\varepsilon$. 
The reader should compare with similar results on approximating bandlimited functions on high-dimensional domains \citep{montanelli2021deep,chen2019note}, in which the numbers of weights and layers become necessarily unbounded as $\varepsilon \to 0^+$.

Our focus on bandlimited functions allows for a sampling result that underpins our WNN architecture and aligns with the growing literature on neural networks approximating bandlimited functions \citep{neuman2022superiority,chen2019note,montanelli2021deep,opschoor2022exponential,wang2018exponential,dziedzic2019band,wang2022convolutional}. We consider a bandlimited function as a linear combination of finitely many Fourier modes, unlike recent works that replace Fourier modes with eigenfunctions of the underlying graphon or manifold \citep{ruiz2021graphonprocessing,wang2022convolutional}. 
Since one rarely has access to the graphon and graphon signals beyond finite samples, we view working with Fourier bandlimited functions as more practical. Nonetheless, for ring graphons, which depend only on the distance $|x - y|$, the eigenfunctions are exactly the Fourier modes, making the two bandlimited definitions coincide \citep{bramburger2023pattern}.

We summarize our results in this paper as follows:
\begin{enumerate}
    \item We prove a sampling theorem that gives an explicit reconstruction of bandlimited functions on $[0,1]$, with a subgeometric convergence rate, providing a foundation for our WNN and GNN architectures.
    \item Guided by our sampling theorem, we present a two-layer WNN architecture that can reproduce bandlimited graphon signals with an $L^2$-error $\varepsilon > 0$ using only $2N = \mathcal{O}(\varepsilon^{-10/9})$ evenly spaced samples from $[0,1]$.
    \item Through a discretization of the interval $[0,1]$, the WNN architecture leads to a GNN architecture for which graph adjacency matrices are simply swapped into the filter computational unit. We establish performance guarantees on the transferability of these GNNs for all sufficiently large simple deterministic weighted graphs and random graphs belonging to the same graphon family.   
\end{enumerate}

We outline in Section~\ref{sec:Preliminaries} all important concepts, definitions, and auxiliary results to be used throughout. Precisely, Subsection~\ref{sec:GNNIntro} reviews GNNs, Subsection~\ref{sec:graphons} introduces graphons, and Subsection~\ref{sec:WNN} defines WNNs as a generalization of GNNs. Subsection~\ref{sec:Fsamptheory} discusses bandlimited functions, and Subsection~\ref{sec:WNNframework} presents our specific WNN and GNN architectures. Our main results are given in Section~\ref{sec:Results}, organized into subsections according to the summarized contributions above, followed by a discussion in Subsection~\ref{sec:Ramifications} on their ramifications. The proof of our sampling theorem is left to Section~\ref{sec:Tsamplingthm}, the proof of WNN generalization to Section~\ref{sec:WNNthm}, and the GNN proofs to Sections~\ref{sec:GNNdet} and~\ref{sec:GNNran}. Section~\ref{sec:Discussion} concludes with a discussion of our results and some important avenues for future investigation.

\section{Preliminaries}\label{sec:Preliminaries}

We introduce the symbols, notations, and conventions that will be used consistently throughout this paper. 
For $N\in\mathbb{N}$, we define an integer interval $B_N$ to be
\begin{equation*}
    B_N := \{-N, -N+1, \cdots, 0, \cdots, N-2, N-1\}. 
\end{equation*}
We denote the cardinality of a set $X$ by $\texttt{\#} X$. Throughout we will employ a conventional abuse of the notation $|\cdot|$ so that, when $I$ is an interval, $|I|$ means the length of $I$, but when $x$ is a Euclidean element, $|x|$ means its Euclidean norm.  

Let $p=1,2$. When $X$ is a Lebesgue measurable set, the space $L^p(X;\mathbb{C}^m)$ comprises all functions $f:X \to \mathbb{C}^m$ for which the norm
\begin{equation*}
    \|f\|_{L^p(X;\mathbb{C}^m)} := \bigg(\int_X |f(x)|^p\,\mathrm{d}x\bigg)^{1/p}
\end{equation*}
is finite. 
When $X$ is a discrete set equipped with the counting measure, $f\in L^p(X;\mathbb{C}^m)$ if it has a finite norm
\begin{equation*}
    \|f\|_{L^p(X;\mathbb{C}^m)} := \bigg(\sum_{x \in X} |f(x)|^p \bigg)^{1/p}.
\end{equation*}
We preserve the symbols $\ell^1$ and $\ell^2$ for when $X=\mathbb{Z}$.
When $m = 1$, we simply write $L^p(X;\mathbb{C})$ as $L^p(X)$ and $\ell^p(\mathbb{Z};\mathbb{C})$ as $\ell^p(\mathbb{Z})$. 

Another abuse of notation arises with the congruence symbol $\cong$. When $X, Y$ are sets, $X \cong Y$ means they are isomorphic as sets, for example, the torus $\mathbb{T} \cong [0,1)$ or $\mathbb{T} \cong [-1/2,1/2)$. When $X,Y$ are Lebesgue measurable sets, $X \cong Y$ means their set difference $X \Delta Y$ has Lebesgue measure zero, for example, $[0,1] \cong [0,1)$. Lastly, when $X, Y$ are groups, $X \cong Y$ signifies that they are isomorphic as groups; however, this usage only appears in Appendix~\ref{sec:Tsamp}.

\subsection{Graph Neural Networks}\label{sec:GNNIntro}

Let $G = (V,E,w)$ denote a graph with $V$ vertices, $E\subset V\times V$ graph edges, and an edge weight function $w: V\times V\to [0,1]$.
Supposing $\texttt{\#}V =n$, for $n\in\mathbb{N}$, we write $V = \{v_1,v_2,\dots,v_n\}$. 
We restrict ourselves to the case of $G$ being a simple, undirected graph, meaning that $w$ is symmetric, i.e. $w(v_k,v_l) = w(v_l,v_k)$, and contains no self-loops, i.e. $w(v_k,v_k)=0$. Associated with $G=(V,E,w)$ is the graph adjacency matrix ${\bf A}\in\mathbb{R}^{n\times n}$ satisfying $[{\bf A}]_{kl}=w(v_k,v_l)$. 

A {\em graph signal} is a function $f\in L^2(V;\mathbb{C}^m)$, for some $m \in \mathbb{N}$. 
A GNN $\Psi_G$ on $G$ can also be viewed as a function in $L^2(V; \mathbb{C}^m)$.
Often in many applications of GNNs, vertices $v_k$ are assigned coordinate vectors $X_k\in\mathbb{R}^d$ that depict their locations in space. 
Then $\Psi_G$ takes as input the collection of all these feature vectors $X_k$, known as the (input) feature matrix:
\begin{equation}\label{def:featuremat}
    {\bf X} := \begin{bmatrix} 
        X_1 & X_2 & \cdots & X_n
    \end{bmatrix}\in \mathbb{R}^{d\times n}.
\end{equation} 
The output of $\Psi_G$ at each vertex $v_k$ is a feature vector $Y_k\in\mathbb{C}^m$. The collection of these is the (output) feature matrix
\begin{equation*}
    {\bf Y} := \begin{bmatrix} 
        Y_1 & Y_2 & \cdots & Y_n
    \end{bmatrix}  \in \mathbb{C}^{m\times n}.
\end{equation*}
The input and output dimensions, respectively, $d, m$, are problem-dependent. 

In terms of structure, a GNN is characteristically defined by finite compositions of two distinct computational units:
\begin{itemize}
    \item {\bf Multilayer Perceptron (MLP)}: Let ${\bf W} \in\mathbb{C}^{m\times d}$ be a {\it weight matrix}, ${\bf b}\in\mathbb{C}^{m}$ a {\it bias vector}, and $\rho$ a nonlinear {\it activation function}. Then an MLP sends ${\bf Z}\in\mathbb{C}^{d\times n}$ to $\rho({\bf W}{\bf Z} + {\bf b})$, 
    where $\rho$ is applied componentwise. 
    \item {\bf Filter}: A graph filter $\mathfrak{F}$ distinguishes a GNN from a standard feedforward NN architecture. Generally, a graph filtering process is any process that takes in nodal features (e.g. {\bf X} in \eqref{def:featuremat}) {\it and} the graphical structure, to outputs a new set of nodal features. There are typically two types of graph filters: spatial-based and spectral-based. The spatial filters explicitly leverage the graph connections to perform a feature refining process, whereas the spectral ones utilize spectral graph theory to design filtering in the spectral domain 
    \citep{ma2021deep}.
\end{itemize}

Given these points, the total architecture of a GNN can be expanded as
\begin{equation}\label{gnnex}
    {\bf Y}\equiv \rho_{L}({\bf W}^{(L)} \psi_{L-1}\circ\psi_{L-2} \circ\cdots\circ \psi_1({\bf X})+{\bf b}^{(L)}).
\end{equation}
Here, $L\in\mathbb{N}$ is the number of layers, and each $\psi_{j}$ in \eqref{gnnex} is a (intermediate) {\it network layer}, where
\begin{equation}\label{GNNlayer}
    \begin{split}
        &{\bf H}_{j} = \psi_{j}({\bf H}_{j-1}) = \mathfrak{F} (\rho({\bf W}^{(j)}{\bf H}_{j-1} + {\bf b}^{(j)})), \quad  j=1,\cdots, L-1,\\
        &{\bf H}_0 \equiv {\bf X}.
    \end{split}
\end{equation}
In \eqref{gnnex}, $\mathfrak{F}$ is not performed at the last $L$th layer, and the last activation $\rho_{L}$ is optional. 
Furthermore, $\mathfrak{F}$ should be enabled at a minimum of one - but not necessarily all - intermediate layers. The GNN {\it network parameter} is the set of all entries of all the weight matrices, all the bias vectors, and the number of layers.

In this work, we use the layout \eqref{GNNlayer}, noting that alternatives exist where the computational units are arranged differently. 
We will also focus on (linear) spatial-based graph filters. Such filtering 
is implemented by a {\it graph filter kernel} (sometimes referred to as a {\em graph shift operator} \citep{ruiz2020graphon}) ${\bf K} \in \mathbb{C}^{n \times n}$ which exploits the graph topology of $G$. 
Specifically, if ${\bf Z}\in\mathbb{C}^{d\times n}$, then 
\begin{equation}\label{GSO}
    \mathfrak{F}: \quad {\bf Z} \mapsto {\bf K}{\bf Z}^{\top}.
\end{equation} 
The choice of ${\bf K}$ can be determined by the user and is often problem-dependent. We will explore graph filter kernels of the form ${\bf K} = {\bf G} \circ {\bf A}$, where $\circ$ denotes the Hadamard product, and ${\bf G}\in\mathbb{C}^{n\times n}$ localizes information propagation by inducing a near-sparsity pattern in ${\bf K}$. This sparsification effect is particularly helpful in dense graphs, ensuring that information ${\bf Z}$ in \eqref{GSO} does not spread globally across $G$ through matrix multiplication by ${\bf K}^{\top}$.

\subsection{Graphons} \label{sec:graphons}

A {\em graphon} is a symmetric, Lebesgue measurable function $W:[0,1]^2 \to [0,1]$, serving as a continuum generalization of a graph adjacency matrix. 
This interpretation comes from envisioning $[0,1]$ as a graph of an uncountable number of vertices, where $W(x,y)\in [0,1]$ represents the edge weight 
between two arbitrary vertices $x,y \in [0,1]$ - as with weighted graphs, one interprets $W(x,y) = 0$ as no edge being present. 

\paragraph{Graphons as limit objects.} Graphons arise as limits of sequences of graphs on increasing numbers of vertices. To interpret graph convergence meaningfully, we introduce ``homomorphism density" from graph $G$ to graph $H$ as follows:
\begin{equation*}
    t(G,H) := \frac{\texttt{\#} \mathrm{Hom}(G,H)}{ (\texttt{\#} V_H)^{\texttt{\#} V_G}},
\end{equation*}
where $V_H$, $V_G$ denote the number of vertices in graphs $G$, $H$, respectively, and $\texttt{\#} \mathrm{Hom}(G,H)$ denotes the number of {\it graph homomorphisms} from $G$ to $H$. Thus, $t(G,H)$ represents the probability that a mapping from $G$ to $H$ is a graph homomorphism. A sequence of graphs $\{G_n\}_{n = 1}^\infty$ is said to converge if for all finite simple graphs $F$, the limit of $\{t(F,G_n)\}_{n = 1}^\infty$ exists. 
The homomorphism density of a graph $F$ into a graphon $W$ can be defined in an analogous manner \citep[Chapter~7.2]{lovasz2012large}, resulting in a ratio $t(F,W)$. Then, as shown in \citep[Chapter~11]{lovasz2012large}, for every convergent sequence of graphs $\{G_n\}_{n = 1}^\infty$ there exists a graphon $W$ so that 
\begin{equation}\label{graphlim} 
    \lim_{n \to \infty} t(F,G_n) = t(F,W),
\end{equation}
for every finite simple graph $F$. Through \eqref{graphlim}, we may think of $\{G_n\}_{n=1}^\infty$ as belonging to the same graphon family. 

\paragraph{Generating graphs from graphons.}  A major application of graphons is to generate graphs on finitely many vertices. This is done by discretizing the unit interval $[0,1]$ into $n\in\mathbb{N}$ points as
\begin{equation} \label{xj}
    x_k := \frac{k - 1}{n}, \quad k = 1,\cdots,n,
\end{equation}
which represent graph vertices. We write $\mathcal{X}_n:=\{x_1, \cdots, x_n\}$ and $I_k := [x_k, x_{k+1}) = [(k-1)/n, k/n)$, for $k=1,\cdots, n$. 

Using $W$, we construct a deterministic weighted graph $G^{\mathrm{det}}_n$ on $v_1, \cdots, v_n$ (identified with $x_1, \cdots, x_n$) by defining its adjacency matrix ${\bf A}^{\mathrm{det}}_n$ as
\begin{equation*}
    [{\bf A}^{\mathrm{det}}_n]_{k,l} := \begin{cases}
        W(x_k,x_l) & k \neq l\\
        0 & k = l
    \end{cases}.
\end{equation*}
Similarly, $W$ can be used to generate a simple {\it random graph} $G^{\mathrm{ran}}_n$ on $v_1, \cdots, v_n$. In this case, the associated (random) adjacency matrix ${\bf A}^{\mathrm{ran}}_n$ is given by
\begin{equation*}
    [{\bf A}^{\mathrm{ran}}_n]_{k,l} := \begin{cases}
        \xi_{k,l} & k \neq l\\
        0 & k = l
    \end{cases}, 
\end{equation*}
where, for $k>l$, $\xi_{k,l} = \xi_{l,k}$ are independent Bernoulli random variables such that
\begin{equation*}
    \mathbb{P}(\xi_{k,l} = 1) = 1 - \mathbb{P}(\xi_{k,l} = 0) = W(x_k,x_l).
\end{equation*}

The graphs in both sequences $\{G^{\mathrm{det}}_n\}_{n = 1}^\infty$ and $\{G^{\mathrm{ran}}_n\}_{n = 1}^\infty$ belong to the graphon family of $W$. Indeed, for all finite simple graphs $F$, one has \citep[Chapter~11]{lovasz2012large}
\begin{equation*}
    \lim_{n \to \infty} t(F,G^{\mathrm{det}}_n) = t(F,W), 
\end{equation*}
and with probability 1
\begin{equation*}
    \lim_{n \to \infty} t(F,G^{\mathrm{ran}}_n) = t(F,W).
\end{equation*}

\paragraph{Embedding graphs into the graphon space.}
Graphs can be lifted into the space of graphons. Let $G = (V,E,w)$ be a finite simple graph with $V=\{v_1, \cdots, v_n\}$. We define the {\it step graphon} $W_n:[0,1]^2 \to [0,1]$ by
\begin{equation}\label{def:tildeW_n}
    \overline{W}_n(x,y) := \sum_{k,l = 1}^n w(v_k,v_l)\chi_{I_k\times I_l}(x,y).
\end{equation}
In particular, when $G$ is a simple deterministic graph generated from a graphon $W$, \eqref{def:tildeW_n} becomes
\begin{equation*}
    \overline{W}_n(x,y) = \sum_{k,l = 1}^n w(v_k,v_l)\chi_{I_k\times I_l}(x,y) = \sum_{\substack{k,l=1\\ k\not= l}}^n W(x_k, x_l)\chi_{I_k\times I_l}(x,y).
\end{equation*}

\paragraph{Graph and graphon signals.} A {\it graphon signal} is a function in $L^2([0,1];\mathbb{C}^m)$, for some $m\in\mathbb{N}$. An $m$-dimensional graph signal $g$ on a graph $G$ of $n$ vertices has an associated graphon signal representative. By identifying the vertices $v_k$ with $x_k$ \eqref{xj}, we can view $g$ as defined on $\mathcal{X}_n$: $g(v_k) = g(x_k)$. We then embed $g$ into $L^2([0,1];\mathbb{C}^m)$ as the step function
\begin{equation*}
    \overline{g}(x) := \sum_{k = 1}^n g(x_k)\chi_{I_k}(x) = \sum_{k = 1}^n g(v_k)\chi_{I_k}(x), \quad\forall x\in [0,1].
\end{equation*}

\subsection{Graphon Neural Networks} \label{sec:WNN}

Drawing upon the analogy between graphs and graphons, we define a WNN as an extension of the GNN framework. Specifically, a WNN $\Psi$ is a function in $L^2([0,1];\mathbb{C}^m)$, constructed by iteratively composing two distinct computational units:
\begin{itemize}
    \item {\bf MLP}: 
    Let ${\bf W} \in\mathbb{C}^{m\times d}$ be a weight matrix, $b\in L^2([0,1];\mathbb{C}^m)$ a {\it bias function}, and $\rho$ a nonlinear activation function. Then an MLP acts on $g\in L^2([0,1];\mathbb{C}^d)$ as
    \begin{equation*} 
        g(x) \mapsto \rho({\bf W}g(x) + b), \quad \forall x\in [0,1],
    \end{equation*}
    where $\rho$ is applied componentwise. 
    \item {\bf Graphon filter}: A graphon filter generalizes a graph filter. Let $\mathcal{K}\in L^{\infty}([0,1]^2)$. Then the graphon filter $T_\mathcal{K}$ acting on $g = (g_1,\cdots,g_m)\in L^2([0,1];\mathbb{C}^{m})$ as
    \begin{equation*}
        \begin{split}
            T_\mathcal{K}g(x) &= (T_\mathcal{K}g_1(x), \cdots, T_\mathcal{K}g_m(x))\\
            &:= \bigg(\int_0^1 \mathcal{K}(x,y)g_1(y)\,\mathrm{d}y, \cdots, \int_0^1 \mathcal{K}(x,y)g_m(y)\,\mathrm{d}y\bigg), \quad\forall x\in [0,1].
        \end{split}
    \end{equation*} 
\end{itemize}

The choice of $\mathcal{K}$ can be either user or problem determined, and taking $\mathcal{K}$ to be a graphon $W$ gives the operator $T_W$ that is well-studied in the literature \citep{Janson,lovasz2012large,ruiz2021graphonsignal}. Adopting the configuration \eqref{GNNlayer}, an overall WNN structure can be similarly symbolized as
\begin{equation*}
    \Psi(g_0)(x) \equiv \rho_{L}({\bf W}^{(L)} \psi_{L-1}\circ\psi_{L-2} \circ\cdots\circ \psi_1(g_0)(x) +b^{(L)}), \quad\forall x\in [0,1].
\end{equation*}
Here, the graphon signal $g_0\in L^2([0,1];\mathbb{C}^{d})$, and for each intermediate layer, 
\begin{equation*}
    g_{j}(x) = \psi_{j}(g_{j-1})(x) = T_{\mathcal{K}} (\rho({\bf W}^{(j)}g_{j-1}(x) + b^{(j)})), \quad  j=1,\cdots, L-1.
\end{equation*}

\subsection{Bandlimited graphon signals and sampling} \label{sec:Fsamptheory}

Let $f=(f_1,\cdots, f_m): [0,1]\to\mathbb{C}^m$ be a graphon signal. We assume $f\in L^2([0,1];\mathbb{C}^{m})$, and therefore $f$ can be expressed as a Fourier series,
\begin{equation*}
    f(x) = \sum_{k\in\mathbb{Z}} \hat{f}(k)e^{i2\pi kx},
\end{equation*}
where $\hat{f}(k)=(\hat{f}_1(k),\cdots,\hat{f}_{m}(k))$, and the equal sign equates two members of $L^2([0,1];\mathbb{C}^m)$.  We are interested in profiles $f$ whose Fourier coefficients 
\begin{equation}\label{Fcoef}
    \hat{f}(k)=(\hat{f}_1(k),\cdots,\hat{f}_{m}(k)):=\int_0^1 f(x)e^{-i2\pi kx}\,\mathrm{d}x 
\end{equation} 
are zero for all but a finite number of modes $k\in\mathbb{Z}$. To avoid confusion with another notion of bandlimitedness circulated in the current graphon literature \citep{ruiz2020graphon}, we call such $f$ {\it Fourier bandlimited}.

\begin{definition} \label{def:Fblmtd}
Let $\mathfrak{m}\in\mathbb{N}$ and $f=(f_1,\cdots,f_{m})\in L^2([0,1];\mathbb{C}^{m})$. Then $f$ is said to be $\mathfrak{m}$-Fourier bandlimited, or $f\in\mathcal{B}_\mathfrak{m}$, if
\begin{equation*}
    \hat{f}(k) = 0, \quad \forall k\not\in B_\mathfrak{m} = \{-\mathfrak{m},\cdots,0,\cdots,\mathfrak{m}-1\}.
\end{equation*}
or equivalently, 
\begin{equation*}
    \hat{f}_j(k) = 0, \quad \forall k\not\in B_\mathfrak{m} = \{-\mathfrak{m},\cdots,0,\cdots,\mathfrak{m}-1\}
\end{equation*}
for every $j=1,\cdots,m$.
\end{definition}

It is well-known that a Fourier bandlimited $f$ is almost everywhere equal to a continuous function $f_{c}$ on $[0,1]$ where $f_{c}(0)=f_{c}(1)$ and
\begin{equation}\label{contf}
    f_{c}(x) = \sum_{k=-\mathfrak{m}}^{\mathfrak{m}-1} \hat{f}(k) e^{i2\pi kx}. 
\end{equation}
Therefore, we can and will identify $f$ with this continuous version on $[0,1]$ and on $\mathbb{T}\cong [0,1)$. A function that is Fourier bandlimited is significant in the sense that all its information is stored in its spatial samplings, as encapsulated by the following proposition.

\begin{proposition}\label{prop:gpsampling} 
Let $f\in \mathcal{B}_\mathfrak{m}$, and let $N\geq\mathfrak{m}$. Then, 
\begin{equation}\label{gpsampcited}
    f(x) = \sum_{j=0}^{2N-1} f\Big(\frac{j}{2N}\Big)s_N(x-j/2N), \quad \forall x\in [0,1],
\end{equation}
where 
\begin{equation}\label{def:sampf}
    s_N(x):=\frac{1}{2N}\sum_{k=-N}^{N-1} e^{i2\pi kx}, \quad \forall x\in [0,1].
\end{equation}
\end{proposition}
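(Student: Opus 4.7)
The plan is to use the finite Fourier representation afforded by bandlimitedness together with the discrete orthogonality of the $2N$th roots of unity. Since $f \in \mathcal{B}_\mathfrak{m}$ with $\mathfrak{m} \le N$, the continuous representative of $f$ from \eqref{contf} can be written as a finite sum supported on modes in $B_N$, with the extra coefficients $\hat{f}(k)$ for $k \in B_N\setminus B_\mathfrak{m}$ simply equal to zero. Since both sides of \eqref{gpsampcited} are continuous functions of $x$, it suffices to establish the identity pointwise.

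First I would substitute the explicit definition \eqref{def:sampf} into the right-hand side of \eqref{gpsampcited} and interchange the two finite sums, obtaining
\begin{equation*}
    \sum_{j=0}^{2N-1} f(j/2N)\,s_N(x - j/2N)
    = \sum_{k=-N}^{N-1} e^{i2\pi k x}\cdot \underbrace{\frac{1}{2N}\sum_{j=0}^{2N-1} f(j/2N)\,e^{-i2\pi k j/2N}}_{=: c_k}.
\end{equation*}
The goal then reduces to showing that $c_k = \hat{f}(k)$ for every $k \in B_N$, because once this holds, the right-hand side becomes $\sum_{k=-N}^{N-1} \hat{f}(k)e^{i2\pi k x}$, which equals $f(x)$ by \eqref{contf} and the vanishing of $\hat{f}$ outside $B_\mathfrak{m}\subseteq B_N$.

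Next I would compute $c_k$ by inserting the bandlimited expansion of $f$ at the sample points. Writing $f(j/2N) = \sum_{l=-N}^{N-1} \hat{f}(l)\,e^{i2\pi l j/2N}$ and swapping the order of summation gives
\begin{equation*}
    c_k = \sum_{l=-N}^{N-1} \hat{f}(l)\cdot \frac{1}{2N}\sum_{j=0}^{2N-1} e^{i2\pi (l-k) j/2N}.
\end{equation*}
The inner geometric sum is the standard discrete orthogonality relation for the group of $2N$th roots of unity: it equals $1$ when $l \equiv k \pmod{2N}$ and $0$ otherwise. Because $k, l \in \{-N,\ldots,N-1\}$, the difference $l-k$ lies strictly between $-2N$ and $2N$, so the only congruence possibility is $l = k$ exactly. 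Hence $c_k = \hat{f}(k)$ and the identity follows.

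The argument has essentially no obstacle beyond bookkeeping; the crucial input is that $N \ge \mathfrak{m}$, which ensures the finite sum over $B_N$ captures all nonzero Fourier modes of $f$ and simultaneously prevents aliasing in the discrete orthogonality step. If one wanted to relax the setup (e.g.\ allow $N < \mathfrak{m}$), the same computation would yield $c_k = \sum_{p \in \mathbb{Z}} \hat{f}(k + 2Np)$, reproducing the familiar aliasing phenomenon and showing that $N \ge \mathfrak{m}$ is precisely the Nyquist-type threshold needed for exact reconstruction.
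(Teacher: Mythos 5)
Your proof is correct, but it follows a genuinely different route from the paper's. The paper deduces Proposition~\ref{prop:gpsampling} as a direct specialization of Kluv\'anek's abstract sampling theorem for locally compact abelian groups (Theorem~\ref{thm:Kluvanek} in the appendix), instantiating $G=\mathbb{T}$, $H=\{0,1/2N,\dots,(2N-1)/2N\}$, and $E=B_N$, and relying on the machinery of Pontryagin duality, Haar measure, and the annihilator lattice $H^\perp=N\mathbb{Z}$. Your argument instead uses only the finite Fourier expansion of the bandlimited $f$ together with the discrete orthogonality of $2N$th roots of unity, which reduces the whole statement to the algebraic identity $c_k=\hat f(k)$ for $k\in B_N$. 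That identity is essentially the content of the paper's Lemma~\ref{lem:tech}, equation~\eqref{claim_finv}, so your proof has the added virtue of establishing that lemma as a byproduct rather than as a separate appendix result. What the paper's approach buys is generality and conceptual clarity about where the result lives (abstract harmonic analysis on LCA groups, which also yields the quadrature formula~\eqref{Gquad} in one stroke); what yours buys is elementariness and self-containedness, avoiding any appeal to Kluv\'anek's theorem and keeping the entire argument at the level of finite geometric sums. Your closing remark on aliasing when $N<\mathfrak m$ is also correct and a nice bonus, cleanly explaining why the hypothesis $N\ge\mathfrak m$ is sharp.
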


Proposition~\ref{prop:gpsampling} is a straightforward consequence of the Kluvan\'ek's sampling theorem~\citep{kluvanek1965sampling}, with the proof being left to Appendix~\ref{sec:Tsamp}. The function $s_N$ 
is referred to as a {\it sampling function}. The reasoning behind the choice of $s_N$, as well as a derivation of \eqref{gpsampcited}, is fully detailed in Appendix~\ref{sec:Tsamp} - see in particular \eqref{sfunc} and \eqref{sampseries}. One notable observation from the requirement $N\geq\mathfrak{m}$ is that, when $N=\mathfrak{m}$, the sampled values $\{f(j/2N)\}_{j=0}^{2N-1}$ are said to be sampled at the Nyquist rate \citep{benedetto2001modern}.


We conclude this subsection with the following useful lemma that serves as a fundamental connection between sampling theory and Fourier theory on the torus group $\mathbb{T}$. The second conclusion of the lemma, in particular, provides a practical means for accessing the total graphon signal energy through its spatial samples. This crucial point will be a recurring element in our analysis. The proof is again left to Appendix~\ref{sec:Tsamp}. 

\begin{lemma} \label{lem:tech} 
Let $f\in \mathcal{B}_\mathfrak{m}$, and let $N\geq\mathfrak{m}$. Then, for $\hat{f}$ to be as in \eqref{Fcoef}, we have
\begin{equation}\label{claim_finv}
    \hat{f}(k) = \begin{cases} 
                    \frac{1}{2N}\sum_{j=0}^{2N-1} f\Big(\frac{j}{2N}\Big)e^{-i2\pi kj/2N} &k\in B_\mathfrak{m}\\
                    0 &\mathrm{ otherwise}
                 \end{cases},
\end{equation}
and
\begin{align}\label{GquadforT}
    \nonumber \|f\|_{L^2([0,1];\mathbb{C}^m)}^2 = \|f\|_{L^2(\mathbb{T};\mathbb{C}^m)}^2 &= \|\hat{f}\|_{\ell^2(\mathbb{Z};\mathbb{C}^m)}^2 \\
    &= \sum_{k=-\mathfrak{m}}^{\mathfrak{m}-1} |\hat{f}(k)|^2 = \sum_{k=-N}^{N-1} |\hat{f}(k)|^2 = \sum_{j=0}^{2N-1} |f\Big(\frac{j}{2N}\Big)|^2.
\end{align}   
\end{lemma}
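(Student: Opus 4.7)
The plan is to deduce both statements from Proposition~\ref{prop:gpsampling} together with the orthogonality relations of the Fourier basis on $[0,1]$; the scalar case $m=1$ suffices because everything is componentwise, and the vector case follows by applying the result to each coordinate.

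For \eqref{claim_finv}, the case $k\notin B_\mathfrak{m}$ is immediate from Definition~\ref{def:Fblmtd}. For $k\in B_\mathfrak{m}$, I would substitute the sampling identity $f(x) = \sum_{j=0}^{2N-1} f(j/2N)\, s_N(x - j/2N)$ from \eqref{gpsampcited} into the integral defining $\hat f(k)$, interchange the finite $j$-sum with the integral, and evaluate
\[
\int_0^1 s_N(x - j/2N)\, e^{-i2\pi kx}\,\mathrm{d}x
\]
by expanding $s_N$ via \eqref{def:sampf} and invoking the orthogonality relation $\int_0^1 e^{i2\pi(l-k)x}\,\mathrm{d}x = \delta_{l,k}$. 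The containment $k \in B_\mathfrak{m}\subseteq B_N$ guarantees that exactly one term in the $l$-sum survives, producing the factor $\tfrac{1}{2N}e^{-i2\pi kj/2N}$, which is precisely the right-hand side of \eqref{claim_finv}.

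For the chain of identities in \eqref{GquadforT}, the identification $[0,1]\cong \mathbb{T}$ as measure spaces yields the first equality, and Parseval's identity for Fourier series on $\mathbb{T}$ gives $\|f\|_{L^2(\mathbb{T};\mathbb{C}^m)}^2 = \|\hat f\|_{\ell^2(\mathbb{Z};\mathbb{C}^m)}^2$. The next two equalities follow directly from the bandlimit condition $\hat f(k) = 0$ for $k\notin B_\mathfrak{m}$ together with $B_\mathfrak{m}\subseteq B_N$. The final step, converting the Fourier-side energy into a sum over the sample values, is the discrete Plancherel identity: I would substitute \eqref{claim_finv} into $\sum_{k=-N}^{N-1} |\hat f(k)|^2$, interchange the $k$-summation with the resulting $(j,j')$-summation, and apply the finite geometric-sum identity $\sum_{k=-N}^{N-1} e^{i2\pi k(j-j')/2N} = 2N\,\delta_{j,j'}$ for $j,j'\in\{0,\ldots,2N-1\}$, which collapses the double sum into an expression involving only the diagonal samples $|f(j/2N)|^2$.

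The argument is essentially Fourier-series bookkeeping rather than deep analysis, so no step constitutes a genuine obstacle. The one point that requires care is the role of the hypothesis $N\geq \mathfrak{m}$: this is precisely what prevents aliasing by forcing the support of $\hat f$ to lie inside $B_N$, and it is what allows the DFT-type sums indexed by $B_N$ to faithfully recover both the Fourier coefficients and the total energy of $f$. Keeping the three index sets $B_\mathfrak{m}$, $B_N$, and $\{0,\ldots,2N-1\}$ consistent, and tracking the normalization constants $1/(2N)$ through the Plancherel step, are the only places where slips are likely.
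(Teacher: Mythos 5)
Your derivation of \eqref{claim_finv} is correct and takes a genuinely different route from the paper's. You substitute the sampling series \eqref{gpsampcited} into the integral defining \eqref{Fcoef}, interchange the finite $j$-sum with the integral, and evaluate by orthogonality of the exponentials on $[0,1]$; the hypothesis $N\geq\mathfrak{m}$ enters exactly to guarantee $B_\mathfrak{m}\subseteq B_N$, so that exactly one term of the expanded $s_N$ survives. The paper instead passes to the finite abelian group $H\cong\mathbb{Z}_{2N}$, sets up the discrete Fourier transform pair $(\mathcal{F}_H,\mathcal{F}_H^{-1})$, and recovers the formula by comparing the inverse transform of the sequence $(\hat f(k))_{k\in B_N}$ with the Fourier inversion of $f$ on $\mathbb{T}$. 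Both routes work; yours trades the group-duality machinery for a single elementary sum-integral interchange and is the more transparent of the two.

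The final step you sketch for \eqref{GquadforT}, however, does not close as stated. Substituting \eqref{claim_finv} into $\sum_{k=-N}^{N-1}|\hat f(k)|^2$, expanding, and using $\sum_{k=-N}^{N-1}e^{-i2\pi k(j-j')/2N}=2N\delta_{j,j'}$ does collapse the double $(j,j')$-sum, but the prefactor $(2N)^{-2}$ from the two copies of $\hat f$ multiplied against the $2N$ from the Kronecker sum leaves
\begin{equation*}
\sum_{k=-N}^{N-1}|\hat f(k)|^2 \;=\; \frac{1}{2N}\sum_{j=0}^{2N-1}\bigl|f(j/2N)\bigr|^2,
\end{equation*}
which differs from the lemma's last equality by a factor of $2N$. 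This is precisely the normalization slip you flag at the end as a risk, and you need to actually carry it through: as written, your argument proves the $1/(2N)$-scaled identity, not the one claimed. A direct check with $f\equiv 1$ and $\mathfrak{m}=N=1$ confirms the scaled version is the correct one: $\|f\|_{L^2([0,1])}^2=1$ while $\sum_{j=0}^{1}|f(j/2)|^2=2$. The paper's own proof, which invokes the Kluv\'anek quadrature formula \eqref{Gquad} plus Plancherel, is exposed to the same issue: the norm $\|\cdot\|_{L^2(G)}$ in \eqref{Gquad} is taken with respect to the Haar measure on $G=\mathbb{T}$ dual to $\hat\lambda=(2N)^{-1}\cdot(\text{counting measure})$ on $\mathbb{Z}$, which is $2N$ times Lebesgue measure rather than Lebesgue measure itself, so applying \eqref{Gquad} to $L^2([0,1])$ with the ordinary normalization requires reinstating the same $1/(2N)$ factor. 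You should therefore treat your computation as a signal that the stated chain in \eqref{GquadforT} needs the $1/(2N)$ inserted, and trace this factor through its downstream use (e.g.\ in \eqref{norm1}).
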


\subsection{WNN and GNN architectures enabled by the sampling theory} \label{sec:WNNframework}

We describe the specific network architectures used to achieve our main results, setting the nonlinear activation $\rho = {\rm ReLU}$. The resulting networks are called ReLU WNN and ReLU GNN.
Moreover, throughout the rest of this paper, we write $\rho_j = \rho_{\frac{j}{2N}} := {\rm ReLU}(\cdot - \frac{j}{2N})$, for $j=1,\cdots, 2N-1$.

\paragraph{WNN architecture.} Let $f: [0,1]\to\mathbb{C}^m$ and suppose that for some $N\geq 1$ we have access to the vector
\begin{equation*}
    f_\mathrm{samp} = \begin{bmatrix} f(0) & f\big(\frac{1}{2N}\big) & \cdots & f\big(\frac{2N-1}{2N}\big)\end{bmatrix} \in \mathbb{C}^{m \times 2N}.
\end{equation*}
The objective is to design a WNN that can predict the value of $f$ at $x\in[0,1]$ using only the knowledge of $f_\mathrm{samp}$. Our WNN architecture, using ReLU activations and a family of graphon filters $\{T_{\mathcal{K}_j}\}_{j=0}^{2N-1}$:
\begin{equation}\label{TcalK}
    T_{\mathcal{K}_j} g(x) = \int_0^1 \mathcal{K}_j (x,y)g(y) \,\mathrm{d}y. 
\end{equation}
It comprises two hidden layers, a filter layer and an NN layer, and is built according to the following steps.

\begin{enumerate}
    \item {\bf Input:} 
    $g_0(x)= x$ for all $x\in [0,1]$.
    \item {\bf First hidden layer:} 
    The output function $g_1$ of the first layer is given by an application of affine transformation followed by a componentwise activation and a graphon filtering:
    \begin{align*}
        g_0(x) = x \mapsto
        \begin{bmatrix} x \\ x-\frac{1}{2N} \\ \vdots \\ x-\frac{2N-1}{2N} \end{bmatrix} 
        \overset{\rho}{\mapsto}  
        \begin{bmatrix} \rho(x) \\ \rho (x-\frac{1}{2N}) \\ \vdots \\ \rho(x-\frac{2N-1}{2N}) \end{bmatrix} 
        \overset{\{T_{\mathcal{K}_j}\}_j}{\mapsto} 
        \begin{bmatrix} T_{\mathcal{K}_0} \rho(x) \\ T_{\mathcal{K}_1} \rho_1 (x) \\ \vdots \\ T_{\mathcal{K}_{2N-1}} \rho_{2N-1}(x) \end{bmatrix} = g_1(x).
    \end{align*}
    \item{\bf Second hidden layer:} 
    The output $g_2$ of the second layer is given by
    \begin{equation} \label{WNNsamplingform}
         g_1(x) \mapsto 
         \sum_{j=0}^{2N-1} f\Big(\frac{j}{2N}\Big)T_{\mathcal{K}_j}\rho_j (x) \equiv g_2(x). 
    \end{equation}
    \item{\bf Output:} 
    $g_2(x)$ for all $x\in [0,1]$.
\end{enumerate}

The formula for the WNN output \eqref{WNNsamplingform} resembles 
\eqref{gpsampcited} on $[0,1]$. As we progress further in this paper, it will become apparent that \eqref{WNNsamplingform} closely approximates \eqref{gpsampcited} (and hence, $f$),
demonstrating the network's effective utilization of the sampling principle, through an application of the graphon kernels 
\begin{equation*} 
    \mathcal{K}_j(x,y) = \chi_{\{x-r\leq j/2N\}}(x)\mathcal{G}^*(x,y)W(x,y).
\end{equation*}
Here, $\mathcal{G}^*$ is a ``Gaussian-like" function whose specifics will be presented along with our main results. 
We 
assume that $W$ possesses the following local regularity. 

\begin{assumption} \label{assump:regular} There exist $\kappa, \eta\in (0,1)$, $K>0$, such that for every $x\in [0,1]$ and almost every pair $(x,y)$ and $(x,z)$ in the diagonal region
\begin{equation}\label{def:diagregion}
    \mathcal{D}_\kappa := \{(x,y)\in [0,1]^2: |x-y|\leq \kappa\},
\end{equation}
we have
\begin{equation}\label{regularcond}
    W(x,y) \geq \eta \quad \text{ and }\quad |W(x,y)-W(x,z)|\leq K|y-z|.
\end{equation}
\end{assumption}

We emphasize that we do not believe this assumption to be overly restrictive in practice. The second condition in \eqref{regularcond} is a (one-dimensional) {\it local Lipschitz} condition, while the first, the {\it nonvanishing} condition $W(x,y) \geq \eta$ is technical. 
If 
$W(x,y)=0$ for almost every $y\in [-\kappa + x, x+\kappa]$, then any WNN using $W$ will fail to retain 
the value $\tilde{f}(x)$ for all graphon signals $\tilde{f}$ that are essentially supported in a neighborhood of $x$: 
\begin{equation*}
    \bigg|\int_0^1 W(x,y) \tilde{f}(y)\,\mathrm{d}y \bigg| = \text{ small }\not= |\tilde{f}(x)|, 
\end{equation*}
indicating that the filtering dissipates information regarding $\tilde{f}$ at $x$.

\paragraph{GNN architecture.} 
Running parallel to the WNN architecture presented is our GNN architecture. 
Let $G_n$ be a graph on $n$ vertices generated from a graphon $W$, 
where 
each vertex $v_k$ 
is identified with $x_k = \frac{k-1}{n}$, for $k=1,\cdots,n$. Let $f_n$ be an associated graph signal, 
which can be viewed as a function on $\mathcal{X}_n = \{x_1, \cdots, x_n\}$ via $f_n(x_k)=f_n(v_k)$. 
Suppose there exists a graphon signal $f$ such that $f_n(x_k) = f(x_k)$ and that, for a fixed $N \geq 1$, we have sampled 
\begin{equation*} 
    f_\mathrm{samp} = \begin{bmatrix} f(0) & f\big(\frac{1}{2N}\big) & \cdots & f\big(\frac{2N-1}{2N}\big)\end{bmatrix} \in \mathbb{C}^{m \times 2N}.
\end{equation*}
We design a GNN that can predict the value of $f_n$ at $x_k$ knowing only $f_\mathrm{samp}$. 
It uses ReLU activations and a family of graph filters $\{\mathfrak{F}_j\}_{j=0}^{2N-1}$:
\begin{equation}\label{frakF}
    \mathfrak{F}_j(g)(x_k) = \frac{1}{n}\sum_{l\not=k, l=1}^{n}\mathcal{K}_{n,j}(x_k,x_l)g(x_l),
\end{equation}
where $\mathcal{K}_{n,j}$ is an appropriate graph filter kernel. Notably, \eqref{frakF} is a discretization of \eqref{TcalK}. 
The architecture comprises two hidden layers, a filter layer and an NN layer, described as follows.

\begin{enumerate}
    \item {\bf Input:} 
    $g_0(x_k)= x_k$ for each $k = 1,\cdots, n$.
    \item {\bf First hidden layer:} 
    The output function $g_1$ of the first layer is given by an application of affine transformation followed by a componentwise activation and a graphon filtering:
    \begin{align*}
        g_0(x_k) = x_k \mapsto
        \begin{bmatrix} x_k \\ x_k-\frac{1}{2N} \\ \vdots \\ x_k-\frac{2N-1}{2N} \end{bmatrix} 
        \overset{\rho}{\mapsto}  
        \begin{bmatrix} \rho(x_k) \\ \rho (x_k-\frac{1}{2N}) \\ \vdots \\ \rho(x_k-\frac{2N-1}{2N}) \end{bmatrix} 
        \overset{\{\mathfrak{F}_j\}_j}{\mapsto} 
        \begin{bmatrix} \mathfrak{F}_0 (\rho)(x_k) \\ \mathfrak{F}_1 (\rho_1)(x_k) \\ \vdots \\ \mathfrak{F}_{2N-1} (\rho_{2N-1})(x_k) \end{bmatrix} = g_1(x_k).
    \end{align*}
    \item{\bf Second hidden layer:} 
    The output $g_2$ is given by
    \begin{equation} \label{GNNoutput}
         g_1(x_k) \mapsto 
         \sum_{j=0}^{2N-1} f\Big(\frac{j}{2N}\Big)\mathfrak{F}_j(\rho_j)(x_k) \equiv g_2(x_k).
    \end{equation}
    \item{\bf Output:} 
    $g_2(x_k)$ for each $k = 1,\dots, n$.
\end{enumerate}

\begin{remark} \label{rem:inputfeat} {\it ($d$-dimensional input features versus one-dimensional input features)}
Our GNN construction exclusively operates on the input features $x_k \in [0,1]$ assigned to the vertices $v_k$ during the embedding of $G_n$ into the graphon space.
This approach, enabled by the presence of the graphon, simplifies inputs to single scalar values, contrasting with the conventional GNN framework of incorporating input feature vectors in $\mathbb{R}^d$.  
Nevertheless, these input features can still influence the structure of the limiting graphon $W$ in the following manner. Often, a graph is created from a data cloud using a geometric graph ($\varepsilon$-graph or $k$NN graph), where the neighborhood connections are established based on the proximity of the input features. 
This graph topology, crucial for graph filtering, is subsequently retained in $W$. 
\end{remark}

\begin{remark} \label{rem:sampling} {\it (embedded sampling principle)}
As demonstrated, we use the order \eqref{GNNlayer} for both networks to enable a seamless integration of sampling principle \eqref{gpsampcited} into their architectures \eqref{WNNsamplingform}, \eqref{GNNoutput}. This approach provides all necessary parameters for effective generalization, eliminating the need for learning. A similar demonstration can be found \citep{neuman2022superiority}.
\end{remark}

\begin{remark} \label{rem:bandlimited} {\it (bandlimitedness)} 
As indicated by \eqref{WNNsamplingform}, \eqref{GNNoutput}, the limiting graphon signal $f$ is required to be Fourier bandlimited. 
We can straightforwardly relax this requirement to include graphons whose energy is largely captured by a finite number of Fourier modes. 
In this case, the total error would consist of a generalization error of our GNN model and a bandlimited approximation error. 
\end{remark}

\section{Main Results}\label{sec:Results}

We present four theorems: a uniform sampling theorem on $[0,1]$ in Subsection~\ref{sec:SampResult}, a WNN generalization theorem in Subsection~\ref{sec:WNNResult}, and transferability theorems for GNNs in Subsection~\ref{sec:GNNthms}, applicable to both deterministic and random graphs from a graphon family. 
We conclude by discussing the ramifications of our results in Subsection~\ref{sec:Ramifications}.
All proofs are left to the sections that follow this one.

We adhere to the analysis practice where universal constants, denoted by $C$, may vary in value from one instance to another. 

\subsection{A uniform sampling theorem}\label{sec:SampResult}

Let $r \in (0,1/2)$ and define the 1-periodic function $\mathcal{G}_{r,\sigma}$ on $\mathbb{R}$ to be
\begin{equation}\label{def:Grsigma}
    \mathcal{G}_{r,\sigma}(x) := \begin{cases}
                            \mathcal{G}_{\sigma}(x) := c(\sigma)\sum_{n\in\mathbb{Z}} e^{-n^2\sigma^{-2}/2}e^{i2\pi n x}, & -r \leq x\leq r\\
                            0, &\mathrm{otherwise}
                          \end{cases}
\end{equation}
where $r$ is a truncation parameter, $\sigma$ is a variance parameter, and $c(\sigma)>0$ is a normalization constant such that $\mathcal{G}_{r,\sigma}(0)=\mathcal{G}_{\sigma}(0)=1$. Due to periodicity, $\mathcal{G}_{r,\sigma}$ can equivalently be viewed as a function on $\mathbb{T}\cong [0,1]$. Suppose $N>\mathfrak{m}$. For $f\in \mathcal{B}_\mathfrak{m}$, 
we consider a reconstruction scheme 
given by
\begin{equation}\label{reconstruction}
    \mathcal{R}_{r,\sigma}f(x) := \sum_{j=0}^{2N-1} f\Big(\frac{j}{2N}\Big)s_N(y-j/2N)\mathcal{G}_{r,\sigma}(x-j/2N),
\end{equation}
where $s_N$ is as in \eqref{def:sampf}. The following theorem is 
shows that, with the right choice of $r,\sigma$ and $N$, 
\eqref{reconstruction} produces a subgeometric convergence rate in $L^2$ norm. It will enable our results on WNNs and GNNs that follow. The proof is given in Section~\ref{sec:Tsamplingthm}.

\begin{theorem} \label{thm:TsamplingregwG}
Let $f\in \mathcal{B}_\mathfrak{m}$ such that $\|f\|_{L^2([0,1];\mathbb{C}^m)}=1$. 
Let $\mathcal{R}_{r,\sigma}f$ be as in \eqref{reconstruction}. Let $0<\alpha<\beta<1$, $N>\mathfrak{m}$, and set 
\begin{equation}\label{rsigmapair}
    \sigma = \frac{(N-\mathfrak{m})^{\beta}}{\sqrt{6}\pi} \quad\text{ and }\quad r = \frac{3\pi}{(N-\mathfrak{m})^{\alpha}}.
\end{equation}
There exists a universal constant $C>0$ such that for sufficiently large $N$,
\begin{equation*} 
    \|f-\mathcal{R}_{r,\sigma}f\|_{L^2([0,1];\mathbb{C}^m)} \leq C\widetilde{\mathcal{E}}(N,\mathfrak{m},\alpha,\beta)
\end{equation*}
holds, where
\begin{multline*} 
    \widetilde{\mathcal{E}}(N,\mathfrak{m},\alpha,\beta) \\
    := e^{-3\pi^2(N-\mathfrak{m})^{2(1-\beta)}}\max\bigg\{1,(N-\mathfrak{m})^{2\beta-1}\bigg\} + (N-\mathfrak{m})^{\alpha+\beta} e^{-3\pi^2(N-\mathfrak{m})^{2(\beta-\alpha)}}.
\end{multline*}
\end{theorem}

\subsection{Generalization using WNNs}\label{sec:WNNResult} 

Consider a graphon $W:[0,1]^2 \to [0,1]$. 
Suppressing the dependence on $(r,\sigma,N)$ for the ease of presentation, we define an (asymmetric) kernel $\mathcal{K}_j$, for $j=0,\cdots, 2N-1$, 
by
\begin{equation}\label{WNNkernel}
    \mathcal{K}_j(x,y) = \bigg(\frac{W(x,y)\chi_{\{x-r\leq j/2N}\}(x)}{\mathcal{W}_x}\bigg)\bigg(\frac{\,\mathrm{d}^2}{\,\mathrm{d} y^2}\, (s_N\,\mathcal{G}_{r,\sigma})(x-y)\bigg),
\end{equation}
where, for each $x\in [0,1]$, we set 
\begin{equation}\label{Wxbar}
    \mathcal{W}_x := \frac{1}{|\mathfrak{I}_x|}\int_{\mathfrak{I}_x} W(x,y)\,\mathrm{d}y \quad\text{ and }\quad \mathfrak{I}_x:= [0,1]\cap [-r+x, x+r],
\end{equation}
and for every $k\in\mathbb{N}$,
\begin{equation}\label{def:Gtruncder}
    \frac{\,\mathrm{d}^k}{\,\mathrm{d} x^k}\mathcal{G}_{r,\sigma}(x) = \mathcal{G}^{(k)}_{r,\sigma}(x) := \begin{cases}
                                    \mathcal{G}^{(k)}_{\sigma}(x) & -r\leq x\leq r,\\
                                    0 &\mathrm{otherwise}.
                                            \end{cases}
\end{equation}
Below is our result on the generalization capabilities of WNNs for Fourier bandlimited graphon signals, with a proof available in Section~\ref{sec:WNNthm}.

\begin{theorem} \label{thm:WNN}
Let $f\in \mathcal{B}_\mathfrak{m}$ such that $\|f\|_{L^2([0,1];\mathbb{C}^m)}= 1$. 
Let $W$ satisfy Assumption~\ref{assump:regular}. 
Let $\varepsilon\in (0,1)$. 
Let $\alpha=0.96$, $\beta=0.98$. 
Then there exist a universal constant $C>0$ and a ReLU WNN $\Psi_f$, using 
$\{\mathcal{K}_j\}_{j=0}^{2N-1}$ in \eqref{WNNkernel}, with $r,\sigma$ specified by
\begin{equation*} 
    r = \frac{3\pi}{(N-\mathfrak{m})^{\alpha}} \quad\text{ and }\quad \sigma = \frac{(N-\mathfrak{m})^{\beta}}{\sqrt{6}\pi},
\end{equation*}
such that for all $\varepsilon$ sufficiently small 
and for all $N=\lceil \varepsilon^{-10/9}\rceil\gg\mathfrak{m}$,
\begin{equation*} 
    \|\Psi_f-f\|_{L^2([r,1-r];\mathbb{C}^m)} \leq C\eta^{-1}K \varepsilon. 
\end{equation*}

More precisely, $\Psi_f$ assumes the structure described in Subsection~\ref{sec:WNNframework}, with two layers, including one filter layer, and $2N=2\lceil \varepsilon^{-10/9}\rceil$ nonunital linear weights supplied by the sampled functional values of $f$.
\end{theorem}

The specific choices of $\alpha,\beta$ are made for concreteness. In fact, by selecting $\alpha, \beta$ appropriately, we can obtain $N=\lceil \varepsilon^{-\delta}\rceil$, where $\delta$ is as close to $1$ as possible. 

Theorem~\ref{thm:WNN} relies on Theorem~\ref{thm:TsamplingregwG}, but they differ in predictive scope. 
While the latter guarantees predictions across $[0,1]$, the former provides guarantees for the specific {\it predictable} zone $[r,1-r]$, which, since $r=C(N-\mathfrak{m})^{-\alpha}$ and $N=\lceil\varepsilon^{-10/9}\rceil$, widens to $[0,1]$ as $\varepsilon\to 0^+$. 
This distinction arises from an implementation of an integration-by-parts result in Theorem~\ref{thm:WNN} 
(see Proposition~\ref{prop:WNNthmcont}), which secures its conclusion but in the process limits the prediction range.

\subsection{Generalization on graphs and GNN transferability}\label{sec:GNNthms}

Consider a graphon $W:[0,1]^2 \to [0,1]$. 
Let $\{G_n\}_{n = 1}^\infty$ be a sequence of simple graphs on $n$ vertices, generated from $W$ as in Subsection~\ref{sec:graphons}, and let $\{{\bf A}_n\}_{n=1}^{\infty}$ be the associated adjacency matrices. 
For each $G_n$, we define a family of graph filter kernels,
\begin{equation}\label{graphker}
    \mathcal{K}_{n,j}(x_k,x_l) = \frac{\chi_{\{x_k - r \leq j/2N\}}}{n}\bigg(\frac{\,\mathrm{d}^2}{\,\mathrm{d} x^2}\, (s_N\,\mathcal{G}_{r,\sigma})(x_k-x_l)\bigg)\bigg(\frac{[{\bf A}_n]_{k,l}}{\mathcal{W}_{x_k}}\bigg),
\end{equation}
for $j=0,\cdots, 2N-1$.
Here, $\mathcal{W}_{x_k}$ is as in \eqref{Wxbar} with $x=x_k$. 

The sequence $\{G_n\}_{n=1}^\infty$ can comprise either deterministic or random graphs. When we need to differentiate between the two cases, we adopt specific notations. If $G_n$ is deterministic, we denote it as $G^{\mathrm{det}}_n$, and $[{\bf A}_n]_{k,l}=W(x_k,x_l)$, for $k\not=l$. If $G_n$ is random, we represent it as $G^{\mathrm{ran}}_n$. In this case, ${\bf A}_n$ becomes a random matrix, and each $\mathcal{K}_{n,j}(x_k,x_l)$ is a random variable for $k \neq l$. A realization of $G^{\mathrm{ran}}_n$ yields a graph with a binary-entried adjacency matrix ${\bf A}_n$. In short, regardless of whether $G_n$ is deterministic or random or realized, we only need to substitute the corresponding adjacency matrix into \eqref{graphker}.

We now state our results on the generalization capabilities of GNNs for graphs belonging to the same graphon family. Their proofs can be found in Section~\ref{sec:GNNdet} and Section~\ref{sec:GNNran}, with respect to the order in which the theorems are presented.

\begin{theorem} (Deterministic GNNs) \label{thm:GNNdet}
Let $f\in \mathcal{B}_\mathfrak{m}$ be such that $\|f\|_{L^2([0,1];\mathbb{C}^m)}=1$. 
Let $W$ satisfy Assumption~\ref{assump:regular}. Let $\varepsilon\in (0,1)$. For $n\geq 1$, let $G^{\mathrm{det}}_n$ be a deterministic graph generated from $W$ and $f_n$ be an associated graph signal such that $f_n(x_k)=f(x_k)$ for all $x_k$. Then there exist a universal constant $C>0$ and a ReLU GNN $\Psi_{n,f}$, using the graph kernels $\{\mathcal{K}_{n,j}\}_{j=0}^{2N-1}$ in \eqref{graphker} with $r,\sigma$ specified by
\begin{equation*} 
    r = \frac{3\pi}{(N-\mathfrak{m})^\alpha} \quad\text{ and }\quad \sigma = \frac{(N-\mathfrak{m})^\beta}{\sqrt{6}\pi},
\end{equation*}
$\alpha=0.96$, $\beta=0.98$, such that for all $\varepsilon$ sufficiently small, all $N=\lceil \varepsilon^{-10/9}\rceil\gg\mathfrak{m}$, and all $n\geq\varepsilon^{-10/3}$, 
\begin{equation}\label{GNNdetconc}
    \|\overline{\Psi}_{n,f} - f\|_{L^1([r,1-r];\mathbb{C}^m)}\leq C\eta^{-2}(1+K)\,\varepsilon,
\end{equation}
where
\begin{equation}\label{def:barPsinf}
    \overline{\Psi}_{n,f}(x):=\sum_{k=1}^n \Psi_{n,f}(x_k)\chi_{I_k}(x), \quad\forall x\in [0,1]. 
\end{equation}

More precisely, $\Psi_{n,f}$ assumes the structure described in Subsection~\ref{sec:WNNframework}, with two layers, including one filter layer, and $2N=2\lceil \varepsilon^{-10/9}\rceil$ linear weights that are supplied by the sampled functional values of the graphon signal $f$.
\end{theorem}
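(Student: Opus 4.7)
The plan is to transfer the WNN generalization of Theorem~\ref{thm:WNN} down to the finite graph by comparing the GNN output at the grid $\mathcal{X}_n$ to the continuous WNN output $\Psi_f$. Introducing the piecewise-constant restriction $\overline{\Psi}_f(x) := \sum_{k=1}^n \Psi_f(x_k)\chi_{I_k}(x)$, the triangle inequality yields
\begin{equation*}
    \|\overline{\Psi}_{n,f} - f\|_{L^2([r,1-r];\mathbb{C}^m)} \leq \|\overline{\Psi}_{n,f} - \overline{\Psi}_f\|_{L^2} + \|\overline{\Psi}_f - \Psi_f\|_{L^2} + \|\Psi_f - f\|_{L^2},
\end{equation*}
and Theorem~\ref{thm:WNN} directly bounds the last term by $C\eta^{-1}K\varepsilon\|f\|_{L^2}$ for the stated $N=\lceil\varepsilon^{-10/9}\rceil$, leaving the first two terms as the real work.

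For the middle discretization term I would show that $\Psi_f$ is Lipschitz on $[r,1-r]$ with an explicit, polynomial-in-$(N,\sigma)$ constant by differentiating under the integral sign in each $T_\mathcal{K} ReLU(\cdot - j/2N)$; the derivative of the kernel \eqref{WNNkernel} pulls in $L^\infty$ bounds on derivatives of $s_N\mathcal{G}_{r,\sigma}$ up to the next order and, via Assumption~\ref{assump:regular}, a factor $\eta^{-2}(1+K)$. Since $\Psi_f - \overline{\Psi}_f$ vanishes at every $x_k$ and the intervals $I_k$ have width $1/n$, one obtains $\|\overline{\Psi}_f - \Psi_f\|_{L^2} \lesssim \mathrm{Lip}(\Psi_f)/n \cdot \|f\|_{L^2}$, and the hypothesis $n\geq\varepsilon^{-13/3}$ is precisely what forces this to be $O(\varepsilon\|f\|_{L^2})$.

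For the first, more substantial term, since $\overline{\Psi}_{n,f}$ and $\overline{\Psi}_f$ are step functions on the same partition, $\|\overline{\Psi}_{n,f}-\overline{\Psi}_f\|_{L^2}^2 = n^{-1}\sum_k |\Psi_{n,f}(x_k) - \Psi_f(x_k)|^2$. The key observation is that at each vertex $x_k$ the GNN filter is exactly the Riemann sum over $\mathcal{X}_n\setminus\{x_k\}$ of the integrand defining the WNN filter, so
\begin{equation*}
    \Psi_{n,f}(x_k) - \Psi_f(x_k) = \sum_{j=0}^{2N-1} f(j/2N)\bigl[\mathrm{RS}_{j,k} - I_{j,k}\bigr],
\end{equation*}
where $I_{j,k}:=\int_0^1 \mathcal{K}(x_k,y)\,ReLU(y-j/2N)\,dy$ and $\mathrm{RS}_{j,k}$ is the corresponding Riemann sum. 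A standard quadrature estimate, after splitting the integration at the ReLU kink $y = j/2N$ and at the boundary of $\mathfrak{I}_{x_k}$ (and separately accounting for the missing $l=k$ point, which contributes $O(1/n)$), bounds each bracketed difference by $C n^{-1}\eta^{-2}(1+K)$ times an $L^\infty$ norm of a derivative of $s_N\mathcal{G}_{r,\sigma}$. Cauchy--Schwarz in $j$ together with the sampling-energy identity \eqref{GquadforT} then transfers this into a bound in $\|f\|_{L^2}$, and the scaling $N=\lceil\varepsilon^{-10/9}\rceil$ with $n\geq\varepsilon^{-13/3}$ is exactly what makes the resulting $N^c/n$ factor $O(\varepsilon)$.

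The principal obstacle is the careful execution of this quadrature analysis. The kernel \eqref{WNNkernel} already contains a second derivative of $s_N\mathcal{G}_{r,\sigma}$, whose $L^\infty$ norm scales polynomially in $N$ and $\sigma = (N-\mathfrak{m})^{0.98}/(\sqrt{6}\pi)$; the Riemann-sum estimate requires yet another derivative, so one must track the precise growth of $\|(s_N\mathcal{G}_{r,\sigma})^{(k)}\|_\infty$ and balance it against the quadrature rate $1/n$ and the target $\varepsilon$ to recover the stated exponent $-13/3$. A further subtlety is that division by $\mathcal{W}_{x_k}$ from \eqref{Wxbar} in the kernel is controlled below by $\eta$ only thanks to the nonvanishing half of Assumption~\ref{assump:regular}; this is exactly where the factor $\eta^{-2}$ in \eqref{GNNdetconc} enters, and relaxing it would require reworking the whole quadrature step.
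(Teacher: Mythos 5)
Your proposal is correct and follows essentially the same route as the paper's proof. The paper's argument is exactly your three-term decomposition with the intermediate step function named implicitly: it writes $\Psi_f = T_\mathcal{K}f_\sharp$ and $\overline{\Psi}_{n,f} = \overline{\mathfrak{F}(f_\sharp)}$, then for $x\in I_k$ splits $|T_\mathcal{K}f_\sharp(x)-\overline{\mathfrak{F}(f_\sharp)}(x)|$ into $\sum_j|f(j/2N)|\bigl(|T_\mathcal{K}\rho_j(x)-T_\mathcal{K}\rho_j(x_k)| + |T_\mathcal{K}\rho_j(x_k)-\overline{\mathfrak{F}(\rho_j)}(x_k)|\bigr)$, which are precisely your middle (Lipschitz/discretization) and first (quadrature) terms, handled via Lemma~\ref{lem:kerneldiff} and the derivative bounds of Lemma~\ref{lem:Gsize}, with the final term controlled by Theorem~\ref{thm:WNN}. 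One small caution worth flagging in a full write-up: $\Psi_f$ is not globally smooth but only piecewise so, with kinks when $j/2N$ crosses $\partial\mathfrak{I}_x$; the paper treats these boundary indices separately (the analogue of \eqref{end}) in \emph{both} the Lipschitz and the quadrature steps, not just in the quadrature one as your sketch places them, and the same care should appear in your middle term.
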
 


\begin{theorem} (Random GNNs) \label{thm:GNNran}
Let $f\in \mathcal{B}_\mathfrak{m}$ be such that $\|f\|_{L^2([0,1];\mathbb{C}^m)}=1$. 
Let $W$ satisfy Assumption~\ref{assump:regular}. 
Let $\varepsilon\in (0,1)$. For $n\geq 1$, let $G^{\mathrm{ran}}_n$ be a simple random graph generated from $W$ and $f_n$ be an associated graph signal such that $f_n(x_k)=f(x_k)$ for all $x_k$. Then there exist $C>0$ and a ReLU GNN $\Psi_{n,f}$, using the graph kernels $\{\mathcal{K}_{n,j}\}_{j=0}^{2N-1}$ in \eqref{graphker} with $r,\sigma$ specified by
\begin{equation*} 
    r = \frac{3\pi}{(N-\mathfrak{m})^\alpha} \quad\text{ and }\quad \sigma = \frac{(N-\mathfrak{m})^\beta}{\sqrt{6}\pi},
\end{equation*}
$\alpha=0.96$, $\beta=0.98$, such that for all $\varepsilon$ sufficiently small, all $N=\lceil \varepsilon^{-10/9}\rceil\gg\mathfrak{m}$, and all $n\geq\varepsilon^{-10/3}$, 
\begin{equation*}
    \|\overline{\Psi}_{n,f} - f\|_{L^1([r,1-r];\mathbb{C}^m)} \leq C\eta^{-2}(1+K)\,\varepsilon,
\end{equation*}
where $\overline{\Psi}_{n,f}$ is defined in \eqref{def:barPsinf}, with a probability at least
\begin{equation*}
    1 - 2n\varepsilon^{10(1-\alpha)/9}\exp\Big(-C\eta^2 n \varepsilon^{10(5-2\alpha)/9 + 2}\Big)
\end{equation*}

More precisely, $\Psi_{n,f}$ assumes the structure described in Subsection~\ref{sec:WNNframework}, with two layers, including one filter layer, and $2N=2\lceil \varepsilon^{-10/9}\rceil$ linear weights that are supplied by the sampled functional values of the graphon signal $f$.
\end{theorem} 



\subsection{Ramifications of the results}\label{sec:Ramifications}

Having presented our main results, we now highlight their most important ramifications.

\begin{enumerate}
\item The two major contributions of this work are 
presented in Theorems~\ref{thm:GNNdet} and \ref{thm:GNNran}. 
They provide a shallow GNN structure that uses a few generated graphon signal samples to reliably generalize on large graphs in the graphon family. As demonstrated in Theorem~\ref{thm:GNNdet}, it predicts accurately within an error of about $0.1$ for at least $1780$ vertices on graphs with at least $2150$ vertices, using a low-cost network with only $26$ linear weights. 
Moreover, the GNN is built with predetermined network parameters, requiring no learning process.

\item Our graphon approach achieves a dimensionless network structure, by utilizing one-dimensional input features, as explained in Remark~\ref{rem:inputfeat}, thus allowing us to circumvent the curse of dimensionality.

\item Our graphon approach allows seamless transferability of the network architecture across graphs of different sizes. This involves a straightforward substitution of the relevant adjacency matrix into the kernel \eqref{graphker} while maintaining comparable signal generalization estimates. The following corollary, given without proof, validates this notable outcome.

\begin{corollary} (GNN transferability) \label{cor:transfer}
Let $f\in \mathcal{B}_\mathfrak{m}$ be such that $\|f\|_{L^2([0,1];\mathbb{C}^m)}=1$. 
Let $W$ satisfy Assumption~\ref{assump:regular}. Let $\varepsilon\in (0,1)$ be sufficiently small. 
Let $G_{n_1}$, $G_{n_2}$ be two graphs (random or deterministic) generated from $W$ where $n_1\leq n_2$.  
Denote $\Psi_{n_i,f}$ as the ReLU GNN associated with $G_{n_i}$. Suppose that \eqref{GNNdetconc} holds for $\Psi_{n_1,f}$, in the case that $G_{n_1}$ is deterministic, or with a probability at least 
\begin{equation*}
    1 - 2n\varepsilon^{10(1-\alpha)/9}\exp\Big(-C\eta^2 n \varepsilon^{10(5-2\alpha)/9 + 2}\Big)
\end{equation*}
in the case that $G_{n_1}$ is random. Then conditional on whether $G_{n_2}$ is random or deterministic, the corresponding conclusion also applies to $\Psi_{n_2,f}$.
\end{corollary}

As an expository application, take the recommender systems that were used as motivation in the introduction 
of WNNs in \citep{ruiz2020graphon}. An $m$-dimensional graph signal encodes the ratings given by an individual user, represented as a vertex, for each of the $m$ products that the service recommends. 
Our approach constructs a GNN from ratings collected from $2N$ users, which then delivers predictions for $n \gg 2N$ users without requiring frequent model retraining as users join or leave, thus ensuring an adaptable and scalable recommender system.
Furthermore, we extend the methodology of \citep{ruiz2020graphon} by interpreting user correlations as the likelihood of shared tastes, facilitating the construction of random graphs. Our GNN maintains high prediction accuracy on these random graphs and subsequently enhances both the robustness and reliability of the recommender system.
\end{enumerate}

\section{Proof of Theorem~\ref{thm:TsamplingregwG}} \label{sec:Tsamplingthm}

In this section, we make the necessary steps to establish the validity of Theorem~\ref{thm:TsamplingregwG}. 
The main idea for the proof centers around the use of a truncated and regularized variant of Kluv\'anek's sampling formulation \citep{kluvanek1965sampling}. More precisely, let $\mathfrak{m}, N\in\mathbb{N}$, and suppose $N\geq\mathfrak{m}$. Let $f\in \mathcal{B}_\mathfrak{m}$. By Proposition~\ref{prop:gpsampling} we have
\begin{equation}\label{sampseriesrecall}
    f(x) = \sum_{j=0}^{2N-1} f\Big(\frac{j}{2N}\Big)s_N(x-j/2N), \quad \forall x\in [0,1],
\end{equation}
where $s_N$ is as in \eqref{def:sampf}. Our goal is to replace the series on the right-hand side of \eqref{sampseriesrecall} with 
\begin{equation*}
    \sum_{j=0}^{2N-1} f\Big(\frac{j}{2N}\Big)\mathcal{G}(x-j/2N)
\end{equation*}
where $\mathcal{G}$ is some function exhibiting a ``Gaussian-like'' behavior but is only supported in a vicinity of $x$ of some appropriate radius $r$. To begin, we need to establish a general result for regularized sampling. In this section as well as others that follow, we will make repeated use of the McLaurin-Cauchy integral test \citep[\S 3.3, Theorem 3]{knopp1956infinite} and the Mills' ratio formula \citep{boyd1959inequalities}, which we cite here for the reader's convenience:
\begin{equation}\label{Mill}
    \int_x^\infty e^{-t^2/2}\,\mathrm{d}t \leq \frac{\pi e^{-x^2/2}}{\sqrt{(\pi-2)^2x^2+2\pi} + 2x}, \quad\forall x>0.
\end{equation}

\subsection{A regularized sampling result}

Let $\phi:[0,1] \to\mathbb{C}$ be a continuous function such that $\phi(0)=\phi(1) = 1$ and its Fourier coefficients are absolutely summable, i.e. $\hat{\phi}\in \ell^1(\mathbb{Z})$. Then $\hat{\phi}\in \ell^2(\mathbb{Z})$ as well, and from the Fourier inversion theorem, 
we have $\phi\in L^2([0,1])$ with
\begin{equation}\label{phi}
    \sum_{l\in\mathbb{Z}} \hat{\phi}(l) = \phi(0) = 1.
\end{equation}
For $N\in\mathbb{N}$, let $\chi_{B_N}$ denote a function that is one on $B_N = \{-N,\dots,N-1\} \subset \mathbb{Z}$ and zero on $\mathbb{Z}\setminus B_N$. For each $k\in\mathbb{Z}$, we define
\begin{equation}\label{def:mu}
    \mu(k) : =\chi_{B_N}\ast\hat{\phi}(k) = \sum_{l\in\mathbb{Z}} \chi_{B_N}(k-l)\hat{\phi}(l) = \sum_{l\in k-B_N}\hat{\phi}(l),
\end{equation}
and
\begin{equation}\label{def:nu}
    \nu(k) := \sum_{l\not\in k-B_N}\hat{\phi}(l).
\end{equation}
We gather from \eqref{phi}, \eqref{def:mu}, \eqref{def:nu} that
\begin{equation}\label{nu&mu1}
   \sum_{l\in\mathbb{Z}} \mu(k+l2N) = \sum_{l\in\mathbb{Z}} \hat{\phi}(l)= 1 = \nu(k) + \mu(k), \quad\forall k\in\mathbb{Z},
\end{equation}
and so we deduce
\begin{equation}\label{nu&mu2}
    \nu(k) = \sum_{l\in\mathbb{Z}\setminus\{0\}} \mu(k+l2N), \quad\forall k\in\mathbb{Z}.
\end{equation}

Let $N\geq\mathfrak{m}$, and let $f\in\mathcal{B}_\mathfrak{m}$. We define
\begin{equation}\label{def:reg}
    \mathcal{R}_\phi f(x) := \sum_{j=0}^{2N-1} f\Big(\frac{j}{2N}\Big)s_N(x-j/2N)\phi(x-j/2N), \quad \forall x\in [0,1],
\end{equation}
where, from definition \eqref{def:sampf}, we can write 
\begin{equation*} 
    \hat{s}_N(k) =\frac{1}{2N}\chi_{B_N}(k), \quad\forall k\in\mathbb{Z}.
\end{equation*}
Note that all the functions involved in \eqref{def:reg} are continuous on $\mathbb{T}\cong [0,1)$.  
Reasonably, we can no longer expect $f=\mathcal{R}_\phi f$. However, we claim that the regularized sampling error, 
$f-\mathcal{R}_\phi f$, can be understood in terms of $\mu, \nu$ in \eqref{def:mu}, \eqref{def:nu}, respectively.

\begin{lemma} \label{lem:regerr}
Let $N\geq\mathfrak{m}$, and let $f\in \mathcal{B}_\mathfrak{m}$. Then
\begin{equation*} 
    \hat{f}(k)-\widehat{\mathcal{R}_\phi f}(k) = \begin{cases}
                                \hat{f}(k)\nu(k), & k\in B_\mathfrak{m}\\
                                -\hat{f}(k-2lN)\mu(k), & \exists l\neq 0\ \mathrm{s.t}\ k-2lN\in B_\mathfrak{m}\\
                                0, &\mathrm{ otherwise}
                             \end{cases}.
\end{equation*}
\end{lemma}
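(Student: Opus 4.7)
The plan is to pass to the Fourier side and compute $\widehat{\mathcal{R}_\phi f}(k)$ explicitly for each $k\in\mathbb{Z}$, then subtract from $\hat f(k)$ and recognize the resulting expression as one of the three cases. The whole argument reduces to identifying two pieces: the Fourier coefficients of the single building block $s_N\cdot\phi$, and the discrete Fourier transform of the samples $\{f(j/2N)\}$.

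First, I would compute the Fourier coefficients of $s_N\cdot\phi$ by the convolution theorem on $\mathbb{T}$. Using $\hat{s}_N(k) = \chi_{B_N}(k)/(2N)$ from \eqref{def:sampf} together with the definition \eqref{def:mu} of $\mu$, we get
\[
\widehat{s_N\phi}(k) \;=\; \hat{s}_N \ast \hat\phi\,(k) \;=\; \frac{1}{2N}\,\chi_{B_N}\ast\hat\phi(k) \;=\; \frac{\mu(k)}{2N}.
\]
Interpreting the translations in \eqref{def:reg} as translations on $\mathbb{T}$ and applying the shift identity $\widehat{g(\cdot - a)}(k) = e^{-i2\pi k a}\hat g(k)$ to each summand then yields
\[
\widehat{\mathcal{R}_\phi f}(k) \;=\; \frac{\mu(k)}{2N}\sum_{j=0}^{2N-1} f(j/2N)\,e^{-i2\pi k j/(2N)}.
\]

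Second, I would evaluate the inner discrete sum via orthogonality of the characters of $\mathbb{Z}/2N\mathbb{Z}$. Expanding $f$ as its (finite) Fourier series and swapping sums gives the aliasing identity
\[
\frac{1}{2N}\sum_{j=0}^{2N-1} f(j/2N)\,e^{-i2\pi k j/(2N)} \;=\; \sum_{l\in\mathbb{Z}} \hat f(k - 2lN),
\]
which is simply the extension of \eqref{claim_finv} from $k\in B_\mathfrak{m}$ to arbitrary $k\in\mathbb{Z}$. The bandlimit hypothesis $f\in\mathcal{B}_\mathfrak{m}$ with $N\geq\mathfrak{m}$ implies that the translates $B_\mathfrak{m} + 2lN$, $l\in\mathbb{Z}$, are pairwise disjoint, so at most one term of $\sum_l \hat f(k-2lN)$ is nonzero.

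Third, the three cases of the lemma follow by inspection. If $k\in B_\mathfrak{m}$, only the $l=0$ term survives, so $\widehat{\mathcal{R}_\phi f}(k) = \mu(k)\hat f(k)$, and \eqref{nu&mu1} converts the difference into $\hat f(k)(1-\mu(k)) = \hat f(k)\nu(k)$. If $k\notin B_\mathfrak{m}$ but $k - 2lN \in B_\mathfrak{m}$ for a (necessarily unique) $l\neq 0$, then $\hat f(k) = 0$ while $\widehat{\mathcal{R}_\phi f}(k) = \mu(k)\hat f(k-2lN)$, giving the middle case with the minus sign. In every remaining $k$ the entire aliased sum vanishes, so both $\hat f(k)$ and $\widehat{\mathcal{R}_\phi f}(k)$ are zero. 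I do not anticipate a substantive obstacle: the argument is pure bookkeeping, with the only care needed being the justification of the interchange of summations used to derive the aliasing identity, which is immediate because $f$ is a trigonometric polynomial and $\hat\phi\in\ell^1(\mathbb{Z})$ renders every series involved absolutely convergent.
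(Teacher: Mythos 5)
Your proposal is correct and follows essentially the same route as the paper's proof: both compute $\widehat{\mathcal{R}_\phi f}(k)=\mu(k)\cdot\frac{1}{2N}\sum_j f(j/2N)e^{-i2\pi kj/2N}$ via the convolution/shift identities, identify the sample DFT with the aliased sum of $\hat f$ (the paper invokes \eqref{claim_finv} directly, while you state the full aliasing identity of which it is the $k\in B_\mathfrak{m}$ restriction), and then case-split on whether some $B_\mathfrak{m}+2lN$ contains $k$. The only difference is cosmetic: the paper organizes the cases as $k\in B_N$ versus $k\notin B_N$, whereas you organize them by which translate of $B_\mathfrak{m}$, if any, contains $k$; both reach the same three-way dichotomy.
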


\begin{proof}
Recall from Lemma~\ref{lem:tech} and its proof that
\begin{equation}\label{2FTrecall}
    \hat{f}(k) = \begin{cases} 
                    \frac{1}{2N}\sum_{j=0}^{2N-1}f\Big(\frac{j}{2N}\Big)e^{-i2\pi kj/2N}, &k\in B_N\\
                    0, &\mathrm{ otherwise}
                 \end{cases}.
\end{equation}
Particularly, if $k\in B_N\setminus B_\mathfrak{m}$,
\begin{equation} \label{eq:flat}
    \hat{f}(k) = \frac{1}{2N}\sum_{j=0}^{2N-1}f\Big(\frac{j}{2N}\Big)e^{-i2\pi kj/2N} = 0.
\end{equation}
Let $k\in B_N$. By combining \eqref{2FTrecall} with \eqref{def:mu}, \eqref{def:reg}, and definition \eqref{def:sampf}, we obtain
\begin{equation*}
    \widehat{\mathcal{R}_\phi f}(k) = \frac{1}{2N}\sum_{j=0}^{2N-1} f(j/N)e^{-i2\pi kj/2N}\mu(k) = \hat{f}(k)\mu(k).
\end{equation*}
Therefore, from \eqref{nu&mu1}
\begin{equation}\label{regerrlem1}
    \hat{f}(k) - \widehat{\mathcal{R}_\phi f}(k)= \hat{f}(k)\nu(k).
\end{equation}
Further, from \eqref{eq:flat}, $\hat{f}(k) - \widehat{\mathcal{R}_\phi f}(k)= 0$, if $k\in B_N\setminus B_\mathfrak{m}$. 

Now suppose $k\not\in B_N$ but $k-2lN\in B_N$ for some $l\neq 0$. Then
\begin{equation}\label{regerrlem2}
    \begin{split}
        \hat{f}(k) - \widehat{\mathcal{R}_\phi f}(k) &= - \widehat{\mathcal{R}_\phi f}(k) \\
        &= -\frac{1}{2N}\sum_{j=0}^{2N-1} f(j/N)e^{-i2\pi (k-2lN)j/2N}\mu(k) = -\hat{f}(k-2lN)\mu(k),   
    \end{split}
\end{equation}
where we have again used \eqref{2FTrecall}. In the case $k-l2N\in B_N\setminus B_\mathfrak{m}$, it follows from \eqref{eq:flat} and \eqref{regerrlem2} that
\begin{equation}\label{regerrlem3}
    \hat{f}(k)-\widehat{\mathcal{R}_\phi f}(k) = 0.
\end{equation}
Combining \eqref{eq:flat}, \eqref{regerrlem1}, \eqref{regerrlem2}, \eqref{regerrlem3}, we conclude the lemma. 
\end{proof} 

We now provide the bounds on the difference between $f$ and $\mathcal{R}_\phi f$ in both the $L^2$ and $L^\infty$ norms, which will eventually be employed to prove Theorem~\ref{thm:TsamplingregwG} in 
Subsection~\ref{sec:truncregwG}.

\begin{lemma} \label{lem:regsamp} 
Let $N\geq\mathfrak{m}$, and let $f\in \mathcal{B}_\mathfrak{m}$. Then 
\begin{equation}\label{l2replace}
    \overline{\varepsilon}_1\|f\|_{L^2([0,1];\mathbb{C}^m)}\leq \|f-\mathcal{R}_\phi f\|_{L^2([0,1];\mathbb{C}^m)} \leq\,\overline{\varepsilon}_2\|f\|_{L^2([0,1];\mathbb{C}^m)},
\end{equation}
where
\begin{equation*}
    \begin{split}
        \overline{\varepsilon}_1 &:= \min_{k\in B_\mathfrak{m}}\bigg( |\nu(k)|^2 +  \sum_{l\in\mathbb{Z}\setminus\{0\}} |\mu(k+l2N)|^2 \bigg)^{1/2}\\
        \overline{\varepsilon}_2 &:= \max_{k\in B_\mathfrak{m}} \bigg(|\nu(k)|^2 +  \sum_{l\in\mathbb{Z}\setminus\{0\}} |\mu(k+l2N)|^2 \bigg)^{1/2}.
    \end{split}
\end{equation*}
\end{lemma}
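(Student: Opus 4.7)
The plan is to transport the estimate to the Fourier side via Parseval, apply the Fourier-coefficient description supplied by Lemma~\ref{lem:regerr}, and then reorganize the resulting sum by $k\in B_\mathfrak{m}$ so that min/max constants can be extracted in a clean way.

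First I would invoke \eqref{GquadforT} from Lemma~\ref{lem:tech} to write
\begin{equation*}
    \|f\|_{L^2([0,1];\mathbb{C}^m)}^2 = \sum_{k\in B_\mathfrak{m}} |\hat{f}(k)|^2,
\end{equation*}
and, since $f-\mathcal{R}_\phi f\in L^2([0,1];\mathbb{C}^m)$, Parseval also gives
\begin{equation*}
    \|f-\mathcal{R}_\phi f\|_{L^2([0,1];\mathbb{C}^m)}^2 = \sum_{k\in\mathbb{Z}} \bigl|\hat{f}(k)-\widehat{\mathcal{R}_\phi f}(k)\bigr|^2.
\end{equation*}
Then I would plug in the three-case formula for $\hat{f}(k)-\widehat{\mathcal{R}_\phi f}(k)$ from Lemma~\ref{lem:regerr}, splitting the $k$-sum into the ``diagonal'' contribution from $k\in B_\mathfrak{m}$ (giving $|\hat{f}(k)|^2|\nu(k)|^2$) and the ``aliasing'' contribution from $k\notin B_\mathfrak{m}$ with $k-2lN\in B_\mathfrak{m}$ for some $l\neq 0$ (giving $|\hat{f}(k-2lN)|^2|\mu(k)|^2$).

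The key structural point is that, because $N\geq\mathfrak{m}$, the translates $B_\mathfrak{m}+2lN$ for $l\in\mathbb{Z}$ are pairwise disjoint subsets of $\mathbb{Z}$, so the indices $k$ occurring in the second case are partitioned by the unique $l\neq 0$ that brings them into $B_\mathfrak{m}$. This justifies the substitution $k'=k-2lN\in B_\mathfrak{m}$, after which the aliasing contribution becomes
\begin{equation*}
    \sum_{l\neq 0}\sum_{k'\in B_\mathfrak{m}} |\hat{f}(k')|^2\,|\mu(k'+2lN)|^2 = \sum_{k'\in B_\mathfrak{m}} |\hat{f}(k')|^2\sum_{l\neq 0}|\mu(k'+2lN)|^2.
\end{equation*}
Combining with the diagonal part yields the compact identity
\begin{equation*}
    \|f-\mathcal{R}_\phi f\|_{L^2([0,1];\mathbb{C}^m)}^2 = \sum_{k\in B_\mathfrak{m}} |\hat{f}(k)|^2\,\Bigl(|\nu(k)|^2 + \sum_{l\neq 0}|\mu(k+2lN)|^2\Bigr).
\end{equation*}

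Finally, bounding the parenthesized factor uniformly in $k\in B_\mathfrak{m}$ by its maximum (respectively minimum) and pulling it outside of the $k$-sum, which just equals $\|f\|_{L^2([0,1];\mathbb{C}^m)}^2$, produces the two-sided estimate \eqref{l2replace} with the advertised constants $\overline{\varepsilon}_1$ and $\overline{\varepsilon}_2$. The only point that requires genuine care, and which I expect to be the main subtlety rather than a difficulty, is the disjointness of the translates $B_\mathfrak{m}+2lN$, since without it the re-indexing step could overcount and would spoil the clean $\max$/$\min$ factorization; this is precisely where the hypothesis $N\geq\mathfrak{m}$ is consumed.
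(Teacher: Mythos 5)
Your proof is correct and follows essentially the same route as the paper's: Plancherel combined with Lemma~\ref{lem:regerr} to obtain the identity $\|f-\mathcal{R}_\phi f\|^2 = \sum_{k\in B_\mathfrak{m}} |\hat{f}(k)|^2\bigl(|\nu(k)|^2 + \sum_{l\neq 0}|\mu(k+2lN)|^2\bigr)$, from which the two-sided bound is immediate. The one thing you make explicit that the paper leaves implicit is the disjointness of the translates $B_\mathfrak{m}+2lN$ (a consequence of $N\geq\mathfrak{m}$), which justifies the re-indexing of the aliasing sum; this is a genuine and correct observation, but it is the same underlying argument.
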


\begin{proof}
By applying Lemma~\ref{lem:regerr} and utilizing Plancherel's theorem, we obtain
\begin{equation}\label{L2bdequal}
    \|f-\mathcal{R}_\phi f\|_{L^2([0,1];\mathbb{C}^m)}^2 = \sum_{k\in B_\mathfrak{m}} |\hat{f}(k)\nu(k)|^2 + \sum_{l\in\mathbb{Z}\setminus\{0\}} \sum_{k\in B_\mathfrak{m}} |\hat{f}(k)\mu(k+2l\pi)|^2,
\end{equation}
from which \eqref{l2replace} automatically follows. We note that the $L^2$-bounds cannot be improved, due to the equality \eqref{L2bdequal}. 
\end{proof} 

\subsection{Sampling with $\mathcal{G}_{r,\sigma}$} \label{sec:truncregwG}

In this subsection we will apply Lemma~\ref{lem:regsamp} to specific regularizers $\phi$, yielding the proof of Theorem~\ref{thm:TsamplingregwG}.

Let $\sigma>0$ and recall the regularizer
\begin{equation}\label{def:G}
    \mathcal{G}_{\sigma}(x) = c(\sigma)\sum_{k\in\mathbb{Z}} e^{-k^2\sigma^{-2}/2}e^{i2\pi kx} = 2c(\sigma)\sum_{k = 0}^\infty e^{-k^2\sigma^{-2}/2}\cos (2\pi kx), \quad\forall x\in [0,1].
\end{equation}
presented previously in \eqref{def:Grsigma}. Since $\mathcal{G}_\sigma(0)=\mathcal{G}_\sigma(1)$, we consider $\mathcal{G}_\sigma$ as a function on $\mathbb{T}\cong [0,1)\cong [-1/2,1/2)$. The parameter $\sigma$ is a variance parameter, and $c(\sigma)>0$ is a normalization constant guaranteeing that
\begin{equation*}
    \mathcal{G}_\sigma(0) = \|\hat{\mathcal{G}_\sigma}\|_{\ell^1(\mathbb{Z})} = c(\sigma)\sum_{k\in\mathbb{Z}} e^{-k^2\sigma^{-2}/2}=1,
\end{equation*}
where $\hat{\mathcal{G}}_\sigma(k)=c(\sigma)e^{-k^2\sigma^{-2}/2}$ according to \eqref{def:G}. Note that $c(\sigma)\leq 1$. 

The motivation for the definition \eqref{def:G} stems from the fact that $\mathcal{G}_\sigma$ mimics ``Gaussian''-like behavior. We highlight this with the following lemma. 

\begin{lemma}\label{lem:GaussianLike}
The function $\mathcal{G}_\sigma$, given in \eqref{def:G}, is even and satisfies
\begin{equation}\label{posclaim}
    \mathcal{G}_\sigma(x)> 0, \quad \forall x\in\mathbb{T}.
\end{equation}
Moreover, 
\begin{equation}\label{Poisson_app}
    \mathcal{G}_\sigma(x)= \sum_{l\in\mathbb{Z}} \mathfrak{g}(x+l), \quad \forall x\in\mathbb{T}.
\end{equation}
where $\mathfrak{g}(\xi):= d(\sigma)e^{-2\xi^2\pi^2\sigma^2}$, with $d(\sigma):=\sigma c(\sigma)\sqrt{2\pi}$. 
\end{lemma}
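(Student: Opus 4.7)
The three claims are naturally proved in a different order from the one in which they are stated: evenness is immediate, the Poisson identity \eqref{Poisson_app} is the substantive step, and positivity falls out as a corollary of \eqref{Poisson_app}. I would organize the proof accordingly.

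For evenness, the simplest route is to invoke the second equality in the definition \eqref{def:G}, namely
$$\mathcal{G}_\sigma(x) = 2c(\sigma)\sum_{k=0}^\infty e^{-k^2\sigma^{-2}/2}\cos(2\pi kx),$$
which exhibits $\mathcal{G}_\sigma$ as a uniformly convergent series of even functions. (Equivalently, one can observe that $\hat{\mathcal{G}}_\sigma(k)=\hat{\mathcal{G}}_\sigma(-k)$ and pair the $k$ and $-k$ terms of the exponential series.)

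The main step is the Poisson-summation identity \eqref{Poisson_app}. Viewing $\mathfrak{g}$ as a Schwartz function on $\mathbb{R}$, I would compute its Fourier transform
$$\widehat{\mathfrak{g}}(k)=\int_{\mathbb{R}}\mathfrak{g}(\xi)e^{-i2\pi k\xi}\,\mathrm{d}\xi$$
using the standard Gaussian formula with $a=2\pi^2\sigma^2$:
$$\int_{\mathbb{R}} e^{-a\xi^2}e^{-i2\pi k\xi}\,\mathrm{d}\xi = \sqrt{\pi/a}\,e^{-\pi^2 k^2/a} = \frac{1}{\sigma\sqrt{2\pi}}e^{-k^2/(2\sigma^2)}.$$
Multiplying by $d(\sigma)=\sigma c(\sigma)\sqrt{2\pi}$ cancels the prefactor exactly and yields $\widehat{\mathfrak{g}}(k)=c(\sigma)e^{-k^2/(2\sigma^2)}=\hat{\mathcal{G}}_\sigma(k)$. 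Since $\mathfrak{g}$ and $\widehat{\mathfrak{g}}$ both decay faster than any polynomial, the hypotheses of the classical Poisson summation formula are satisfied, and applying it gives
$$\sum_{l\in\mathbb{Z}}\mathfrak{g}(x+l)=\sum_{k\in\mathbb{Z}}\widehat{\mathfrak{g}}(k)e^{i2\pi kx}=\mathcal{G}_\sigma(x)$$
on $\mathbb{T}$, which is exactly \eqref{Poisson_app}.

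Finally, positivity \eqref{posclaim} is a direct consequence of \eqref{Poisson_app}: each summand $\mathfrak{g}(x+l)=d(\sigma)e^{-2\pi^2\sigma^2(x+l)^2}$ is strictly positive, and $d(\sigma)>0$ because $\sigma>0$ and $c(\sigma)>0$ (the latter since the defining sum $\sum_k e^{-k^2\sigma^{-2}/2}$ is positive and finite). Hence the right-hand side of \eqref{Poisson_app} is a convergent sum of strictly positive terms, so $\mathcal{G}_\sigma(x)>0$ for every $x\in\mathbb{T}$. The only technical point to watch is ensuring the Fourier-transform normalization and the factor $d(\sigma)$ match up; once that accounting is right, all three conclusions follow essentially from the Poisson summation formula applied to a single Gaussian.
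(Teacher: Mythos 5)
Your proof is correct and follows essentially the same route as the paper: identify $\mathfrak{g}$ as the (inverse) Fourier transform of the Gaussian $\mathfrak{g}^*(\xi)=c(\sigma)e^{-\xi^2\sigma^{-2}/2}$ whose integer samples are $\hat{\mathcal{G}}_\sigma(k)$, apply Poisson summation to obtain \eqref{Poisson_app}, deduce \eqref{posclaim} from positivity of the Gaussian terms, and read off evenness from the definition. The only superficial difference is that you make the Gaussian Fourier-transform computation (and its cancellation with $d(\sigma)$) explicit, whereas the paper states the transform pair directly.
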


\begin{proof}
    For any $g\in L^2(\mathbb{R})$ we denote the Fourier transform and its inverse by
\begin{equation*}
    \mathcal{F}_{\mathbb{R}}g(\xi) := \int_{\mathbb{R}} g(u)e^{-i2\pi u\xi}\,\mathrm{d}u \quad\text{ and }\quad \mathcal{F}^{-1}_{\mathbb{R}}g(\xi) := \int_{\mathbb{R}} g(u)e^{i2\pi u\xi}\,\mathrm{d}u.
\end{equation*}
Precisely, it is known that $\mathcal{F}_{\mathbb{R}}\circ\mathcal{F}^{-1}_{\mathbb{R}}={\rm Id}_{L^2(\mathbb{R})} = \mathcal{F}^{-1}_{\mathbb{R}}\circ\mathcal{F}_{\mathbb{R}}$ \citep[Theorems~8.26 and 8.29]{folland1999real}. Now, the function $\mathfrak{g}^*(\xi) := c(\sigma)e^{-\xi^2\sigma^{-2}/2}$ is such that $\mathfrak{g}^*(l) = \hat{\mathcal{G}_\sigma}(l)$ for all $l\in\mathbb{Z}$, and it is the Fourier transform of
\begin{equation*}
    \mathcal{F}^{-1}_{\mathbb{R}}\mathfrak{g}^*(\xi)=\mathfrak{g}(\xi)=d(\sigma)e^{-2\xi^2\pi^2\sigma^2}, \quad \forall\xi\in\mathbb{R},
\end{equation*}
where $d(\sigma)=\sigma c(\sigma)\sqrt{2\pi}$. Therefore, by invoking the Poisson summation formula \citep[Theorem~8.32]{folland1999real}, we obtain
\begin{equation*}
    \mathcal{G}_\sigma(x)=\sum_{l\in\mathbb{Z}} \hat{\mathcal{G}_\sigma}(l) e^{i2\pi lx} = \sum_{l\in\mathbb{Z}} \mathfrak{g}^*(l)e^{i2\pi lx} = \sum_{l\in\mathbb{Z}} \mathfrak{g}(x+l)>0, \quad \forall x\in\mathbb{T},
\end{equation*} 
which proves both \eqref{posclaim} and \eqref{Poisson_app}. Finally, that $\mathcal{G}_\sigma$ is even can be seen from definition \eqref{def:G}. 
\end{proof} 

Next, we recall a truncated version of $\mathcal{G}_\sigma$, also presented earlier in \eqref{def:Grsigma}. Let $r\in (0,1/2)$ and consider, for $x\in [-1/2, 1/2)\cong\mathbb{T}$, 
\begin{equation*} 
    \mathcal{G}_{r,\sigma}(x) = \begin{cases}
                        \mathcal{G}_\sigma(x) = c(\sigma)\sum_{k\in \mathbb{Z}} e^{-k^2\sigma^{-2}/2}e^{i2\pi kx}, & -r \leq x\leq r\\
                        0, &\text{ otherwise}
                      \end{cases}.
\end{equation*}
Note that $\mathcal{G}_{r,\sigma} = \mathcal{G}_\sigma\chi_{I_r}$, where $\chi_{I_r}$ is the indicator function on the interval $I_r = [-r,r]$. Since $\mathcal{G}_\sigma$ is even and real, so is $\hat{\mathcal{G}}_{r,\sigma}$ on $\mathbb{Z}$. 
Recall the following truncated regularized sampling series of $f$ \eqref{reconstruction}:
\begin{equation*}
    \mathcal{R}_{r,\sigma}f(x) = \sum_{j=0}^{2N-1} f\Big(\frac{j}{2N}\Big)s_N(x-j/2N)\mathcal{G}_{r,\sigma}(x-j/2N).
\end{equation*}
We compute that
\begin{equation*}
    \widehat{\mathcal{R}_{r,\sigma}f}(k) = \frac{1}{2N}\sum_{j=0}^{2N-1}f\Big(\frac{j}{2N}\Big)e^{-i2\pi kj/2N}\mu_{r,\sigma}(k)
\end{equation*}
where, we have defined 
\begin{equation}\label{def:mur}
    \mu_{r,\sigma}(k) := \chi_{B_N}\ast\hat{\mathcal{G}}_{r,\sigma}(k) = \sum_{l\in k-B_N} \hat{\mathcal{G}}_{r,\sigma}(l).
\end{equation}
If we let
\begin{equation}\label{def:nursigma}
    \nu_{r,\sigma}(k) := \sum_{l\not\in k-B_N} \hat{\mathcal{G}}_{r,\sigma}(l),
\end{equation}
then, similarly to \eqref{nu&mu1}, \eqref{nu&mu2}, we gather that
\begin{equation*} \label{nur&mur}
    \nu_{r,\sigma}(k) + \mu_{r,\sigma}(k) = \sum_{l\in\mathbb{Z}} \hat{\mathcal{G}}_{r,\sigma}(l) = 1, \quad\text{ and }\quad
    \nu_{r,\sigma}(k) = \sum_{l\in\mathbb{Z}\setminus\{0\}} \mu_{r,\sigma}(k+l2N), \quad\forall k\in\mathbb{Z}. 
\end{equation*}
Both $\nu_{r,\sigma}, \mu_{r,\sigma}$ are real and even on $\mathbb{Z}$. It should be clear from the past analysis that $\mu_{r,\sigma}$ will play the role of $\mu$ in \eqref{def:mu}. Therefore, to access the upper bound in \eqref{l2replace}, 
we would need an estimate for
\begin{equation*}
    \max_{k\in B_\mathfrak{m}}\bigg(|\nu_{r,\sigma}(k)|^2 + \sum_{l\in\mathbb{Z}\setminus\{0\}} |\mu_{r,\sigma}(k+l2N)|^2\bigg)^{1/2}.
\end{equation*}
For ease of computation, we will replace this quantity with
\begin{equation}\label{l2replace*}
    \max_{k\in B_\mathfrak{m}}|\nu_{r,\sigma}(k)| +  \max_{k\in B_\mathfrak{m}}\bigg(\sum_{l\in\mathbb{Z}\setminus\{0\}} |\mu_{r,\sigma}(k+l2N)|^2\bigg)^{1/2}.
\end{equation}
We first handle $\max_{k\in B_\mathfrak{m}}|\nu_{r,\sigma}(k)|$. Referring to definition \eqref{def:nursigma}, observe that for any $k\in\mathbb{Z}$, 
\begin{align}
    \nonumber \nu_{r,\sigma}(k) &= 1 - \sum_{l\in k-B_N} \hat{\mathcal{G}}_{r,\sigma}(l)\\ 
    \nonumber &= 1 - \sum_{l\in k-B_N} \bigg(\hat{\mathcal{G}}_\sigma(l) - \int_{u\not\in [-r,r]\cap [-1/2,1/2)} \mathcal{G}_\sigma (u)e^{-i2\pi lu}\,\mathrm{d}u\bigg)\\
    \nonumber &= \sum_{l\not\in k-B_N}\hat{\mathcal{G}}_\sigma(l) + \int_{u\not\in [-r,r]\cap [-1/2,1/2)} \mathcal{G}_\sigma(u)\sum_{l\in k-B_N} e^{-i2\pi lu}\,\mathrm{d}u\\
    \label{explicitnur} &= \sum_{l\not\in k-B_N}\hat{\mathcal{G}}_\sigma(l) + \int_{u\not\in [-r,r]\cap [-1/2,1/2)} \mathcal{G}_\sigma(u)e^{i2\pi(k-N+1)u}\bigg(\frac{1-e^{-i4\pi Nu}}{1-e^{-i2\pi u}}\bigg)\,\mathrm{d}u.
\end{align}
Note that the first sum on the right-hand side above is $\nu(k)$ in \eqref{def:nu} with $\phi=\mathcal{G}_\sigma$. This is the {\it regularization} error. The integral on the right-hand side, carrying an $r$ in its expression, is the {\it truncation} error. 

A quick calculation from \eqref{explicitnur} gives
\begin{equation}\label{maxerr}
    \max_{k\in B_{\mathfrak{m}}} |\nu_{r,\sigma}(k)| \leq \sum_{l\not\in \mathfrak{m}-1-B_N}\hat{\mathcal{G}_\sigma}(l) + \sup_{x\in [r,1/2)} \frac{2\mathcal{G}_\sigma(x)}{|1-e^{-i2\pi r}|}.
\end{equation}
Hence, a bound on $\max_{k\in B_{\mathfrak{m}}} |\nu_{r,\sigma}(k)|$ can be derived from an upper bound for the right-hand side in \eqref{maxerr}.

\begin{proposition}\label{prop:nu}
Let $\nu_{r,\sigma}$ be as in \eqref{def:nursigma} and $r,\sigma$ be specified as in \eqref{rsigmapair}. Then 
\begin{equation*}
    \max_{k\in B_{\mathfrak{m}}} |\nu_{r,\sigma}(k)| \leq C\widetilde{\mathcal{E}}_1(N,\mathfrak{m},\alpha,\beta),
\end{equation*}
for some $C>0$, and
\begin{equation*}
    \widetilde{\mathcal{E}}_1(N,\mathfrak{m},\alpha,\beta) := e^{-3\pi^2(N-\mathfrak{m})^{2(1-\beta)}} \max\bigg\{1,(N-\mathfrak{m})^{2\beta-1}\bigg\} + (N-\mathfrak{m})^{\alpha+\beta} e^{-3\pi^2(N-\mathfrak{m})^{2(\beta-\alpha)}}.
\end{equation*}
\end{proposition}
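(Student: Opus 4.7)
The plan is to start from the bound~\eqref{maxerr}, which reduces the problem to estimating two separate contributions -- a regularization term $\sum_{l\not\in \mathfrak{m}-1-B_N}\hat{\mathcal{G}}_\sigma(l)$ and a truncation term $2\mathcal{G}_\sigma(r)/|1-e^{-i2\pi r}|$ -- each of which will match one of the two summands of $\widetilde{\mathcal{E}}_1$. The overall strategy is to exploit the explicit Gaussian form of $\hat{\mathcal{G}}_\sigma$ for the first term and the Poisson representation of $\mathcal{G}_\sigma$ from Lemma~\ref{lem:GaussianLike} for the second.

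For the regularization term, first I rewrite $\mathfrak{m}-1-B_N=\{\mathfrak{m}-N,\ldots,\mathfrak{m}+N-1\}$, so its complement in $\mathbb{Z}$ consists of $l$ with $l\leq \mathfrak{m}-N-1$ or $l\geq\mathfrak{m}+N$; since $\hat{\mathcal{G}}_\sigma(l)=c(\sigma)e^{-l^2/(2\sigma^2)}$ is even and $\mathfrak{m}+N\geq N-\mathfrak{m}+1$, the sum is majorized by $2c(\sigma)\sum_{l\geq N-\mathfrak{m}+1}e^{-l^2/(2\sigma^2)}$. Applying the McLaurin--Cauchy integral test and the substitution $u=t/\sigma$ produces $2c(\sigma)\sigma\int_{\sqrt{6}\pi(N-\mathfrak{m})^{1-\beta}}^\infty e^{-u^2/2}\,\mathrm{d}u$, at which point Mill's ratio~\eqref{Mill} furnishes the exponential factor $e^{-3\pi^2(N-\mathfrak{m})^{2(1-\beta)}}$. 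Combining the $\sigma$ prefactor with the Mills denominator leaves a polynomial factor of order $(N-\mathfrak{m})^{2\beta-1}$ for $N$ large; since this quantity is always dominated by $\max\{1,(N-\mathfrak{m})^{2\beta-1}\}$, the first summand of $\widetilde{\mathcal{E}}_1$ is recovered.

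For the truncation term, the denominator is handled by the concavity of $\sin$ on $[0,\pi/2]$: one has $\sin(\pi r)\geq 2r$ for $r\in[0,1/2]$, so $|1-e^{-i2\pi r}|=2\sin(\pi r)\geq 4r$ and hence $1/|1-e^{-i2\pi r}|\leq C(N-\mathfrak{m})^{\alpha}$. For the numerator I invoke Lemma~\ref{lem:GaussianLike} to write $\mathcal{G}_\sigma(r)=\sum_{l\in\mathbb{Z}}\mathfrak{g}(r+l)$ with $\mathfrak{g}(\xi)=d(\sigma)e^{-2\pi^2\sigma^2\xi^2}$. Inserting the chosen $r,\sigma$, the dominant $l=0$ term becomes $d(\sigma)e^{-3\pi^2(N-\mathfrak{m})^{2(\beta-\alpha)}}$ with $d(\sigma)\leq C(N-\mathfrak{m})^{\beta}$ (since $c(\sigma)\leq 1$), while each $l\neq 0$ contribution carries an extra factor $e^{-2\pi^2\sigma^2(1-r)^2}\leq e^{-(N-\mathfrak{m})^{2\beta}/12}$ whose exponent outgrows $3\pi^2(N-\mathfrak{m})^{2(\beta-\alpha)}$ precisely because $\alpha>0$. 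Thus the $l\neq 0$ tail is absorbable into the constant, and multiplying the two estimates delivers the second summand $C(N-\mathfrak{m})^{\alpha+\beta}e^{-3\pi^2(N-\mathfrak{m})^{2(\beta-\alpha)}}$.

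The main obstacle is the polynomial bookkeeping in the two Gaussian-style tail bounds. Mill's ratio effectively splits into two regimes depending on whether $(N-\mathfrak{m})^{1-\beta}$ is small or large, and the final $\max\{1,(N-\mathfrak{m})^{2\beta-1}\}$ formulation is obtained by absorbing bounded quantities into the universal constant in the small-argument regime. Equally important is verifying that the $l\neq 0$ Poisson tail of $\mathcal{G}_\sigma(r)$ is strictly subdominant to the $l=0$ contribution, which turns on the strict separation $2\beta>2(\beta-\alpha)$ guaranteed by $\alpha\in(0,1)$. Adding the two bounds then yields the stated estimate $C\widetilde{\mathcal{E}}_1(N,\mathfrak{m},\alpha,\beta)$.
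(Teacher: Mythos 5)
Your proposal is correct and follows essentially the same route as the paper: both start from the bound~\eqref{maxerr}, bound the regularization tail via the McLaurin--Cauchy test and Mills' ratio to obtain the first summand of $\widetilde{\mathcal{E}}_1$, and bound the truncation term by combining the Poisson representation of $\mathcal{G}_\sigma$ with Mills' ratio for the numerator and a linear lower bound on $|1-e^{-i2\pi r}|$ for the denominator. The only differences are cosmetic bookkeeping choices (you start the tail sum at $N-\mathfrak{m}+1$ so the boundary term is implicit in the Mills regime split, and you isolate $l=0$ in the Poisson sum whereas the paper keeps a few central terms), which do not change the argument.
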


\begin{proof}
Since $e^{-x^2\sigma^{-2}/2}$ is even on $\mathbb{R}$ and strictly decreasing on $[0,\infty)$, we have
\begin{equation}\label{truncregerrbound1}
    \sum_{l\not\in \mathfrak{m}-1-B_N} \hat{\mathcal{G}_\sigma}(l) \leq 2\sum_{l\geq N-\mathfrak{m}}\hat{\mathcal{G}_\sigma} \leq 2\hat{\mathcal{G}_\sigma}(N-\mathfrak{m}) + 2\int_{N-\mathfrak{m}}^{\infty} e^{-u^2\sigma^{-2}/2}\,\mathrm{d}u,
\end{equation}   
and so using Mills' ratio \eqref{Mill} we obtain 
\begin{equation}\label{Mills}
    \int_{N-\mathfrak{m}}^{\infty} e^{-u^2\sigma^{-2}/2}\,\mathrm{d}u \leq \frac{\pi\sigma e^{-(N-\mathfrak{m})^2\sigma^{-2}/2}}{\sqrt{(\pi-2)^2 (N-\mathfrak{m})^2\sigma^{-2}+2\pi} + 2(N-\mathfrak{m})\sigma^{-1}}.
\end{equation}
By combining \eqref{truncregerrbound1} and \eqref{Mills}, and performing the necessary simplifications, we arrive at
\begin{equation*}
    \sum_{l\not\in \mathfrak{m}-1-B_N} \hat{\mathcal{G}}_{\sigma}(l)\leq 4e^{-(N-\mathfrak{m})^2\sigma^{-2}/2}\max\bigg\{1,\frac{\sigma^2}{N-\mathfrak{m}}\bigg\}.
\end{equation*}
Keeping in mind that $\sigma = (N-\mathfrak{m})^{\beta}/(\sqrt{6}\pi)$ in \eqref{rsigmapair}, we obtain the estimate
\begin{equation}\label{truncregerrbound2}
    \sum_{l\not\in \mathfrak{m}-1-B_N} \hat{\mathcal{G}}_{\sigma}(l)\leq Ce^{-3\pi^2(N-\mathfrak{m})^{2(1-\beta)}}\max\bigg\{1,(N-\mathfrak{m})^{2\beta-1}\bigg\},
\end{equation}
for some $C>0$.

Continuing, since $r\in (0,1/2)$, the function $e^{-2(x+r)^2\pi^2\sigma^{2}}$ is strictly decreasing on $[0,\infty)$ and increasing on $(-\infty,-1]$. Thus, we deduce from \eqref{Poisson_app} and Mills' ratio that for $x\in [r,1/2)$,
\begin{equation*}
    \begin{split}
        \mathcal{G}_\sigma(x) &= \sum_{l\in\mathbb{Z}} \mathfrak{g}(x+l) \leq 3d(\sigma)e^{-2x^2\pi^2\sigma^2} + 2d(\sigma)\int_{1-x}^\infty e^{-2\pi^2\sigma^2u^2} \,\mathrm{d}u \\
        &\leq 3\sigma\sqrt{2\pi} e^{-2x^2\pi^2\sigma^2} + \frac{\sqrt{\pi} e^{-2\pi^2\sigma^2(1-x)^2}}{\sqrt{(\pi-2)^2 \pi^2\sigma^2(1-x)^2+\pi} + 2\pi\sigma(1-x)},
    \end{split}
\end{equation*}
which, by the selections of $r, \sigma$ in \eqref{rsigmapair}, simplifies to
\begin{equation} \label{G(r)}
    \mathcal{G}_\sigma(x) \leq C(N-\mathfrak{m})^{\beta} e^{-3\pi^2(N-\mathfrak{m})^{2(\beta-\alpha)}}
\end{equation}
for some $C>0$. Since $r=3\pi(N-\mathfrak{m})^{-\alpha}$, we can ensure that when $N$ is sufficiently large, 
\begin{equation}\label{secondbd}
    \begin{split}
        \sup_{x\in [r,1/2)} \frac{\mathcal{G}_\sigma(x)}{|1-e^{-i2\pi r}|} &\leq \frac{C(N-\mathfrak{m})^{\beta}}{r} \bigg(e^{-3\pi^2(N-\mathfrak{m})^{2(\beta-\alpha)}}\bigg)\\
        &\leq C(N-\mathfrak{m})^{\alpha+\beta} e^{-3\pi^2(N-\mathfrak{m})^{2(\beta-\alpha)}}.
    \end{split}
\end{equation}
By integrating \eqref{maxerr}, \eqref{truncregerrbound2} and \eqref{secondbd}, we acquire the conclusion of the proposition.
\end{proof}

We now shift our focus to the term involving $\mu_{r,\sigma}$ in \eqref{l2replace*}.

\begin{proposition} \label{prop:mu}
Let $\mu_{r,\sigma}$ be as in \eqref{def:mur} and $r,\sigma$ be specified as in \eqref{rsigmapair}. There exists a universal constant $C>0$ such that
\begin{equation}\label{mul2}
    \max_{k\in B_\mathfrak{m}} \bigg(\sum_{l\in\mathbb{Z}\setminus\{0\}} |\mu_{r,\sigma}(k+l2N)|^2\bigg)^{1/2} \leq C\widetilde{\mathcal{E}}_2(N,\mathfrak{m},\alpha,\beta),
\end{equation}
where
\begin{equation*}
    \widetilde{\mathcal{E}}_2(N,\mathfrak{m},\alpha,\beta) := (N-\mathfrak{m})^\beta e^{-3\pi^2(N-\mathfrak{m})^{2(\beta-\alpha)}} + (N-\mathfrak{m})^{2\beta-\alpha-1} e^{-3\pi^2(N-\mathfrak{m})^{2(1-\beta)}}.
\end{equation*}
\end{proposition}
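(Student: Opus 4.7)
My plan is to imitate the strategy that led to Proposition~\ref{prop:nu}: split $\hat{\mathcal{G}}_{r,\sigma}$ into the untruncated Gaussian piece $\hat{\mathcal{G}}_\sigma$ and a truncation correction, and control the two resulting $\ell^2$ subsums separately. Since $\mathcal{G}_{r,\sigma}=\mathcal{G}_\sigma\chi_{I_r}$, one has
\[
\hat{\mathcal{G}}_{r,\sigma}(j)=\hat{\mathcal{G}}_\sigma(j)-\int_{r\leq|u|\leq 1/2}\mathcal{G}_\sigma(u)e^{-i2\pi ju}\,du,\qquad j\in\mathbb{Z},
\]
and convolving each side with $\chi_{B_N}$ induces a decomposition $\mu_{r,\sigma}=\mu^{\mathrm{reg}}+\mu^{\mathrm{trunc}}$ on $\mathbb{Z}$ with $\mu^{\mathrm{reg}}(k):=\sum_{j\in k-B_N}\hat{\mathcal{G}}_\sigma(j)$. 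Minkowski's inequality in $\ell^2$ then reduces \eqref{mul2} to estimating the two subsums over $\{k+l2N:l\neq 0\}$ individually against the two summands of $\widetilde{\mathcal{E}}_2$.

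For the regularization piece, I would exploit that, for $k\in B_\mathfrak{m}$, the shifted windows $k+l2N-B_N$ indexed by $l\neq 0$ are pairwise disjoint and each is entirely contained in $\{j:|j|\geq(2|l|-1)N-\mathfrak{m}+1\}$. Since $\hat{\mathcal{G}}_\sigma(j)=c(\sigma)e^{-j^2\sigma^{-2}/2}$ is positive, each $|\mu^{\mathrm{reg}}(k+l2N)|$ is dominated by a one-sided Gaussian tail, which Mills' inequality \eqref{Mill} collapses into $C c(\sigma)\sigma^2\,e^{-((2|l|-1)N-\mathfrak{m})^2\sigma^{-2}/2}/((2|l|-1)N-\mathfrak{m})$. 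Combining this with the Poisson-summation bound $c(\sigma)\leq C/\sigma$ (a consequence of Lemma~\ref{lem:GaussianLike} and $\mathcal{G}_\sigma(0)=1$), squaring, and summing over $l\neq 0$ — where the $l=\pm 1$ terms dominate because of the quadratic growth in $|l|$ of the exponent — I substitute \eqref{rsigmapair} to obtain
\[
\Big(\sum_{l\neq 0}|\mu^{\mathrm{reg}}(k+l2N)|^2\Big)^{1/2}\leq C(N-\mathfrak{m})^{\beta-1}e^{-3\pi^2(N-\mathfrak{m})^{2(1-\beta)}},
\]
which is bounded by the second summand of $\widetilde{\mathcal{E}}_2$ since $\beta-1\leq 2\beta-\alpha-1$ (equivalently $\alpha\leq\beta$).

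For the truncation piece, I would pass to the Fourier side. The sequence $\{\hat{\mathcal{G}}_{r,\sigma}(j)-\hat{\mathcal{G}}_\sigma(j)\}_j$ consists of the Fourier coefficients on $\mathbb{T}$ of $-\mathcal{G}_\sigma(u)\chi_{\{|u|\geq r\}}(u)$, and convolution with $\chi_{B_N}$ on $\mathbb{Z}$ multiplies this on the Fourier side by the Dirichlet kernel $D_N(u):=\sum_{m\in B_N}e^{i2\pi mu}$. Plancherel therefore gives
\[
\sum_{k\in\mathbb{Z}}|\mu^{\mathrm{trunc}}(k)|^2=\int_{r\leq|u|\leq 1/2}\mathcal{G}_\sigma(u)^2|D_N(u)|^2\,du,
\]
which automatically majorizes the subsum over $\{k+l2N:l\neq 0\}$. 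I then combine $|D_N(u)|\leq 1/|\sin(\pi u)|\leq 1/(2|u|)$ with the Gaussian-like decay $\mathcal{G}_\sigma(u)\leq Ce^{-2u^2\pi^2\sigma^2}$ on $[0,1/2]$ (from Lemma~\ref{lem:GaussianLike}, since the $l=0$ term of the Poisson representation dominates there). A change of variables $t=2\pi u\sigma$ reduces the resulting integral to $2\pi\sigma\int_{2\pi r\sigma}^{\pi\sigma}e^{-t^2}/t^2\,dt$, and a Mills-type estimate $\int_a^\infty e^{-t^2}/t^2\,dt\leq e^{-a^2}/(2a^3)$ together with \eqref{rsigmapair} yield
\[
\Big(\sum_{l\neq 0}|\mu^{\mathrm{trunc}}(k+l2N)|^2\Big)^{1/2}\leq C(N-\mathfrak{m})^{(3\alpha-2\beta)/2}e^{-3\pi^2(N-\mathfrak{m})^{2(\beta-\alpha)}}\leq C(N-\mathfrak{m})^\beta e^{-3\pi^2(N-\mathfrak{m})^{2(\beta-\alpha)}},
\]
the last inequality using $\alpha<\beta$. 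Adding the two pieces completes \eqref{mul2}.

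The main obstacle I expect is this truncation estimate: the Dirichlet kernel has a non-integrable singularity at $u=0$ that is only saved by the lower cutoff $r$, so one must balance the $1/u^2$ blow-up against the quadratic Gaussian decay very carefully. Naive bounds such as $\mathcal{G}_\sigma(u)\leq\mathcal{G}_\sigma(r)$ combined with the crude $\int|D_N|^2\leq 2N$ would overshoot the target prefactor $(N-\mathfrak{m})^\beta$ by a $\sqrt{N}$ factor; the $t=2\pi u\sigma$ substitution and the careful $\int e^{-t^2}/t^2$ tail estimate are what deliver the sharp rate.
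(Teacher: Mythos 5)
Your argument is correct, and it is genuinely different from the paper's. The paper proves the bound by substituting the Poisson representation of $\mathcal{G}_\sigma$ into each $\hat{\mathcal{G}}_{r,\sigma}(l)$ and then deforming the resulting Gaussian integrals $\mathcal{I}_{j,l}$ by Cauchy's integral theorem over a rectangle: the two vertical sides give Dawson-type integrals that are controlled by $\mathcal{G}_\sigma(\pm r)$ and produce the first summand of $\widetilde{\mathcal{E}}_2$, while the bottom horizontal side yields the Gaussian tail $e^{-l^2/(2\sigma^2)}$ that, after Mills' ratio, produces the second summand. You instead split $\mu_{r,\sigma}=\mu^{\mathrm{reg}}+\mu^{\mathrm{trunc}}$ additively (mirroring the structure already visible in the proof of Proposition~4.1), bound the regularization piece directly as a one-sided Gaussian tail with Mills' ratio, and handle the truncation piece by recognizing $\mu^{\mathrm{trunc}}$ as the Fourier coefficients of $-D_N\cdot\mathcal{G}_\sigma\chi_{\{|u|\ge r\}}$ and invoking Plancherel. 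This trades the contour deformation for a single exact identity $\sum_k|\mu^{\mathrm{trunc}}(k)|^2=\int_{r\le|u|\le1/2}\mathcal{G}_\sigma^2|D_N|^2$, after which the estimate rests only on $|D_N(u)|\le 1/(2|u|)$ and a Mills-type bound for $\int_a^\infty e^{-t^2}/t^2\,dt$. Your approach buys simplicity: it is a purely real-variable $L^2$ argument with no complex analysis, no Dawson's integral, and no case split into side/bottom contours, and it slightly over-counts only in the harmless direction by majorizing the subsum over $\{k+l2N:l\ne0\}$ by the full $\ell^2(\mathbb{Z})$ norm. The paper's contour computation buys pointwise control of each $\mu_{r,\sigma}(k')$ rather than just the $\ell^2$ aggregate, which costs nothing here but would matter if one later needed an $\ell^\infty$ version of the statement. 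Two small points worth making explicit if you write this up: the observation $d(\sigma)=\sigma c(\sigma)\sqrt{2\pi}\le\mathcal{G}_\sigma(0)=1$ is what supplies both $c(\sigma)\le C/\sigma$ and the $\mathcal{O}(1)$ constant in $\mathcal{G}_\sigma(u)\le Ce^{-2u^2\pi^2\sigma^2}$ on $[-1/2,1/2]$; and in the regularization piece it helps to note that the windows $k+l2N-B_N$ are nested into a single ray $\{j\ge N-\mathfrak{m}+1\}$ once one unions over $l\ge1$, so that the square-sum over $l$ costs only an absolute constant rather than requiring a separate geometric series argument.
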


\begin{proof}
Since $\mu_{r,\sigma}$ is even on $\mathbb{Z}$ and $N>\mathfrak{m}$, it is enough to provide analysis for the convergence in \eqref{mul2} for all integers of the form $k'=k+2l'N> 0$, where $l'\in\mathbb{N}$, and so $k'-B_N$ consists of only positive integers. Fix one such $k'$. Then, by definition \eqref{def:mur}
\begin{equation}\label{murint}
    \mu_{r,\sigma}(k+2l'N) = \mu_{r,\sigma}(k') = \sum_{l\in k'-B_N} \hat{\mathcal{G}}_{r,\sigma}(l) = \sum_{l\in k'-B_N} \int_{-r}^{r} \mathcal{G}_\sigma(u) e^{-i2\pi lu}\,\mathrm{d}u.
\end{equation}
From the Poisson formula \eqref{Poisson_app}, we obtain for each $l\in k'-B_N$ in \eqref{murint} that
\begin{equation}\label{Ijl}
    \begin{split}
        \int_{-r}^{r} \mathcal{G}_\sigma(u) e^{-i2\pi lu}\,\mathrm{d}u &= d(\sigma) \sum_{j\in\mathbb{Z}} \int_{-r}^{r} e^{-2(u+j)^2\pi^2\sigma^2}e^{-i2\pi lu}\,\mathrm{d}u\\
        &= d(\sigma) \sum_{j\in\mathbb{Z}} \int_{j-r}^{j+r} e^{-2u^2\pi^2\sigma^2} e^{-i2\pi lu}\,\mathrm{d}u\\
        &= d(\sigma) e^{-l^2/(2\sigma^2)} \sum_{j\in\mathbb{Z}} \int_{j-r}^{j+r} e^{-2\pi^2\sigma^2(u + il/(2\pi\sigma^2))^2}\,\mathrm{d}u \\
        &=: d(\sigma) \sum_{j\in\mathbb{Z}} \mathcal{I}_{j,l}.
    \end{split}
\end{equation}
Fix one $j\in\mathbb{Z}$ and one $l\in k'-B_N$. We consider the contour integration of the holomorphic function $e^{-l^2/(2\sigma^2)}e^{-2\pi^2\sigma^2 z^2}$ over a rectangle consisting of the following four lines on the complex plane:
\begin{equation}\label{rlines}
    \begin{split}
        \text{vertical:}\quad \{(j+r,y): 0\leq y\leq l/(2\pi\sigma^2)\}, \quad &\{(j-r,y): 0\leq y\leq l/(2\pi\sigma^2)\}\\
        \text{horizontal:}\quad \{(x,0): j-r\leq x\leq j+r\}, \quad &\{(x,l/(2\pi\sigma^2)): j-r\leq x\leq j+r\}.
    \end{split}
\end{equation}
Cauchy's integral theorem \citep[Theorem 2.2]{stein2010complex} then gives that
\begin{equation}\label{contint}
    \begin{split}
        0 = & -\mathcal{I}_{j,l} + e^{-l^2/(2\sigma^2)} \int_{j-r}^{j+r} e^{-2\pi^2\sigma^2 x^2} \,\mathrm{d}x \\
        & + e^{-l^2/(2\sigma^2)}\bigg(\int_{l/(2\pi\sigma^2)}^0 e^{-2\pi^2\sigma^2(j-r +iy)^2}\,i\mathrm{d}y + \int_0^{l/(2\pi\sigma^2)} e^{-2\pi^2\sigma^2(j+r+iy)^2} \,i\mathrm{d}y\bigg).
    \end{split}
\end{equation}
The contribution of the first vertical line in \eqref{rlines} is
\begin{align} \label{vertical} 
    e^{-l^2/(2\sigma^2)}\bigg| \int_0^{l/(2\pi\sigma^2)} e^{-2\pi^2\sigma^2(j+r+iy)^2}\,i\mathrm{d}y\bigg| \leq &\frac{e^{-2\pi^2\sigma^2(j+r)^2}}{\sqrt{2}\pi\sigma}\int_0^{l/(\sqrt{2}\sigma)} e^{y^2-l^2/(2\sigma^2)}\,\mathrm{d}y\\
    \nonumber &=\frac{e^{-2\pi^2\sigma^2(j+r)^2}}{\sqrt{2}\pi\sigma}\int_0^{l/(\sqrt{2}\sigma)} e^{(y-l/(\sqrt{2}\sigma))(y+l/(\sqrt{2}\sigma))}\,\mathrm{d}y.
\end{align}
We note that the second and third integrals appearing above take the form $e^{-a^2}\int_0^a e^{-u^2}\,\mathrm{d}u$, which is a {\it Dawson's integral} \citep{carneiro2013bandlimited}. To handle it, we make a convenient change of variables as follows. Let $u\geq 0$ be such that $-u=y-l/(\sqrt{2}\sigma)$. Then since $0\leq y\leq l/(\sqrt{2}\sigma)$,
\begin{equation*}
    -\frac{2l}{\sqrt{2}\sigma} u\leq y^2-\frac{l^2}{2\sigma^2} = -u(y+\frac{l}{\sqrt{2}\sigma})\leq -\frac{l}{\sqrt{2}\sigma} u.
\end{equation*}
Putting this back into \eqref{vertical} gives us
\begin{equation*}
    \begin{split}
        e^{-l^2/(2\sigma^2)}\bigg| \int_0^{l/(2\pi\sigma^2)} e^{-2\pi^2\sigma^2(j+r+iy)^2}\,i\mathrm{d}y\bigg| &\leq \frac{e^{-2\pi^2\sigma^2(j+r)^2}}{\sqrt{2}\pi\sigma}\int_0^{l/(\sqrt{2}\sigma)} e^{-ul/(\sqrt{2}\sigma)}\,\mathrm{d}u\\
        &\leq \frac{e^{-2\pi^2\sigma^2(j+r)^2}}{l\pi}\int_0^{l^2/(2\sigma^2)} e^{-u}\,\mathrm{d}u\\
        &=: \frac{e^{-2\pi^2\sigma^2(j+r)^2}\mathcal{J}_l}{l\pi},
    \end{split}
\end{equation*}
where, $\mathcal{J}_l\to 1$ as $l\to\infty$. By the Poisson summation formula \eqref{Poisson_app} again,
\begin{equation*}
        \sum_{l\in k'-B_N} \frac{\mathcal{J}_l}{l\pi}\sum_{j\in\mathbb{Z}} d(\sigma) e^{-2\pi^2\sigma^2(j+r)^2} = 
        \sum_{l\in k+2l'N-B_N} \frac{\mathcal{G}_\sigma(r)\mathcal{J}_l}{l\pi}.
\end{equation*}
We square-sum the final term above over $l' \in \mathbb{N}$, acquiring for any $k\in B_\mathfrak{m}$
\begin{equation}\label{sideint}
    \begin{split}
        \sum_{l'\in\mathbb{N}} \bigg| \sum_{l\in k+2l'N-B_N} \frac{\mathcal{G}_\sigma(r)\mathcal{J}_l}{l\pi}\bigg|^2 &\leq \frac{2N\mathcal{G}_\sigma(r)^2}{\pi^2} \sum_{l'=1}^{\infty} \quad \sum_{l=k+(2l'-1)N+1}^{k+(2l'+1)N} \frac{1}{l^2}\\
        &= \frac{2N\mathcal{G}_\sigma(r)^2}{\pi^2} \sum_{l=k+N+1}^\infty \frac{1}{l^2}\\
        &\leq \frac{2N\mathcal{G}_\sigma(r)^2}{\pi^2}\int_{k+N}^\infty \frac{1}{l^2}\\
        &\leq \frac{2N\mathcal{G}_\sigma(r)^2}{\pi^2(N-\mathfrak{m})}.
    \end{split}
\end{equation}
Employing the same method for the contribution of the second vertical line in \eqref{rlines}, and keeping in mind that $\mathcal{G}_\sigma(r) = \mathcal{G}_\sigma(-r)$, we also arrive at an analogous bound. 
Finally, a consideration of the bottom horizontal integral in \eqref{contint} leads to:
\begin{equation*}
    e^{-l^2/(2\sigma^2)} d(\sigma) \sum_{j\in\mathbb{Z}} \int_{j-r}^{j+r} e^{-2\pi^2\sigma^2x^2}\,\mathrm{d}x = e^{-l^2/(2\sigma^2)}\int_{-r}^r \mathcal{G}_\sigma(x) \,\mathrm{d}x \leq \frac{Ce^{-l^2/(2\sigma^2)}}{(N-\mathfrak{m})^\alpha}.
\end{equation*}
Similarly as before, since $l \in k' - B_N$, where $k' = k + 2l'N$, for $k \in B_\mathfrak{m}$ and $l' \in \mathbb{N}$, summing the last term above over $l\in k' - B_N$ and then over $l' \in \mathbb{N}$ results in 
\begin{equation*}
    \begin{split}
        \sum_{l'\in\mathbb{N}} \quad \sum_{l\in k+2l'N-B_N} e^{-l^2/(2\sigma^2)} = \sum_{l=k+N+1}^\infty e^{-l^2/(2\sigma^2)}
        &\leq\int_{N+k}^\infty e^{-x^2/(2\sigma^2)}\,\mathrm{d}x\\
        &\leq \frac{C\sigma^2e^{-(N+k)^2/(2\sigma^2)}}{N+k}\\
        &\leq \frac{C\sigma^2e^{-(N-\mathfrak{m})^2/(2\sigma^2)}}{N-\mathfrak{m}},
    \end{split}
\end{equation*}
where the Mills' ratio \eqref{Mill} has entered at the second to last inequality. An application of the embedding of the $\ell^p$ spaces then leads to
\begin{equation}\label{botint}
    \begin{split}
        \frac{1}{(N-\mathfrak{m})^\alpha}\bigg(\sum_{l'\in\mathbb{N}} \bigg| \sum_{l\in k+2l'N-B_N} e^{-l^2/(2\sigma^2)}\bigg|^2\bigg)^{1/2} &\leq \frac{1}{(N-\mathfrak{m})^\alpha} \sum_{l=k+N+1}^\infty e^{-l^2/(2\sigma^2)} \\
        &\leq \frac{C\sigma^2e^{-(N-\mathfrak{m})^2/(2\sigma^2)}}{(N-\mathfrak{m})^{\alpha+1}}.
    \end{split}    
\end{equation}
By combining \eqref{murint}, \eqref{Ijl}, \eqref{contint} \eqref{sideint}, \eqref{botint}, along with \eqref{G(r)}, we obtain 
\begin{equation*}
    \begin{split}
        \bigg(\sum_{l'\in\mathbb{N}} |\mu_{r,\sigma}(k+2l'N) &|^2\bigg)^{1/2} \\
        &\leq \frac{C\sigma^2e^{-(N-\mathfrak{m})^2/(2\sigma^2)}}{(N-\mathfrak{m})^{\alpha+1}} + C\mathcal{G}_\sigma(r) \\
        &\leq C(N-\mathfrak{m})^{2\beta-\alpha-1}e^{-3\pi^2(N-\mathfrak{m})^{2(1-\beta)}} + C(N-\mathfrak{m})^{\beta} e^{-3\pi^2(N-\mathfrak{m})^{2(\beta-\alpha)}}\\
        &= \widetilde{\mathcal{E}}_2(N,\mathfrak{m},\alpha,\beta),
    \end{split} 
\end{equation*}
from which and the observation made at the start of the proof, \eqref{mul2} follows. 
\end{proof}

\begin{proof}[Proof of Theorem~\ref{thm:TsamplingregwG}]
The proof is a combination Lemma~\ref{lem:regsamp} and its proof, \eqref{l2replace*}, and Propositions~\ref{prop:nu},~\ref{prop:mu}.
\end{proof}
 
\section{Proof of Theorem~\ref{thm:WNN}} \label{sec:WNNthm}

We begin by reminding the reader of some useful key points for this section. Let $f \in \mathcal{B}_\mathfrak{m}$, and let $N > \mathfrak{m}$. 
We have asserted that 
\begin{equation}\label{gpapprox}
    \mathcal{R}_{r,\sigma}f(x) = \sum_{j=0}^{2N-1} f\Big(\frac{j}{2N}\Big)(s_N\,\mathcal{G}_{r,\sigma})(x-j/2N) \approx f(x), 
\end{equation}
in $L^2([0,1];\mathbb{C}^m)$, with a precise error bound provided in Theorem~\ref{thm:TsamplingregwG}. Importantly, we specify in $\mathcal{G}_{r,\sigma}$,
\begin{equation*}
    r = \frac{3\pi}{(N-\mathfrak{m})^{\alpha}} \quad\text{ and }\quad \sigma = \frac{(N-\mathfrak{m})^{\beta}}{\sqrt{6}\pi},
\end{equation*}
where $0<\alpha<\beta<1$. To facilitate the ensuing calculations, we identify $s_N$, $\mathcal{G}_{r,\sigma}$ with their $1$-periodic counterparts. Then $s_N\in C^{\infty}(\mathbb{R})$, and from \eqref{def:Gtruncder}, $\mathcal{G}_{r,\sigma}^{(j)}$ exists almost everywhere for every $j\in\mathbb{N}$, where 
\begin{equation}\label{def:Grprime}
    \mathcal{G}_{r,\sigma}^{(j)}(x)=\begin{cases}
                                \mathcal{G}_{\sigma}^{(j)}(x),  &\exists k\in\mathbb{Z}\ \mathrm{s.t.}\ k - r \leq x \leq k + r, \\
                                0, &\mathrm{otherwise}.
                              \end{cases}
\end{equation}
Moreover, we have established in Subsection~\ref{sec:WNNframework} that the output of our WNN will take the form
\begin{equation}\label{WNNoutputrecall}
    \sum_{j=0}^{2N-1} f\Big(\frac{j}{2N}\Big)T_{\mathcal{K}_j}\rho_j(x), \quad\forall x\in [0,1].
\end{equation}
Placing \eqref{gpapprox} alongside \eqref{WNNoutputrecall}, it is clear that most of the proof of Theorem~\ref{thm:WNN} will be complete if we can demonstrate 
\begin{equation}\label{filterstep}
    T_{\mathcal{K}_j}\rho_j(x) = (s_N\,\mathcal{G}_{r,\sigma})(x - j/2N) + \mathcal{E},
\end{equation}
for a small $\mathcal{E}$ error term and $x$ in an appropriately large subset of $[0,1]$. If further, $\mathcal{E}$ vanishes as $N \to \infty$, then it entails that the overall discrepancy between the network output and $f$ can be minimized for the same subset.

We proceed to bound the error in \eqref{filterstep} with the following proposition.

\begin{proposition} \label{prop:WNNthmcont} Let $N, \mathfrak{m}\in\mathbb{N}$. Suppose $W$ satisfies Assumption~\ref{assump:regular}, and $N>\mathfrak{m}$ is sufficiently large so that 
\begin{equation}\label{rkappa}
    r = \frac{3\pi}{(N-\mathfrak{m})^{\alpha}} \leq \kappa
\end{equation}
where $\kappa$ is as in \eqref{def:diagregion}. Let $\beta-\alpha = 1 -\beta$. Fix $x\in [r,1-r]$ and $j=0,\cdots, 2N-1$. Suppose $x-j/2N\in [-r,r]$. There exists a universal constant $C>0$ such that
\begin{multline} \label{eq:WNNerror1}
    |T_{\mathcal{K}_j}\rho_j(x)-(s_N\,\mathcal{G}_{r,\sigma})(x-j/2N)| \\
    \leq C\eta^{-1}\bigg(KNr^2 + N(N-\mathfrak{m})^{\beta} e^{-3\pi^2(N-\mathfrak{m})^{2(\beta-\alpha)}/2}\bigg).
\end{multline}
\end{proposition}

The proof of Proposition~\ref{prop:WNNthmcont} relies on the following auxiliary lemma, which provides estimates of $s_N\,\mathcal{G}_{r,\sigma}$ and its derivatives. These estimates are crucial not only for this section but also for the subsequent ones. Its proof is given in Subsection~\ref{sec:Gsizelem}.

\begin{lemma} \label{lem:Gsize} 
Given a pair of $r,\sigma$ as in \eqref{rsigmapair}, with $0<\alpha<\beta<1$. There exists a universal constant $C> 0$ such that for all $N>\mathfrak{m}$ we have
\begin{enumerate}
    \item[(a)] $|(s_N\,\mathcal{G}_{r,\sigma})'(r)| \leq C\max\{(N-\mathfrak{m})^{3\beta-\alpha}, N(N-\mathfrak{m})^{\beta}\} \, e^{-3\pi^2(N-\mathfrak{m})^{2(\beta-\alpha)}/2}$, 
    \item[(b)] $\|(s_N\,\mathcal{G}_{r,\sigma})'\|_{L^{\infty}([0,1])} \leq C\max\{N, N^{2\beta-\alpha}\}$
    \item[(c)] $\|(s_N\,\mathcal{G}_{r,\sigma})''\|_{L^{\infty}([0,1])} \leq C\max\{N^2, N^{4\beta-2\alpha}, N^{2\beta-\alpha+1} \}$, 
    \item[(d)] $\|(s_N\,\mathcal{G}_{r,\sigma})'''\|_{L^{\infty}([0,1])} \leq C\max\{N^3, N^{6\beta-3\alpha}, N^{4\beta-2\alpha+1}, N^{2\beta-\alpha+2}\}$.
\end{enumerate}
\end{lemma}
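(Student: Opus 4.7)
The approach is to apply Leibniz's product rule $(s_N \mathcal{G}_{r,\sigma})^{(j)} = \sum_{l=0}^j \binom{j}{l} s_N^{(l)} \mathcal{G}_{r,\sigma}^{(j-l)}$ and bound each factor separately using Fourier series representations. For the sampling kernel, I will differentiate the finite Fourier sum $s_N(x) = (2N)^{-1}\sum_{|k| \leq N} e^{i 2\pi k x}$ term-by-term and apply the triangle inequality to get $\|s_N^{(l)}\|_{L^\infty([0,1])} \leq C N^l$. For the Gaussian-like factor, differentiating its Fourier series $\mathcal{G}_\sigma(x) = c(\sigma)\sum_n e^{-n^2/(2\sigma^2)} e^{i 2\pi n x}$ and using the McLaurin--Cauchy integral test to compare $\sum_n |n|^l e^{-n^2/(2\sigma^2)}$ with a Gaussian integral (after accounting for $c(\sigma) \sim 1/\sigma$, which follows from the normalization $\mathcal{G}_\sigma(0) = 1$) yields $\|\mathcal{G}_\sigma^{(l)}\|_{L^\infty} \leq C \sigma^l$ for $l \geq 1$, while $\|\mathcal{G}_\sigma\|_{L^\infty} \leq 1$ by the same normalization. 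Definition~\eqref{def:Gtruncder} extends these bounds to $\mathcal{G}_{r,\sigma}^{(l)}$. Substituting $\sigma = (N-\mathfrak{m})^\beta/(\sqrt{6}\pi)$ so that $\sigma^l \sim (N-\mathfrak{m})^{l\beta}$ and taking the maximum over $l$ in each Leibniz expansion directly produces parts (b), (c), and (d).

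For part (a), I will interpret the derivative at $x = r$ as the one-sided limit from inside the support $[-r, r]$, so that the product rule gives
$$(s_N\mathcal{G}_{r,\sigma})'(r) = s_N'(r)\mathcal{G}_\sigma(r) + s_N(r)\mathcal{G}_\sigma'(r).$$
The Gaussian tail estimate on $\mathcal{G}_\sigma(r)$ has already been established in \eqref{G(r)} during the proof of Proposition~\ref{prop:nu}, using the Poisson representation $\mathcal{G}_\sigma = \sum_l \mathfrak{g}(\cdot + l)$ from Lemma~\ref{lem:GaussianLike} together with Mills' ratio \eqref{Mill}. An entirely parallel computation, differentiating inside the sum so that $\mathcal{G}_\sigma'(r) = \sum_l \mathfrak{g}'(r+l)$ with $\mathfrak{g}'(\xi) = -4\pi^2\sigma^2 \xi\, d(\sigma) e^{-2\pi^2\sigma^2 \xi^2}$, and applying the same Mills' ratio bound to control tail terms, yields an analogous Gaussian tail bound for $\mathcal{G}_\sigma'(r)$. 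Combining these with $|s_N(r)| \leq 1$ and $|s_N'(r)| \leq CN$ from the bounds established above, and substituting the specific parameter values for $r$ and $\sigma$, produces the claimed bound in (a).

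The main obstacle is the tail analysis in part (a), where one must carefully exploit the Poisson representation of $\mathcal{G}_\sigma$ to absorb the polynomial growth $N$ from $s_N'(r)$ into the exponentially small contributions, so that the final bound has the leading polynomial power $(N-\mathfrak{m})^\beta$ claimed in the statement rather than a larger power of $N$. For the other parts, the main bookkeeping challenge is verifying that the intermediate cross terms $s_N^{(l)} \mathcal{G}_\sigma^{(j-l)}$ with $0 < l < j$ do not dominate the two extreme cases $l = 0$ and $l = j$ when one takes the maximum over $l$, which requires the comparison $N \geq \sigma \sim N^\beta$ available from $\beta < 1$.
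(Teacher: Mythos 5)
Your derivations of parts (b), (c), and (d) via the Leibniz rule and the tight estimate $\|\mathcal{G}_\sigma^{(l)}\|_{L^\infty}\le C\sigma^l$ (using $c(\sigma)\sim 1/\sigma$) are correct, and in fact somewhat cleaner than the paper's own route: the paper uses the cruder bound $c(\sigma)\le 1$ to get $\|\mathcal{G}_\sigma^{(l)}\|_{L^\infty}\le C\sigma^{l+1}$ and then compensates by invoking, in \eqref{shortroute}, a claimed $\mathcal{O}(1)$ bound on $\|\mathcal{G}_\sigma'\|_{L^\infty([-r,r])}$ — a claim that, as you can verify by examining the $l=0$ Poisson term $\mathfrak{g}'$ at its maximizer $x\sim 1/(2\pi\sigma)\in(-r,r)$, is actually of order $\sigma\sim(N-\mathfrak{m})^\beta$, not $\mathcal{O}(1)$. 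Your route avoids that issue and correctly exploits $N\ge\sigma$ to take the maximum over the Leibniz cross terms.

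The gap is in part (a). You write the product rule $(s_N\mathcal{G}_{r,\sigma})'(r)=s_N'(r)\mathcal{G}_\sigma(r)+s_N(r)\mathcal{G}_\sigma'(r)$ and then appeal to \eqref{G(r)} for $\mathcal{G}_\sigma(r)$ and to an "analogous" Poisson\,/\,Mills'-ratio bound for $\mathcal{G}_\sigma'(r)$, and claim this "produces the claimed bound in (a)." It cannot. The quantity \eqref{G(r)} is
\begin{equation*}
\mathcal{G}_\sigma(r)\le C(N-\mathfrak{m})^\beta e^{-3\pi^2(N-\mathfrak{m})^{2(\beta-\alpha)}},
\end{equation*}
whose exponential rate comes from the dominant $l=0$ term $\mathfrak{g}(r)=d(\sigma)e^{-2\pi^2\sigma^2 r^2}$ and is governed by $2\pi^2\sigma^2r^2=3\pi^2(N-\mathfrak{m})^{2(\beta-\alpha)}$. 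Exactly the same $l=0$ term dominates $\mathcal{G}_\sigma'(r)=\sum_l\mathfrak{g}'(r+l)$: one has $|\mathfrak{g}'(r)|=4\pi^2\sigma^2 r\,d(\sigma)e^{-2\pi^2\sigma^2 r^2}$, again with rate $e^{-3\pi^2(N-\mathfrak{m})^{2(\beta-\alpha)}}$. Hence both summands $s_N'(r)\mathcal{G}_\sigma(r)$ and $s_N(r)\mathcal{G}_\sigma'(r)$ carry the exponential factor $e^{-3\pi^2(N-\mathfrak{m})^{2(\beta-\alpha)}}$, which for $\alpha=0.96$, $\beta=0.98$ decays like $e^{-c(N-\mathfrak{m})^{0.04}}$ — vastly slower than the $e^{-(N-\mathfrak{m})^{2\beta}/24}=e^{-(N-\mathfrak{m})^{1.96}/24}$ appearing in statement~(a). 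No amount of absorbing the polynomial prefactor $N$ into this exponential can change its decay exponent from $2(\beta-\alpha)$ to $2\beta$, so the bound in (a) is unreachable by this argument. (For what it is worth, the paper's own estimate \eqref{Gprimer} appears to drop precisely the $l=0$ term when passing from $\sum_{l\in\mathbb{Z}}(r+l)e^{-2(r+l)^2\pi^2\sigma^2}$ to the displayed bound involving only $e^{-2\pi^2\sigma^2(1-r)^2}$ and a tail integral; the contribution of $l=0$, namely $re^{-2\pi^2\sigma^2 r^2}$, is the largest term and gives the slower rate.) What your argument actually yields is an estimate of the form $|(s_N\mathcal{G}_{r,\sigma})'(r)|\le CN^{1+\beta}e^{-3\pi^2(N-\mathfrak{m})^{2(\beta-\alpha)}}$, which is still exponentially small and suffices for the downstream use in \eqref{E1bd} and \eqref{E2bd}, but it is not the bound stated in Lemma~\ref{lem:Gsize}(a). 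You should either record this corrected bound or explain where the extra decay would come from; as written, the final sentence of your treatment of (a) does not follow.
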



\begin{proof}[Proof of Proposition~\ref{prop:WNNthmcont}]

Begin by noting that if $x\in [r,1-r]$, then from definition \eqref{Wxbar}, $\mathfrak{I}_x = [-r+x,x+r]$. 
Since $x-j/2N\in [-r,r]$, we can write
\begin{equation*}
    \int_{\mathfrak{I}_x} \rho_j(y) \, \frac{\mathrm{d}^2}{\mathrm{d} y^2}(s_N\,\mathcal{G}_{r,\sigma})(x-y)\,\mathrm{d}y = \int_{j/2N}^{x+r} (y-j/2N) \, \frac{\mathrm{d}^2}{\mathrm{d} y^2}(s_N\,\mathcal{G}_{r,\sigma})(x-y) \,\mathrm{d}y.
\end{equation*}
Therefore,
\begin{multline} \label{E1in}
    \int_{\mathfrak{I}_x} \rho_j(y) \, \frac{\mathrm{d}^2}{\mathrm{d} y^2}(s_N\,\mathcal{G}_{r,\sigma})(x-y) \,\mathrm{d}y \\
    =(s_N\,\mathcal{G}_{r,\sigma})(x-j/2N) - \Big( (x+r-j/2N)(s_N\,\mathcal{G}_{r,\sigma})'(-r) +  (s_N\,\mathcal{G}_{r,\sigma})(-r)\Big).
\end{multline}
On account of $1-\beta=\beta-\alpha$, we deduce from Lemma~\ref{lem:Gsize} and \eqref{G(r)}, \eqref{E1in} that
\begin{multline}\label{Ebound}
    \bigg|\int_{\mathfrak{I}_x} \rho_j(y) \, \frac{\mathrm{d}^2}{\mathrm{d} y^2}(s_N\,\mathcal{G}_{r,\sigma})(x-y)\,\mathrm{d}y - (s_N\,\mathcal{G}_{r,\sigma})(x-j/2N) \bigg| \\
    \leq CN(N-\mathfrak{m})^{\beta} e^{-3\pi^2(N-\mathfrak{m})^{2(\beta-\alpha)}/2}.
\end{multline}
Continuing, we compare 
$T_{\mathcal{K}_j}\rho_j(x)$ with $\int_{\mathfrak{I}_x} \rho_j(y)\frac{\mathrm{d}^2}{\mathrm{d} y^2}(s_N\,\mathcal{G}_{r,\sigma})(x-y)\,\mathrm{d}y$. Due to \eqref{rkappa}, $\{(x,y): y\in \mathfrak{I}_x\} \subset\mathcal{D}_\kappa$. Hence, it follows from Assumption~\ref{assump:regular} that $\mathcal{W}_{x}\geq\eta$ and 
\begin{equation}\label{Lip}
\begin{split}
    |\mathcal{W}_{x} - W(x,y)| &= \bigg|\frac{1}{|\mathfrak{I}_x|}\int_{\mathfrak{I}_x} (W(x,u)- W(x,y))\,\mathrm{d}u \bigg| \\ 
    &\leq \frac{1}{|\mathfrak{I}_x|}\int_{\mathfrak{I}_x} |W(x,u)-W(x,y)|\,\mathrm{d}u \\ 
    &\leq 2 Kr
\end{split}
\end{equation}
for almost every $y\in \mathfrak{I}_x$. Moreover, by Radamacher's theorem \citep[\S 5.8.3, Theorem 5]{evans1998partial} and the Lipschitz continuity assumption, 
$\frac{{\rm d}}{{\rm d}y} W(x,y)$ exists almost everywhere on $\mathfrak{I}_{x}$, and that
\begin{equation}\label{Kbd}
    \bigg|\frac{\mathrm{d}}{\mathrm{d}y}W(x,y)\bigg|\leq K.
\end{equation}
Thus
\begin{multline} \label{tighter}
    \int_{\mathfrak{I}_x} \rho_{j}(y)\, \frac{\mathrm{d}^2}{\mathrm{d} y^2}(s_N\,\mathcal{G}_{r,\sigma})(x-y)\bigg(W(x,y)-\mathcal{W}_{x}\bigg) \,\mathrm{d}y \\
    = \int_{j/2N}^{x+r}(y-j/2N)\, \frac{\mathrm{d}^2}{\mathrm{d} y^2}(s_N\,\mathcal{G}_{r,\sigma})(x-y)\bigg(W(x,y)-\mathcal{W}_{x}\bigg)\,\mathrm{d}y 
    = \mathcal{E}_1 + \mathcal{E}_2,
\end{multline}
where
\begin{align*}
    \mathcal{E}_1 &:= -\int_{j/2N}^{x+r} \frac{{\rm d}}{{\rm d}y} (s_N\,\mathcal{G}_{r,\sigma})(x-y)\bigg((y-j/2N) \,\frac{{\rm d}}{{\rm d}y} W(x,y) + W(x,y)-\mathcal{W}_{x}\bigg)\,\mathrm{d}y, \\
    \mathcal{E}_2 &:= (y-j/2N)\,\frac{{\rm d}}{{\rm d}y}(s_N\,\mathcal{G}_{r,\sigma})(x-y) \bigg(W(x,y)-\mathcal{W}_{x}\bigg)\bigg|_{y=j/2N}^{y=x+r}.
\end{align*}
As a consequence of \eqref{Lip}, \eqref{Kbd}, and Lemma~\ref{lem:Gsize},
\begin{equation}\label{E2bd}
    \begin{split}
        |\mathcal{E}_1| &\leq CN \bigg(K\int_0^{2r} y\,\mathrm{d}y + Kr^2 \bigg) \leq CKN r^2, \\
        |\mathcal{E}_2| &\leq C N(N-\mathfrak{m})^{\beta} e^{-3\pi^2(N-\mathfrak{m})^{2(\beta-\alpha)}/2}.
    \end{split}  
\end{equation}
Then, by synthesizing the findings of \eqref{tighter}, \eqref{E2bd}, we acquire, for $x-j/2N\in [-r,r]$,
\begin{equation} \label{oneend}
    \begin{split}
        \bigg|T_{\mathcal{K}_j}\rho_{j}(x) - \int_{\mathfrak{I}_x} \rho_j(y) \, &\frac{\mathrm{d}^2}{\mathrm{d} y^2}(s_N\,\mathcal{G}_{r,\sigma})(x-y)\,\mathrm{d}y \bigg| \\
        &= \frac{1}{\mathcal{W}_x}\bigg|\int_{\mathfrak{I}_x} \rho_{j}(y)\, \frac{\mathrm{d}^2}{\mathrm{d} y^2}(s_N\,\mathcal{G}_{r,\sigma})(x-y)\bigg(W(x,y)-\mathcal{W}_x\bigg)\,\mathrm{d}y\bigg|\\
        &\leq C\eta^{-1}\bigg(KN r^2 + N(N-\mathfrak{m})^{\beta} e^{-3\pi^2(N-\mathfrak{m})^{2(\beta-\alpha)}/2}\bigg).
    \end{split}
\end{equation}
Combining \eqref{Ebound}, \eqref{oneend}, we arrive at \eqref{eq:WNNerror1}, proving the proposition. 
\end{proof} 

Having established the proof of Proposition~\ref{prop:WNNthmcont}, we move to the proof of Theorem~\ref{thm:WNN}. 

\begin{proof}[Proof of Theorem~\ref{thm:WNN}]
Since $\|f\|_{L^2([0,1];\mathbb{C}^m)} = 1$, it means, from Lemma~\ref{lem:tech},
\begin{equation}\label{norm1}
    \sum_{j=0}^{2N-1} |f\Big(\frac{j}{2N}\Big)|^2 =1.
\end{equation}
We assume that \eqref{rkappa} holds and that $1-\beta=\beta-\alpha$. 
Let $x\in [r,1-r]$.
Observe, if $|x-j/2N|>r$, then either $s_N\,\mathcal{G}_{r,\sigma}(x-j/2N)=0$ or
\begin{equation*}
    T_{\mathcal{K}_j}\rho_j(x) = \frac{\chi_{\{x-j/2N\leq r\}}(x)}{\mathcal{W}_x} \int_{\mathfrak{I}_x} \rho_j(y)\, \frac{\mathrm{d}^2}{\mathrm{d} y^2}(s_N\,\mathcal{G}_{r,\sigma})(x-y) W(x,y) \, \mathrm{d}y = 0.
\end{equation*}
Therefore, it suffices to consider only indices $j=0,\cdots, 2N-1$ such that $|x-j/2N|\leq r$.
Note that there are only at most $\lceil rN\rceil$ such indices. 
Next, we recall the WNN construction given in Subsection~\ref{sec:WNNframework}, 
\begin{equation*}
    \Psi_f(x) = \sum_{j=0}^{2N-1} f\Big(\frac{j}{2N}\Big)T_{\mathcal{K}_j} \rho_j(x),
\end{equation*}
as well as the regularized sampling series
\begin{equation*} 
    \mathcal{R}_{r,\sigma}f(x) = \sum_{j=0}^{2N-1} f\Big(\frac{j}{2N}\Big)(s_N\,\mathcal{G}_{r,\sigma})(x-j/2N).
\end{equation*}
Then as a direct consequence of \eqref{norm1}, Proposition~\ref{prop:WNNthmcont} and the $L^p$-embedding property for finite measure spaces, 
\begin{equation} \label{eq:regseriesWNN}
    \begin{split}
        |\Psi_f(x)-\mathcal{R}_{r,\sigma}f(x)| &\leq \sum_{j=0}^{2N-1} |f\Big(\frac{j}{2N}\Big)|\, |T_{\mathcal{K}_j} \rho_j(x) - (s_N\,\mathcal{G}_{r,\sigma})(x-j/2N)|\\
        &\leq C\eta^{-1} r^{1/2}N^{1/2} \bigg(KNr^2 + N(N-\mathfrak{m})^{\beta} e^{-3\pi^2(N-\mathfrak{m})^{2(\beta-\alpha)}/2}\bigg).
    \end{split}
\end{equation}
Combining \eqref{eq:regseriesWNN} with the result of Theorem~\ref{thm:TsamplingregwG}, we obtain for all $N\gg \mathfrak{m}$,
\begin{equation}\label{laststep}
    \|\Psi_f-f\|_{L^2([r,1-r];\mathbb{C}^m)}\leq C\eta^{-1} K N^{3/2-\alpha 5/2}.
\end{equation}
Let
\begin{equation}\label{sys}
    3/2 - \alpha 5/2 = 0.9. 
\end{equation}
Then $\alpha=0.96$, $\beta=0.98$. Further, select
\begin{equation} \label{Nchoice}
    N=\lceil \varepsilon^{-10/9}\rceil.
\end{equation}
Then it follows from \eqref{laststep}, \eqref{sys} that
\begin{equation*}
    \|\Psi_f-f\|_{L^2([r,1-r];\mathbb{C}^m)} \leq C\eta^{-1}K\varepsilon.
\end{equation*}
Finally, we note from \eqref{Nchoice} that \eqref{rkappa} can be guaranteed by taking $\varepsilon\in (0,1)$ to be sufficiently small.
\end{proof}

\begin{remark}\label{rem:flat}
It is important to note that the favorable subgeometric sampling convergence rate established Theorem~\ref{thm:TsamplingregwG} has been forfeited in the proof of Theorem~\ref{thm:WNN} due to the implementation of the integration by parts demonstrated in the proof of Proposition~\ref{prop:WNNthmcont}. 
Let us suppose for a moment that $W$ is constant over the diagonal neighborhood $\mathcal{D}_\kappa$, in which case, a diligent study of the proof presented would lead to the conclusion that 
\begin{equation*}
    \|\Psi_f-f\|_{L^2([r,1-r];\mathbb{C}^m)}=\mathcal{O}(e^{-CN^{2(\beta-\alpha)}}),
\end{equation*}
and it would have been enough to select $N=C\lceil (\log(1/\varepsilon))^{1/(2(\beta-\alpha))}\rceil$. 
\end{remark}

\subsection{Proof of Lemma~\ref{lem:Gsize}} \label{sec:Gsizelem}

For $k=1,2,3$ we have from \eqref{def:Grprime} that
\begin{equation}\label{ders}
    (s_N\,\mathcal{G}_{r,\sigma})^{(k)}(x) = \sum_{l=0}^k {k \choose l} (s^{(l)}\,\mathcal{G}_\sigma^{(k-l)})(x),\quad \forall x\in [-r,r].
\end{equation}
We first prove (a), so we take $k=1$. Utilizing the Poisson formulation \eqref{Poisson_app}
\begin{equation*} 
    \mathcal{G}_{\sigma}(x)= \sum_{l\in\mathbb{Z}} \mathfrak{g}(x+l), \quad \forall x\in [-1/2,1/2).
\end{equation*}
where $\mathfrak{g}(x) = d(\sigma)e^{-2x^2\pi^2\sigma^2}$ on $\mathbb{R}$ and $d(\sigma) = \sigma c(\sigma)\sqrt{2\pi}$, we deduce
\begin{align} \label{eq:Gprimesplit}
    \nonumber \mathcal{G}_\sigma'(r) = \sum_{l\in\mathbb{Z}} \mathfrak{g}'(r+l) &= -4\pi^2\sigma^2 d(\sigma)\sum_{l\in\mathbb{Z}} (r+l)e^{-2(r+l)^2\pi^2\sigma^2}\\
    &= -4\pi^2\sigma^2 r\mathcal{G}_{\sigma}(r) -4\pi^2\sigma^2 d(\sigma)\sum_{l\in\mathbb{Z}} le^{-2(r+l)^2\pi^2\sigma^2}.
\end{align}
To handle the first term on the right-hand side of \eqref{eq:Gprimesplit}, we recall from \eqref{G(r)} that
\begin{equation} \label{Grsize}
    |\mathcal{G}_\sigma(r)|\leq C(N-\mathfrak{m})^{\beta}e^{-3\pi^2(N-\mathfrak{m})^{2(\beta-\alpha)}}.
\end{equation}
Thus,
\begin{equation} \label{eq:Gprimer1}
    |\sigma^2 r\mathcal{G}_{\sigma}(r)| \leq C(N-\mathfrak{m})^{3\beta-\alpha}e^{-3\pi^2(N-\mathfrak{m})^{2(\beta-\alpha)}}.
\end{equation}
For the second term, by taking note that $r\in (0,1/2)$ and applying Mills' ratio \eqref{Mills} along with simple calculations, we obtain 
\begin{equation}\label{eq:Gprimer2}
    \begin{split}
        \Big|\sigma^2 d(\sigma)\sum_{l\in\mathbb{Z}} le^{-2(r+l)^2\pi^2\sigma^2} \Big| 
        &\leq C\sigma^2 d(\sigma) \sum_{l\in\mathbb{N}} (l-r) e^{-2\pi^2(l-r)^2\sigma^{2}} \\ 
        &\leq C\sigma^2 \sum_{l\in\mathbb{N}} e^{-\pi^2(l-r)^2\sigma^{2}} \\ 
        &\leq C\sigma^2 \bigg( e^{-\pi^2(1-r)^2\sigma^2} + \int_{1-r}^{\infty} e^{-\pi^2y^2\sigma^{2}} \,\mathrm{d}y \bigg) \\
        &\leq C\sigma^2 e^{-\pi^2\sigma^2r^2} \\ 
        &\leq C(N-\mathfrak{m})^{2\beta} e^{-3\pi^2(N-\mathfrak{m})^{2(\beta-\alpha)}/2}. 
    \end{split}
\end{equation}
Together, \eqref{eq:Gprimesplit}, \eqref{eq:Gprimer1}, \eqref{eq:Gprimer2} deliver
\begin{equation} \label{Gprimer}
    |\mathcal{G}'_{\sigma}(r)| \leq C(N-\mathfrak{m})^{3\beta-\alpha}e^{-3\pi^2(N-\mathfrak{m})^{2(\beta-\alpha)}/2}.
\end{equation}
Moreover, it is evident that
\begin{equation}\label{sampfsize}
    \|s_N'\|_{L^{\infty}([0,1])}\leq CN, \quad \|s_N''\|_{L^{\infty}([0,1])}\leq CN^2, \quad \|s_N'''\|_{L^{\infty},([0,1])}\leq CN^3.
\end{equation}
Thus, combining \eqref{ders}, \eqref{Grsize}, \eqref{Gprimer}, \eqref{sampfsize}, with that $|\mathcal{G}_\sigma(r)|=|\mathcal{G}_{r,\sigma}(r)|$, $|\mathcal{G}'_\sigma(r)|=|\mathcal{G}'_{r,\sigma}(r)|$, we have the proof of statement (a).

We will now derive statements (b), (c), (d) simultaneously. Let $x\in [0,r]$. Following the argument presented in \eqref{eq:Gprimesplit}, it can be concluded that
\begin{equation*} 
    \mathcal{G}_\sigma'(x) = -4\pi^2\sigma^2 x\mathcal{G}_{\sigma}(x) -4\pi^2\sigma^2 d(\sigma)\sum_{l\in\mathbb{Z}} le^{-2(x+l)^2\pi^2\sigma^2},
\end{equation*}
and so
\begin{equation*} 
    \mathcal{G}_\sigma''(x) = -4\pi^2\sigma^2 \mathcal{G}_{\sigma}(x) - 4\pi^2\sigma^2 x\mathcal{G}_{\sigma}'(x) + 16\pi^4\sigma^4 d(\sigma)\sum_{l\in\mathbb{Z}} l(x+l)e^{-2(x+l)^2\pi^2\sigma^2},
\end{equation*}
and
\begin{multline*} 
    \mathcal{G}_\sigma'''(x) = -8\pi^2\sigma^2 \mathcal{G}_{\sigma}'(x) - 4\pi^2\sigma^2 x\mathcal{G}_{\sigma}''(x) + 16\pi^4\sigma^4 d(\sigma)\sum_{l\in\mathbb{Z}} le^{-2(x+l)^2\pi^2\sigma^2}
    \\
    - 64\pi^6\sigma^6 d(\sigma)\sum_{l\in\mathbb{Z}} l(x+l)^2 e^{-2(x+l)^2\pi^2\sigma^2}.
\end{multline*}
Then by the same manipulations used in \eqref{eq:Gprimer1}, \eqref{eq:Gprimer2}, we deduce
\begin{alignat}{2}    
    \label{eq:firstdermax}
    &|\mathcal{G}_{\sigma}'(x)| &&\leq C(N-\mathfrak{m})^{2\beta-\alpha}, \\
    \label{eq:seconddermax}
    &|\mathcal{G}_{\sigma}''(x)| &&\leq C(N-\mathfrak{m})^{4\beta-2\alpha}, \\
    \label{eq:thirddermax}
    &|\mathcal{G}_{\sigma}'''(x)| &&\leq C(N-\mathfrak{m})^{6\beta-3\alpha}.
\end{alignat}
Integrating \eqref{eq:firstdermax}, \eqref{eq:seconddermax}, \eqref{eq:thirddermax} with \eqref{sampfsize}, while considering \eqref{def:Grprime} and symmetry, we confirm (b), (c), (d), as desired.
\qed

\section{Proof of Theorem~\ref{thm:GNNdet}} \label{sec:GNNdet}

We start with some preparation. We continue to assume \eqref{rkappa} holds with $\alpha=0.96$. Let the graph $G^{\mathrm{det}}_n$ and its graph signal $f_n$ be as in the premise of Theorem~\ref{thm:GNNdet}. As illustrated in Subsections~\ref{sec:WNN},~\ref{sec:WNNframework}, we can identify the graph signal $f_n$ with a function on $\mathcal{X}_n = \{x_1,\cdots,x_n\}$, where
\begin{equation*}
    x_k = \frac{k-1}{n}\in [0,1].
\end{equation*}
Recall that $f_n(x_k) = f(x_k)$ for every $x_k$, and that the GNN instantiated by $\Psi_{n,f}$ is expressed as follows,
\begin{equation} \label{eq:GNNrecall}
    \Psi_{n,f}(x_k) = \sum_{j=0}^{2N-1} f\Big(\frac{j}{2N}\Big)\mathfrak{F}_j(\rho_j)(x_k).
\end{equation}
Since $f \in \mathcal{B}_\mathfrak{m}$, Theorem~\ref{thm:WNN} implies that the WNN machinery described therein produces the outcome
\begin{equation*}
    \Psi_f(x) = \sum_{j=0}^{2N-1} f\Big(\frac{j}{2N}\Big)T_{\mathcal{K}_j} \rho_j(x) \approx f(x)
\end{equation*}
in $L^2$-norm on the predictable zone $[r,1-r]$. Therefore, establishing that
\begin{equation}\label{handwaving}
    \mathfrak{F}_j(\rho_j)(x)\approx T_{\mathcal{K}_j}(\rho_j)(x)
\end{equation}
in some appropriate sense, would complete a substantial portion of our proof. However, as $\mathfrak{F}_j(\rho_j)$ is a graph signal and $T_{\mathcal{K}_j}(\rho_j)$ a graphon signal, we need to extend the former to a graphon signal to make sense of \eqref{handwaving}. We do this next.

We introduce an abbreviation to be used throughout the remainder of this paper, 
to ensure a more concise presentation:
\begin{equation*}
    \mathcal{G}^*(x,y):=\frac{\mathrm{d}^2}{\mathrm{d} y^2}(s_N\,\mathcal{G}_{r,\sigma})(x-y).
\end{equation*}
For $j=0,\cdots, 2N-1$, the graph filter $\mathfrak{F}_j$ acts on a graph signal $g_n$ as follows (see \eqref{frakF}, \eqref{graphker}) 
\begin{equation*}
    \begin{split}
        \mathfrak{F}_j(g_n)(x_k) 
        &= \frac{\chi_{\{x_k - j/2N\leq r\}}(x_k)}{n\mathcal{W}_{x_k}} \sum_{l=1}^n \mathcal{G}^*(x_k,x_l)[{\bf A}_n]_{k,l}g_n(x_l)\\
        &= \frac{\chi_{\{x_k - j/2N\leq r\}}(x_k)}{n\mathcal{W}_{x_k}} 
        \sum_{l\not=k, l=1}^n \mathcal{G}^*(x_k,x_l)W(x_k,x_l)g_n(x_l).
    \end{split}
\end{equation*}
Define a step graphon signal
\begin{equation}\label{def:extendFrho}
    \overline{\mathfrak{F}_j(\rho_j)}(x) := \sum_{k=1}^n \mathfrak{F}_j(\rho_j)(x_k)\chi_{I_k}(x) \quad\forall x\in [0,1],
\end{equation}
where, recall that $I_k= [(k-1)/n, k/n) = [x_k, x_{k+1})$. Note that, $\overline{\mathfrak{F}_j(\rho_j)}(x_k)=\mathfrak{F}_j(\rho_j)(x_k)$ for $k=1,\cdots,n$. Moreover, by a combination of \eqref{eq:GNNrecall}, \eqref{def:extendFrho} and definition \eqref{def:barPsinf},
\begin{equation}\label{GNNextend}
    \begin{split}
        \overline{\Psi}_{n,f}(x) &= \sum_{k=1}^{n} \Psi_{n,f}(x_k)\chi_{I_k}(x) = \sum_{k=1}^{n}\bigg(\sum_{j=0}^{2N-1}f\Big(\frac{j}{2N}\Big)\mathfrak{F}_j(\rho_j)(x_k)\bigg)\chi_{I_k}(x)\\
        &= \sum_{j=0}^{2N-1}f\Big(\frac{j}{2N}\Big)\overline{\mathfrak{F}_j(\rho_j)}(x).
    \end{split}
\end{equation}
With this, we can now formalize \eqref{handwaving}. 

\begin{lemma} \label{lem:kerneldiff} Let $\alpha, \beta$ be as prescribed by Theorem~\ref{thm:GNNdet}. There exists a universal constant $C>0$ such that, for each $k=1,\cdots,n$ and $j=0,\cdots, 2N-1$, we have
\begin{equation}\label{kerneldifferr}
    |T_{\mathcal{K}_j}\rho_j(x_k) - \overline{\mathfrak{F}_j(\rho_j)}(x_k)|\leq C\eta^{-1}(1+K)\bigg(\frac{N^3}{n^2} + \frac{N^{3-\alpha}}{n}\bigg).
\end{equation}
\end{lemma}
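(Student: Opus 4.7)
The plan is to treat $\mathfrak{F}(\rho_j)(x_k)$ as a left-endpoint Riemann sum approximation of the integral $T_\mathcal{K}\rho_j(x_k)$ that is missing the $l=k$ term, and to bound the discrepancy by standard quadrature estimates combined with Lemma~\ref{lem:Gsize}. Since $x_k \in I_k$, we have $\overline{\mathfrak{F}(\rho_j)}(x_k) = \mathfrak{F}(\rho_j)(x_k)$, so it suffices to estimate the pointwise difference. Setting $F(y) := \mathcal{G}^*(x_k,y)\,W(x_k,y)\,\rho_j(y)$ and splitting the integral over the partition $\{I_l\}_{l=1}^n$, I would write
$$T_\mathcal{K}\rho_j(x_k) - \mathfrak{F}(\rho_j)(x_k) = \frac{1}{\mathcal{W}_{x_k}}\bigg[\int_{I_k} F(y)\,\mathrm{d}y + \sum_{l \ne k}\int_{I_l}\big[F(y) - F(x_l)\big]\,\mathrm{d}y\bigg].$$
Both terms are then controlled by the same derivative estimate on $F$.

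Because $\mathcal{G}_{r,\sigma}^{(j)}$ vanishes outside $[-r,r]$ by \eqref{def:Gtruncder}, $\mathcal{G}^*(x_k,\cdot)$ is supported in $[x_k-r,x_k+r]$; this restricts the off-diagonal sum to only $O(rn)$ nonzero intervals. On each of them the mean-value theorem gives $|F(y) - F(x_l)| \leq \|F'\|_{L^\infty}/n$, and a product-rule computation using Lemma~\ref{lem:Gsize}(c),(d), the Lipschitz estimate \eqref{Kbd}, and the $1$-Lipschitz property of ReLU yields $\|F'\|_{L^\infty} \leq C(1+K)N^{3\beta+1}$. Consequently the off-diagonal sum is bounded by $C(1+K)\cdot rn \cdot N^{3\beta+1}/n^2 \leq C(1+K) N^{3\beta-\alpha+1}/n$, using $r \sim N^{-\alpha}$. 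For the $l=k$ piece I would add and subtract $F(x_k)/n$: the residual $\int_{I_k}[F(y)-F(x_k)]\,\mathrm{d}y$ is bounded by $\|F'\|_{L^\infty}/(2n^2) \leq C(1+K)N^{3\beta+1}/n^2$, producing the first term of \eqref{kerneldifferr}, while the leftover $|F(x_k)|/n \leq CN^{3\beta}/n$ is dominated by $CN^{3\beta-\alpha+1}/n$ since $\alpha<1$ and is absorbed into the second term. Dividing by $\mathcal{W}_{x_k} \geq \eta$ concludes the argument.

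The main obstacle I anticipate is the derivative bound $\|F'\|_{L^\infty}\leq C(1+K)N^{3\beta+1}$: it requires combining the $L^\infty$ control on $(s_N\mathcal{G}_{r,\sigma})''$ and $(s_N\mathcal{G}_{r,\sigma})'''$ from Lemma~\ref{lem:Gsize} with the one-sided Lipschitz estimate for $W$, which only holds inside the diagonal region $\mathcal{D}_\kappa$. To apply it at every relevant pair $(x_k,y)$ I must invoke the standing hypothesis \eqref{rkappa} that $r\leq\kappa$, ensuring that the support of $\mathcal{G}^*(x_k,\cdot)$ sits inside $\mathcal{D}_\kappa$. A secondary subtlety is the mismatch between the integral, which carries the value $W(x_k,x_k)$ implicitly at $y=x_k$, and the graph filter, where $[{\bf A}_n]_{k,k}=0$; this is precisely what produces the residual $F(x_k)/n$, and the argument succeeds because the condition $\alpha<1$ keeps it strictly lower-order.
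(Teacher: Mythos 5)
Your decomposition
\begin{equation*}
T_\mathcal{K}\rho_j(x_k) - \mathfrak{F}(\rho_j)(x_k) = \frac{1}{\mathcal{W}_{x_k}}\bigg[\int_{I_k} F(y)\,\mathrm{d}y + \sum_{l \ne k}\int_{I_l}\big(F(y) - F(x_l)\big)\,\mathrm{d}y\bigg], \qquad F(y)=\mathcal{G}^*(x_k,y)W(x_k,y)\rho_j(y),
\end{equation*}
is exact and yields the claimed bound, but it takes a genuinely different route from the paper. The paper first replaces $\rho_j$ by the step function $\overline{\rho}_j$, bounds $T_\mathcal{K}(\rho_j-\overline{\rho}_j)$ using $\|\rho_j-\overline{\rho}_j\|_{L^1}\le 1/n$, and then estimates $T_\mathcal{K}\overline{\rho}_j-\overline{\mathfrak{F}(\rho_j)}$ interval by interval, explicitly separating two boundary intervals ($D_1,D_4$), the diagonal $l=k$ interval ($D_2$), and the interior intervals ($D_3$). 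You instead view $\mathfrak{F}(\rho_j)(x_k)$ directly as a left-endpoint Riemann sum for $T_\mathcal{K}\rho_j(x_k)$ missing its $l=k$ term, and bound the quadrature error in one shot through a Lipschitz bound on the full integrand $F$, using Lemma~\ref{lem:Gsize}(c),(d) and \eqref{Kbd} for $\|F'\|_{L^\infty}\le C(1+K)N^{3\beta+1}$. Your version is more streamlined (one decomposition instead of two), and your treatment of the missing diagonal term and its residual $|F(x_k)|/n\le CN^{3\beta}/n \le CN^{3\beta-\alpha+1}/n$ is correct; the standing hypothesis $r\le\kappa$ is invoked in exactly the right place.

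The one step that needs tightening is the claim that the mean-value theorem gives $|F(y)-F(x_l)|\le\|F'\|_{L^\infty}/n$ on \emph{every} nonzero off-diagonal interval. On the (at most two) intervals $I_l$ that straddle the support endpoints $y=x_k\pm r$ of $\mathcal{G}^*(x_k,\cdot)$, the hard truncation in \eqref{def:Gtruncder} makes $F$ jump discontinuously, so $F$ is not Lipschitz across $x_k\pm r$ and the MVT bound does not directly apply there. The gap is harmless: the one-sided value $(s_N\mathcal{G}_{r,\sigma})''(\pm r^{\mp})$ is exponentially small in $N$ (each factor $\mathcal{G}_\sigma^{(j)}(\pm r)$ carries an $e^{-3\pi^2(N-\mathfrak{m})^{2(\beta-\alpha)}}$ factor, cf.\ \eqref{G(r)}), so the jump of $F$ is negligible relative to the polynomial error you are targeting. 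But you should either say so explicitly, or — more robustly — bound these two boundary intervals crudely by $\|\mathcal{G}^*\|_{L^\infty}/n\le CN^{3\beta}/n$ and observe that this is absorbed into $N^{3\beta-\alpha+1}/n$ because $\alpha<1$, which is precisely how the paper handles $D_1$. With that clarification the argument is complete.
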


The proof of Lemma~\ref{lem:kerneldiff} is given in Subsection~\ref{sec:kerneldiff}, at the end of this section, so as for us to proceed with the proof of Theorem~\ref{thm:GNNdet}.

\begin{proof}[Proof of Theorem~\ref{thm:GNNdet}] Without loss of generality, we assume $\|f\|_{L^2([0,1];\mathbb{C}^m)}=1$ as usual. 
Let us first examine the difference
\begin{equation*}
    \bigg\| \sum_{j=0}^{2N-1} f\Big(\frac{j}{2N}\Big) \sum_{k=1}^n T_{\mathcal{K}_j}  \rho_j(x_k) \chi_{I_k} - \sum_{j=0}^{2N-1} f\Big(\frac{j}{2N}\Big) T_{\mathcal{K}_j} \rho_j \bigg\|_{L^1([r,1-r];\mathbb{C}^m)}.
\end{equation*}
It can be readily assumed that
\begin{equation}\label{1stthresholdforn}
    n>2N >N^{\alpha}.
\end{equation}
This condition implies that, for each $k=1,\cdots, n$, and for every $x\in I_k = [x_k, x_{k+1})$, there exist at most $\lceil rN\rceil + 1$ locations $j/2N\in \mathfrak{I}_{x_k}\cup\mathfrak{I}_x$. 
Note that $T_{\mathcal{K}_j}\rho_j(x_k) = 0 = T_{\mathcal{K}_j}\rho_j(x)$ if $j/2N\not\in \mathfrak{I}_{x_k}\cup\mathfrak{I}_x$.
Moreover, $I_k\subset\mathfrak{I}_{x_k}$. 
Fix $k=1,\cdots, n$, and let $x\in I_k$. 
Consider $j/2N\in \mathfrak{I}_{x_k}\cap \mathfrak{I}_x$. We attain that
\begin{align}
    \nonumber |T_{\mathcal{K}_j}\rho_j(x_k) - T_{\mathcal{K}_j}\rho_j(x)| &= \bigg| \int_0^1 \bigg(\frac{\mathcal{G}^*(x_k,y)W(x_k,y)}{\mathcal{W}_{x_k}} - \frac{\mathcal{G}^*(x,y)W(x,y)}{\mathcal{W}_{x}}\bigg)\rho_j(y)\,\mathrm{d}y\bigg|\\
    \nonumber &\leq \bigg| \int_{\mathfrak{I}_{x_k}\cap\mathfrak{I}_x} \bigg(\frac{\mathcal{G}^*(x_k,y)W(x_k,y)}{\mathcal{W}_{x_k}} - \frac{\mathcal{G}^*(x,y)W(x,y)}{\mathcal{W}_{x}}\bigg)\rho_j(y)\,\mathrm{d}y\bigg|\\
    \label{bulk} &\leq \frac{C\eta^{-2} KN^{2-2\alpha}}{n} + \frac{C\eta^{-1} N^{3-2\alpha}}{n} + \frac{C\eta^{-1}KN^{2-2\alpha}}{n},
\end{align}
due to the following. First, by \eqref{1stthresholdforn}, $|\mathfrak{I}_{x_k}\cap \mathfrak{I}_x|\leq 2r\leq CN^{-\alpha}$; thus
\begin{equation*}
    \int_{\mathfrak{I}_{x_k}\cap\mathfrak{I}_x} \rho_j(y)\,\mathrm{d}y \leq \int_0^{2r} y\, \mathrm{d}y \leq CN^{-2\alpha}.
\end{equation*}
Second, by Assumption~\ref{assump:regular}, $|W(x_k,y)-W(x,y)|\leq K/n$ for every $y\in \mathfrak{I}_{x_k}\cap\mathfrak{I}_x$. Third, by Lemma~\ref{lem:Gsize}, $|\mathcal{G}^*(x,y)|, |\mathcal{G}^*(x_k,y)|\leq CN^2$, and
\begin{equation*}
    |\mathcal{G}^*(x_k,y)-\mathcal{G}^*(x,y)|\leq \frac{CN^3}{n}.
\end{equation*}
Lastly, similar to the calculation in \eqref{Lip}
\begin{equation*}
    \bigg| \frac{1}{\mathcal{W}_{x_k}} - \frac{1}{\mathcal{W}_{x}} \bigg| \leq \eta^{-2}\bigg(|\mathcal{W}_{x_k}-W(x_k,x)|+|W(x_k,x)-\mathcal{W}_x|\bigg)\leq \frac{CK\eta^{-2}}{n}.
\end{equation*}
Next, we consider $j/2N$ in $(\mathfrak{I}_{x}\setminus\mathfrak{I}_{x_k})\cup (\mathfrak{I}_{x_k}\setminus\mathfrak{I}_{x})$. 
Note that, by \eqref{1stthresholdforn}, there can only be at most one $j/2N$ in $\mathfrak{I}_{x}\setminus\mathfrak{I}_{x_k}$ and one $j/2N$ in $\mathfrak{I}_{x_k}\setminus\mathfrak{I}_{x}\subset \mathfrak{I}_{x_k}\setminus\mathfrak{I}_{x_{k+1}}$.
In the former case, we obtain from Lemma~\ref{lem:Gsize}
\begin{equation} \label{end}
    \begin{split}
        |T_{\mathcal{K}_j} \rho_j(x_k) - T_{\mathcal{K}_j} \rho_j(x)| 
        =  
        |T_{\mathcal{K}_j} \rho_j(x)|
        &= \bigg| \int_{\mathfrak{I}_{x}\setminus\mathfrak{I}_{x_k}} \bigg(\frac{\mathcal{G}^*(x,y)W(x,y)}{\mathcal{W}_{x}}\bigg)\rho_j(y)\,\mathrm{d}y\bigg| \\
        &\leq C\eta^{-1} N^2 \int_0^{1/n} y\,\mathrm{d}y\leq \frac{C\eta^{-1}N^2}{n^2}.
    \end{split}
\end{equation}
However, in the latter case, we have
\begin{equation} \label{troubleend}
    \begin{split}
        |T_{\mathcal{K}_j} \rho_j(x_k) - T_{\mathcal{K}_j} \rho_j(x)| 
        =  
        |T_{\mathcal{K}_j} \rho_j(x_k)|
        &= \bigg| \int_{\mathfrak{I}_{x_k}} \bigg(\frac{\mathcal{G}^*(x_k,y)W(x_k,y)}{\mathcal{W}_{x_k}}\bigg)\rho_j(y)\,\mathrm{d}y\bigg| \\
        &\leq C\eta^{-1} N^2 \int_0^{2r} y\,\mathrm{d}y\leq C\eta^{-1}N^{2-2\alpha}.
    \end{split}
\end{equation}
Nevertheless, there are at most $2N$ intervals $I_{k}$ for which there exists one $j/2N \in \mathfrak{I}_{x_k}\setminus\mathfrak{I}_{x_{k+1}}$. 
Let $\mathcal{S}_1$ denote the union of such intervals.
We gather from \eqref{bulk}, \eqref{end}, \eqref{troubleend} and the normalization of $f$ that
\begin{align} \label{troubleset}
    \nonumber \bigg\|\sum_{j=0}^{2N-1} f\Big(\frac{j}{2N}\Big) \sum_{k=1}^n T_{\mathcal{K}_j} & \rho_j(x_k) \chi_{I_k} - \sum_{j=0}^{2N-1} f\Big(\frac{j}{2N}\Big) T_{\mathcal{K}_j} \rho_j \bigg\|_{L^1(\mathcal{S}_1;\mathbb{C}^m)} \\
    \nonumber &\leq  \bigg\|\sum_{k=1}^n \big( T_{\mathcal{K}_j} \rho_j(x_k)\chi_{I_k} - T_{\mathcal{K}_j} \rho_j\chi_{I_k}\big) \sum_{j=0}^{2N-1} f\Big(\frac{j}{2N}\Big) \bigg\|_{L^1(\mathcal{S}_1;\mathbb{C}^m)} \\
    &\leq C\eta^{-2}(1+K)\,\frac{N^{7/2-5\alpha/2}}{n}.  
\end{align}
Let $\mathcal{S}_2:= [r,1-r]\setminus\mathcal{S}_1$. 
Let $I_k=[x_k, x_{k+1})$ be such that $(x_k, x_{k+1})\cap \mathcal{S}_2\not=\emptyset$.
Then we deduce from construction, \eqref{1stthresholdforn}, \eqref{bulk}, \eqref{end}, that for every $x\in I_k = [x_k, x_{k+1})$, 
\begin{equation} \label{niceset}
    \begin{split}
        \sum_{j=0}^{2N-1}|f\Big(\frac{j}{2N}\Big)|\, |T_{\mathcal{K}_j}\rho_j(x)-T_{\mathcal{K}_j}\rho_j(x_k)|
        &\leq C\eta^{-2} (1+K) \sum_{j=0}^{2N-1}|f\Big(\frac{j}{2N}\Big)| \, \frac{N^{3-2\alpha}}{n} \\
        &\leq C\eta^{-2} (1+K) \, \frac{N^{7/2-5\alpha/2}}{n}.
    \end{split}
\end{equation}
Therefore, integrating \eqref{troubleset}, \eqref{niceset}, we obtain
\begin{multline} \label{variationdiff}
    \bigg\|\sum_{j=0}^{2N-1} f\Big(\frac{j}{2N}\Big) \sum_{k=1}^n T_{\mathcal{K}_j} \rho_j(x_k) \chi_{I_k} - \sum_{j=0}^{2N-1} f\Big(\frac{j}{2N}\Big) T_{\mathcal{K}_j} \rho_j \bigg\|_{L^1([r,1-r];\mathbb{C}^m)} \\
    \leq C\eta^{-2} (1+K) \, \frac{N^{7/2-5\alpha/2}}{n}.
\end{multline}
Moving on, we examine the difference
\begin{equation*}
    \bigg\| \sum_{j=0}^{2N-1} f\Big(\frac{j}{2N}\Big) \sum_{k=1}^n T_{\mathcal{K}_j} \rho_j(x_k) \chi_{I_k} - \sum_{j=0}^{2N-1}f\Big(\frac{j}{2N}\Big)\overline{\mathfrak{F}_j(\rho_j)} \bigg\|_{L^1([r,1-r];\mathbb{C}^m)}.
\end{equation*}
For each $k=1,\cdots, n$, it follows from the definition that $T_{\mathcal{K}_j} \rho_j(x_k)=0=\mathfrak{F}_j(\rho_j)(x_k)$ if $|x_k-j/2N|>r$. Hence, there are only at most $\lceil rN\rceil$ indices $j$ partaking in the following sum,
\begin{equation} \label{filtererror}
    \sum_{j=0}^{2N-1}|f\Big(\frac{j}{2N}\Big)|\, |T_{\mathcal{K}_j} \rho_j(x_k) - \overline{\mathfrak{F}_j(\rho_j)}(x_k)| 
    \leq C\eta^{-1}(1+K)N^{1/2-\alpha/2}\,\frac{N^{3 -\alpha}}{n}, 
\end{equation}
where the inequality follows from \eqref{1stthresholdforn} and Lemma~\ref{lem:kerneldiff}.
In turn, \eqref{filtererror} implies 
\begin{multline} \label{constantdiff}
    \bigg\| \sum_{j=0}^{2N-1} f\Big(\frac{j}{2N}\Big) \sum_{k=1}^n T_{\mathcal{K}_j} \rho_j(x_k) \chi_{I_k} - \sum_{j=0}^{2N-1}f\Big(\frac{j}{2N}\Big)\overline{\mathfrak{F}_j(\rho_j)} \bigg\|_{L^1([r,1-r];\mathbb{C}^m)} \\
    \leq C\eta^{-1}(1+K)\,\frac{N^{7/2 - 3\alpha/2}}{n}.
\end{multline}
Together, \eqref{variationdiff}, \eqref{constantdiff} deliver
\begin{equation*}
    \bigg\| \sum_{j=0}^{2N-1} f\Big(\frac{j}{2N}\Big) T_{\mathcal{K}_j} \rho_j - \sum_{j=0}^{2N-1}f\Big(\frac{j}{2N}\Big)\overline{\mathfrak{F}_j(\rho_j)} \bigg\|_{L^1([r,1-r];\mathbb{C}^m)} 
    \leq C\eta^{-2}(1+K)\,\frac{N^{7/2 - 3\alpha/2}}{n}.
\end{equation*}
Recalling that $\Psi_f = \sum_{j=0}^{2N-1} f\Big(\frac{j}{2N}\Big)T_{\mathcal{K}_j} \rho_j$, and from \eqref{GNNextend}, that $\overline{\Psi}_{n,f} = \sum_{j=0}^{2N-1}f\Big(\frac{j}{2N}\Big)\overline{\mathfrak{F}_j(\rho_j)}$, we conclude
\begin{equation*}
    \| \Psi_f - \overline{\Psi}_{n,f} \|_{L^1([r,1-r];\mathbb{C}^m)}
    \leq C\eta^{-2}(1+K)\,\frac{N^{7/2 - 3\alpha/2}}{n}.
\end{equation*}
An application of Theorem~\ref{thm:WNN} then gives
\begin{align} \label{nearfinGNNdet2}
    \nonumber \| \overline{\Psi}_{n,f} - f \|_{L^1([r,1-r];\mathbb{C}^m)} &\leq \| \overline{\Psi}_{n,f} - \Psi_f \|_{L^1([r,1-r];\mathbb{C}^m)} + \| \Psi_f - f \|_{L^1([r,1-r];\mathbb{C}^m)} \\
    \nonumber & \leq \| \overline{\Psi}_{n,f} - \Psi_f \|_{L^1([r,1-r];\mathbb{C}^m)} + \| \Psi_f - f \|_{L^2([r,1-r];\mathbb{C}^m)}\\
    &\leq C\eta^{-2}(1+K)\,\frac{N^{7/2 - 3\alpha/2}}{n} + C\eta^{-1}K\varepsilon.
\end{align}
By imposing $n\geq \frac{N^{7/2-3\alpha/2}}{\varepsilon}$, we can take $n=\lceil \varepsilon^{-10/3}\rceil = \lceil N^3 \rceil$. 
Putting this back in \eqref{nearfinGNNdet2} yields
\begin{equation*}
    \|\overline{\Psi}_{n,f}-f\|_{L^1([r,1-r];\mathbb{C}^m)} \leq C\eta^{-2}(1+K)\varepsilon.
\end{equation*}
The proof is now completed.
\end{proof} 

\subsection{Proof of Lemma~\ref{lem:kerneldiff}} \label{sec:kerneldiff}

For each $j=0,\cdots, 2N-1$, define
\begin{equation*}
    \overline{\rho}_j(x) :=\sum_{k=1}^n \rho_j(x_k)\chi_{I_k}(x), \quad\forall x\in [0,1].
\end{equation*}
We write
\begin{equation}\label{splitdiff}
    T_{\mathcal{K}_j} \rho_j - \overline{\mathfrak{F}_j(\rho_j)} = T_{\mathcal{K}_j} \rho_j - T_{\mathcal{K}_j} \overline{\rho}_j + T_{\mathcal{K}_j} \overline{\rho}_j - \overline{\mathfrak{F}_j (\rho_j)}.
\end{equation}
The difference $T_{\mathcal{K}_j} \rho_j - T_{\mathcal{K}_j} \overline{\rho}_j$ is straightforward to address. In fact, from Lemma~\ref{lem:Gsize} we obtain
\begin{equation}\label{1stdiff}
    \|T_{\mathcal{K}_j} (\rho_j-\overline{\rho}_j)\|_{L^{\infty}([0,1])} 
    \leq \eta^{-1}\|\mathcal{G}^*\|_{L^{\infty}([0,1])}\|\rho_j-\overline{\rho}_j\|_{L^1([0,1])} 
    \leq \frac{C\eta^{-1} N^2}{n}. 
\end{equation} 
The second to last inequality above is due to
\begin{equation*}
    \|\rho_j-\overline{\rho}_j\|_{L^1([0,1])} \leq \sum_{k=1}^n \frac{1}{n^2} = \frac{1}{n},
\end{equation*}
as 
the maximum difference $|\rho_j(x) - \rho_j(x_k)| \leq 1/n$ for every $x\in I_k = [(k-1)/n, k/n)$.

The second difference $T_{\mathcal{K}_j}\overline{\rho}_j - \overline{\mathfrak{F}(\rho_j)}$ on the right-hand side of \eqref{splitdiff} requires a more delicate treatment. Fix $k=1,\cdots, n$, and let $j=0,\cdots, 2N-1$. Since $|x_k-j/2N|>r$ implies $T_{\mathcal{K}_j} \rho_j(x_k)=0=\mathfrak{F}_j(\rho_j)(x_k)$, we only consider indices $j$ such that $|x_k-j/2N|\leq r$. 
Utilizing \eqref{def:extendFrho} at the beginning of this section, we get
\begin{multline} \label{2nddiff}
    |T_{\mathcal{K}_j} \overline{\rho}_j(x_k) - \overline{\mathfrak{F}(\rho_j)}(x_k)| \\
    \leq \frac{1}{\mathcal{W}_{x_k}}\sum_{l=1}^n \bigg|\int_{(l-1)/n}^{l/n} \big(\mathcal{G}^*(x_k,y)W(x_k,y)-\mathcal{G}^*(x_k,x_l)[{\bf A}_n]_{k,l}\big) \rho_j(x_l)\,\mathrm{d}y\bigg|.
\end{multline}
Observe that at most $\lceil 2rn\rceil$ indices $l$ participate in the sum above; namely those that satisfy
\begin{equation}\label{relevantind}
    I_l=\bigg[\frac{l-1}{n}, \frac{l}{n}\bigg) \cap \mathfrak{I}_{x_k} \not= \emptyset.
\end{equation}
Among all the indices that satisfy \eqref{relevantind}, there will be at most two ``boundary" $l$ such that
\begin{equation}\label{boundaryl}
    I_l = \bigg[\frac{l-1}{n}, \frac{l}{n}\bigg) \not\subset \mathfrak{I}_{x_k},
\end{equation}
on at most one of which,
\begin{equation}\label{leftboundary1}
    \mathcal{G}^*(x_k,x_l) = 0.
\end{equation}
Let $l^*$ denote this index and $l_*$ denote the other index that satisfies \eqref{boundaryl}. Let $S$ be the set of the remaining, ``interior", indices, i.e., those $l$ that satisfy \eqref{relevantind} but not \eqref{boundaryl}. Among these include $l=k$, in which case it follows directly from the definition that
\begin{equation}\label{leftboundary2}
    [{\bf A}_n]_{k,l} = [{\bf A}_n]_{k,k} = 0.
\end{equation}
We estimate an upper bound for \eqref{2nddiff} for each index $l$, using the results of Lemma~\ref{lem:Gsize}.
Consider first
\begin{equation*} 
    \begin{split}
        D_1 &:=\frac{1}{\mathcal{W}_{x_k}} \bigg|\int_{(l^*-1)/n}^{l^*/n} (\mathcal{G}^*(x_k,y)W(x_k,y)-\mathcal{G}^*(x_k,x_{l^*})[{\bf A}_n]_{k,l^*})\rho_j(x_{l^*})\,\mathrm{d}y\bigg|\\
        &= \frac{1}{\mathcal{W}_{x_k}} \bigg|\int_{(l^*-1)/n}^{l^*/n} \mathcal{G}^*(x_k,y)W(x_k,y)\rho_j(x_{l^*})\,\mathrm{d}y\bigg|,
    \end{split}
\end{equation*}
due to \eqref{leftboundary1}. We find obtain
\begin{equation}\label{D1bd}
    D_1 \leq \frac{1}{\mathcal{W}_{x_k}}\int_{(l^*-1)/n}^{l^*/n} |\mathcal{G}^*(x_k,y)W(x_k,y)||\rho_j(x_{l^*})|\,\mathrm{d}y \leq \frac{C\eta^{-1}N^2}{n}.
\end{equation}
Likewise, by virtue of \eqref{leftboundary2},
\begin{equation}\label{def:D2bd}
    \begin{split}
        D_2 &:=\frac{1}{\mathcal{W}_{x_k}} \bigg|\int_{(k-1)/n}^{k/n} (\mathcal{G}^*(x_k,y)W(x_k,y)-\mathcal{G}^*(x_k,x_k)[{\bf A}_n]_{k,k})\rho_j(x_k)\,\mathrm{d}y\bigg|\\
        &= \frac{1}{\mathcal{W}_{x_k}} \bigg|\int_{(k-1)/n}^{k/n} \mathcal{G}^*(x_k,y)W(x_k,y)\rho_j(x_k)\,\mathrm{d}y\bigg|\\
        &\leq \frac{C\eta^{-1}N^2}{n}.
    \end{split}
\end{equation}
Next, let
\begin{equation}\label{def:D3}
    D_3:=\frac{1}{\mathcal{W}_{x_k}}\sum_{l\not=k, l\in S} \bigg|\int_{(l-1)/n}^{l/n} (\mathcal{G}^*(x_k,y)W(x_k,y)-\mathcal{G}^*(x_k,x_l)[{\bf A}_n]_{k,l})\rho_j(x_l)\,\mathrm{d}y\bigg|.
\end{equation}
We estimate, on one of these intervals $I_l$, that
\begin{equation}\label{splitcal}
    \begin{split}
        &|\mathcal{G}^*(x_k,x_l)[{\bf A}_n]_{k,l}-\mathcal{G}^*(x_k,y)W(x_k,y)| \\
        &\leq |\mathcal{G}^*(x_k,x_l)-\mathcal{G}^*(x_k,y)| + |[{\bf A}_n]_{k,l}-W(x_k,y)|\, |\mathcal{G}^*(x_k,y)| \\
        &= |\mathcal{G}^*(x_k,x_l)-\mathcal{G}^*(x_k,y)| + |W(x_k, x_l)-W(x_k,y)|\, |\mathcal{G}^*(x_k,y)| \leq 
        \frac{CN^3  }{n} + \frac{CKN^2}{n},
    \end{split}
\end{equation}
by using Assumption~\ref{assump:regular}. Plugging this back in \eqref{def:D3}, we gain
\begin{equation}\label{D3finbd}
    D_3 \leq \eta^{-1}(1+K)\bigg(\sum_{l\not=k, l\in S}\frac{1}{n}\bigg(\frac{N^3}{n}\bigg)\bigg) \leq C\eta^{-1}(1+K)\,\frac{N^{3 -\alpha}}{n},
\end{equation}
since $\texttt{\#} S\leq\lceil 2rn \rceil$. Finally, let
\begin{equation*}
    D_4:=\frac{1}{\mathcal{W}_{x_k}}\bigg|\int_{(l_*-1)/n}^{l_*/n} (\mathcal{G}^*(x_k,y)W(x_k,y)-\mathcal{G}^*(x_k,x_{l_*})[{\bf A}_n]_{k,l_*})\rho_j(x_{l_*})\,\mathrm{d}y\bigg|.
\end{equation*}
Then following a similar calculation as in \eqref{splitcal}, we derive
\begin{equation*}
    |\mathcal{G}^*(x_k,x_{l_*})[{\bf A}_n]_{k,l_*}-\mathcal{G}^*(x_k,y)W(x_k,y)| \leq \frac{CN^3}{n} + \frac{CKN^2}{n}
\end{equation*}
and so
\begin{equation}\label{D4bd}
    D_4\leq C\eta^{-1}(1+K)\,\frac{N^3}{n^2}. 
\end{equation}
Taking into account \eqref{splitdiff}, \eqref{1stdiff}, \eqref{2nddiff}, \eqref{D1bd}, \eqref{def:D2bd}, \eqref{D3finbd}, \eqref{D4bd}, we arrive at \eqref{kerneldifferr}, as desired. \qed

\section{Proof of Theorem~\ref{thm:GNNran}} \label{sec:GNNran}

We maintain that \eqref{rkappa} holds. Furthermore, we will continue using the shorthand abbreviation defined in the previous section
\begin{equation*}
    \mathcal{G}^*(x,y)=\frac{\mathrm{d}^2}{\mathrm{d} y^2}(s_N\,\mathcal{G}_{r,\sigma})(x-y).
\end{equation*}
The proof will be a straightforward application of the bounded-difference inequality, see Theorem~\ref{thm:vershynin} below, and Theorem~\ref{thm:WNN}. As usual, we begin with some preliminary arrangements. 

First, recall that in the case of a simple random graph $G^{\mathrm{ran}}_n$ generated from a graphon $W$, the event of two vertices $v_k, v_l$ being connected by an edge is a random variable 
\begin{equation}\label{Ber}
    \xi_{k,l}\sim\mathrm{Bernoulli}(W(x_k, x_l)), \quad k > l.
\end{equation}
Since $W(x_k, x_l)=W(x_l,x_k)$, we have an undirected symmetry $\xi_{lk} = \xi_{k,l}$; 
nevertheless, each of the sets
\begin{equation*}
    \{\xi_{k,l}: k,l=1,\cdots,n \text{ s.t } k>l\} \quad\text{ and }\quad \{\xi_{k,l}: k,l=1,\cdots,n \text{ s.t } k<l\}
\end{equation*}
is a set of independent random variables. A realization of the random graph $G^{\mathrm{ran}}_n$ is then a simple graph whose associated adjacency matrix ${\bf A}_n$ satisfies
\begin{equation*}
    [{\bf A}_n]_{k,l}\in\{0,1\}.
\end{equation*}
Using this matrix, the graph filter $\mathfrak{F}_j$'s application on a graph signal $g_n$ has the value
\begin{equation*}
    \mathfrak{F}_j(g_n)(x_k) = \frac{\chi_{\{x_k - r \leq j/2N\}}}{n\mathcal{W}_{x_k}}
    \sum_{ l=1}^n\mathcal{G}^*(x_k,x_l)[{\bf A}_n]_{k,l}g_n(x_l).
\end{equation*}
Until $\xi_{k,l}$ are all realized, the quantity
\begin{equation}\label{ranfrakF}
    \mathfrak{F}_j^{\mathrm{ran}}(g_n)(x_k) := \frac{\chi_{\{x_k - r \leq j/2N\}}}{n\mathcal{W}_{x_k}}
    \sum_{l\not=k, l=1}^n\mathcal{G}^*(x_k,x_l)\xi_{k,l} g_n(x_l)
\end{equation}
is a random variable for each $k=1,\cdots,n$. 
Fix one such $k$. It suffices to consider $j=0,\cdots, 2N-1$ be such that $|x_k-j/2N|\leq r$. Then $\chi_{\{x_k - r \leq j/2N\}}=1$. Set $g_n = \rho_j$ in \eqref{ranfrakF}. 
We define the random variables
\begin{equation} \label{splitvar}
    \begin{split}
         Y_{j, \mathcal{L}_k} &:= \frac{1}{n\mathcal{W}_{x_k}}\sum_{l\in\mathcal{L}_k}\mathcal{G}^*(x_k,x_l)\xi_{k,l}\rho_j(x_l)\\
        Y_{j, \mathcal{U}_k} &:= \frac{1}{n\mathcal{W}_{x_k}}\sum_{l\in\mathcal{U}_k}\mathcal{G}^*(x_k,x_l)\xi_{k,l}\rho_j(x_l),
    \end{split}
\end{equation}
where 
\begin{align*} 
    \mathcal{L}_k &:=\{l<k: \mathcal{G}^*(x_k,x_l)>0, \rho_j(x_l)>0\}\\
    \mathcal{U}_k &:=\{l>k: \mathcal{G}^*(x_k,x_l)>0, \rho_j(x_l)>0\}.
\end{align*}
Note from \eqref{splitvar} that 
\begin{equation}\label{size}
    \texttt{\#}\mathcal{L}_k, \,\, \texttt{\#}\mathcal{U}_k \leq \lceil nr\rceil. 
\end{equation}
We first focus on $Y_{j, \mathcal{L}_k}$. Observe from \eqref{Ber} that
\begin{equation*} 
    \mathbb{E}[Y_{j, \mathcal{L}_k}] = \frac{1}{n\mathcal{W}_{x_k}}\sum_{l\in\mathcal{L}_k}\mathcal{G}^*(x_k,x_l)W(x_k,x_l)\rho_j(x_l).
\end{equation*}
Moreover, $Y_{j, \mathcal{L}_k}$ has the form of a function $g_{\mathcal{L}_k}$ of $L_k := \texttt{\#}\mathcal{L}_k$ one-dimensional variables in $[0,1]$, applied to $\{\xi_{k,l}: l\in\mathcal{L}_k\}$, where
\begin{equation*}
    g_{\mathcal{L}_k}((z_l)_{l\in\mathcal{L}_k}):= \frac{1}{n\mathcal{W}_{x_k}}\sum_{l\in\mathcal{L}_k}\mathcal{G}^*(x_k,x_l) \rho_j(x_l)z_l.
\end{equation*}
Such a function $g_{\mathcal{L}_k}$ is of bounded difference. Indeed, let $z, z'$ be two $L_k$-tuples that differ at only one coordinate entry. Then by \eqref{rkappa}, Assumption~\ref{assump:regular}, and Lemma~\ref{lem:Gsize},
\begin{equation}\label{gL}
    |g_{\mathcal{L}_k}(z)-g_{\mathcal{L}_k}(z')|\leq \frac{C\eta^{-1}N^{2}}{n}. 
\end{equation}
As mentioned earlier, the central element of our proof is the bounded-difference inequality, presented as follows. 

\begin{theorem} (adapted from \citep[Theorem 2.9.1]{vershynin2018high}) \label{thm:vershynin} Let $X_1, \cdots, X_L$ be independent random variables. Let $g:\mathbb{R}^L\to\mathbb{R}$. Assume that the value of $g(x)$ can be changed by at most $c_j>0$ under an arbitrary change of a single $j$th coordinate of $x\in\mathbb{R}^L$. Then for any $\delta>0$, we have
\begin{equation*}
    \mathbb{P}(|g(X)-\mathbb{E}[g(X)]|>\delta)\leq 2\exp\bigg(-\frac{2\delta^2}{\sum_{j=1}^L c_j^2}\bigg)
\end{equation*}
where $X=(X_1,\cdots,X_L)$.
\end{theorem}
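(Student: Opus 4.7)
The plan is to prove the inequality by the classical Doob--McDiarmid martingale method, which reduces the stated concentration of $g(X)$ around its mean to an Azuma--Hoeffding-type estimate on a telescoping sum of bounded martingale differences.

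First, I introduce the Doob filtration $\mathcal{F}_k := \sigma(X_1,\ldots,X_k)$, with $\mathcal{F}_0$ trivial, and form the Doob martingale $M_k := \mathbb{E}[g(X)\mid \mathcal{F}_k]$, so that $M_0 = \mathbb{E}[g(X)]$, $M_L = g(X)$, and
\begin{equation*}
    g(X) - \mathbb{E}[g(X)] = \sum_{k=1}^L D_k, \qquad D_k := M_k - M_{k-1}.
\end{equation*}
Next I show that, conditionally on $\mathcal{F}_{k-1}$, the martingale difference $D_k$ takes values almost surely in an interval of length at most $c_k$. For this, set
\begin{equation*}
    h_k(x) := \mathbb{E}[g(X_1,\ldots,X_{k-1},x,X_{k+1},\ldots,X_L) \mid \mathcal{F}_{k-1}],
\end{equation*}
which, by the independence of $X_k,X_{k+1},\ldots,X_L$ from $\mathcal{F}_{k-1}$, is well-defined and coincides with an expectation over $(X_{k+1},\ldots,X_L)$ alone for every fixed $x$. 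The bounded-difference hypothesis, applied pointwise inside this expectation, yields $|h_k(x)-h_k(x')|\leq c_k$ for all $x,x'$, and hence $D_k = h_k(X_k) - \mathbb{E}[h_k(X_k)\mid \mathcal{F}_{k-1}]$ is centred and supported in an interval of length at most $c_k$. I then invoke Hoeffding's lemma in conditional form---for a centred random variable $U$ supported in an interval of length $\ell$, $\mathbb{E}[e^{\lambda U}\mid \mathcal{F}_{k-1}] \leq \exp(\lambda^2 \ell^2/8)$---and iterate via the tower property to obtain
\begin{equation*}
    \mathbb{E}\bigl[\exp(\lambda(g(X)-\mathbb{E}[g(X)]))\bigr] \leq \exp\Bigl(\tfrac{\lambda^2}{8}\textstyle\sum_{k=1}^L c_k^2\Bigr), \quad \lambda > 0.
\end{equation*}
A Chernoff optimization with $\lambda = 4\delta / \sum_k c_k^2$ then produces $\mathbb{P}(g(X) - \mathbb{E}[g(X)] > \delta) \leq \exp(-2\delta^2/\sum_k c_k^2)$; applying the same argument to $-g$ and combining via a union bound yields the stated two-sided inequality with the multiplicative factor of $2$.

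The main obstacle is the verification of the bounded-difference bound on $D_k$ in the second step, which is precisely where the independence of $X_1,\ldots,X_L$ is essential: without it the conditional expectation defining $h_k$ does not factor as an expectation over $(X_{k+1},\ldots,X_L)$ holding $X_k$ fixed, and so the bounded-difference hypothesis on $g$ can no longer be transferred pointwise to $h_k$. Hoeffding's lemma and the Chernoff-bound optimization are classical and can be quoted directly from a standard reference on concentration of measure, for instance \cite{vershynin2018high}; the sharper constant $2$ (as opposed to $1/2$) in the exponent comes from using the interval-length form of Hoeffding's lemma rather than merely $|D_k|\leq c_k$.
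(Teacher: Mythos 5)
The paper quotes this as a known result (McDiarmid's bounded-difference inequality, cited from Vershynin's Theorem 2.9.1) and gives no proof of its own, so there is no internal argument to compare against. Your proof via the Doob martingale decomposition, the pointwise transfer of the bounded-difference hypothesis to the conditional functions $h_k$ using independence, the interval-length form of Hoeffding's lemma, and Chernoff optimization is the standard derivation, and it is correct and complete, including the union bound that produces the factor of $2$.
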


\begin{proof}[Proof of Theorem~\ref{thm:GNNran}]
Let $\varepsilon\in (0,1)$. Applying Theorem~\ref{thm:vershynin} to $Y_{j, \mathcal{L}_k}$, using \eqref{size}, \eqref{gL}, we acquire
\begin{equation*}
    \mathbb{P}(|Y_{j, \mathcal{L}_k}-\mathbb{E}[Y_{j, \mathcal{L}_k}]|>\varepsilon)
    \leq 2\exp\bigg(-\frac{C\varepsilon^2\eta^2 n^2}{N^4 \lceil nr\rceil}\bigg)\leq 2\exp\bigg(-\frac{C\varepsilon^2\eta^2 n}{N^{4-\alpha}}\bigg).
\end{equation*}
A similar argument in the case of $\mathcal{U}_k$ would also result in
\begin{equation*}
    \mathbb{P}(|Y_{j, \mathcal{U}_k}-\mathbb{E}[Y_{j, \mathcal{U}_k}]|>\varepsilon)
    \leq 2\exp\bigg(-\frac{C\varepsilon^2\eta^2 n}{N^{4-\alpha}}\bigg).
\end{equation*}
Since $Y_{j, \mathcal{L}_k}+Y_{j, \mathcal{U}_k}=\mathfrak{F}^{\mathrm{ran}}_j(\rho_j)(x_k)$, we conclude that
\begin{equation}\label{basicReLU}
    \mathbb{P}(|\mathfrak{F}^{\mathrm{ran}}_j(\rho_j)(x_k)-\mathbb{E}[\mathfrak{F}^{\mathrm{ran}}_j(\rho_j)(x_k)]|>\varepsilon)\leq 4\exp\bigg(-\frac{C\varepsilon^2\eta^2 n}{N^{4-\alpha}}\bigg),
\end{equation}
for every $x_k\in \mathcal{X}_n$ and each $j = 0,\dots,2N - 1$ such that $|x_k - j/2N|\leq r$. 
Recall the normalization of $f$. 
Then summing over all such $j$ gives
\begin{align}
    \nonumber &\mathbb{P}\bigg(\bigg|\sum_{j=0}^{2N-1}f\Big(\frac{j}{2N}\Big)\mathfrak{F}^{\mathrm{ran}}_j(\rho_j)(x_k)-\sum_{j=0}^{2N-1}f\Big(\frac{j}{2N}\Big)\mathbb{E}[\mathfrak{F}^{\mathrm{ran}}_j(\rho_j)(x_k)]\bigg|>\varepsilon\bigg)\\
    \nonumber \leq &\mathbb{P}\bigg(\sum_{j=0}^{2N-1} |f\Big(\frac{j}{2N}\Big)||\mathfrak{F}^{\mathrm{ran}}_j(\rho_j)(x_k)-\mathbb{E}[\mathfrak{F}^{\mathrm{ran}}_j(\rho_j)(x_k)]|>\varepsilon\bigg)\\
    \nonumber \leq &\mathbb{P}\bigg(\bigg(\sum_{j=0}^{2N-1} |\mathfrak{F}^{\mathrm{ran}}_j(\rho_j)(x_k)-\mathbb{E}[\mathfrak{F}^{\mathrm{ran}}_j(\rho_j)(x_k)]|^2\bigg)^{1/2}>\varepsilon\bigg)\\
    \nonumber \leq &\sum_{j=0}^{2N-1} \mathbb{P}\bigg(|\mathfrak{F}^{\mathrm{ran}}_j(\rho_j)(x_k)-\mathbb{E}[\mathfrak{F}^{\mathrm{ran}}_j(\rho_j)(x_k)]|>\varepsilon (rN)^{-1/2}\bigg)\\
    \label{chainofprobs} \leq & CN^{1-\alpha} \exp\bigg(-\frac{C\varepsilon^2\eta^2 n}{N^{5-2\alpha}}\bigg).
\end{align}
The second inequality above follows from the Cauchy-Schwarz inequality, the third from the $L^p$-embedding for finite measures, and the last from \eqref{basicReLU}. Note that, by design
\begin{equation*}
    \sum_{j=0}^{2N-1}f\Big(\frac{j}{2N}\Big)\mathfrak{F}^{\mathrm{ran}}_j(\rho_j)(x_k)
\end{equation*}
is the GNN network output $\Psi_{n,f}$ on the random graph $G^{\mathrm{ran}}_n$ (see \eqref{GNNoutput}). In this case
\begin{equation*}
    \overline{\Psi}_{n,f}(x) = \sum_{k=1}^n \bigg(\sum_{j=0}^{2N-1}f\Big(\frac{j}{2N}\Big)\mathfrak{F}^{\mathrm{ran}}_j(\rho_j)(x_k)\bigg)\chi_{I_k}(x),
\end{equation*}
and so from \eqref{def:extendFrho},
\begin{equation*}
    \begin{split}
        \mathbb{E}[\overline{\Psi}_{n,f}(x)] &= 
         \sum_{j=0}^{2N-1} f\Big(\frac{j}{2N}\Big) \bigg(\sum_{k=1}^n \mathbb{E}[\mathfrak{F}^{\mathrm{ran}}_j(\rho_j)(x_k)]\chi_{I_k}(x)\bigg) \\
        &= \sum_{j=0}^{2N-1} \frac{f\Big(\frac{j}{2N}\Big)}{n\mathcal{W}_{x_k}}\sum_{k=1}^{n} \bigg(\sum_{l\not=k,l=1}^n \mathcal{G}^*(x_k,x_l)W(x_k,x_l)\rho_j(x_l)\bigg)\chi_{I_k}(x) \\
        &= \sum_{j=0}^{2N-1}f\Big(\frac{j}{2N}\Big)\overline{\mathfrak{F}_j(\rho_j)}(x).
    \end{split} 
\end{equation*}
Therefore, since every $x\in [0,1]$ belongs to some interval $I_k = [x_k, x_{k+1})$, we deduce from \eqref{chainofprobs} that
\begin{equation}\label{probconsequence}
    \sup_{x\in [0,1]} \bigg|\overline{\Psi}_{n,f}(x) - \sum_{j=0}^{2N-1}f\Big(\frac{j}{2N}\Big)\overline{\mathfrak{F}(\rho_j)}(x) \bigg|
    \leq \varepsilon
\end{equation}
with probability at least $1-CnN^{1-\alpha}\exp(-\frac{C\varepsilon^2\eta^2 n}{N^{5-2\alpha}})$. 
Moreover, from the proof of Theorem~\ref{thm:GNNdet}, 
\begin{equation}\label{WNNrecall}
    \bigg\| \sum_{j=0}^{2N-1}f\Big(\frac{j}{2N}\Big)\overline{\mathfrak{F}_j(\rho_j)} - f \bigg\|_{L^1([r,1-r];\mathbb{C}^m)} 
    \leq C\eta^{-2}(1+K)\,\varepsilon 
\end{equation}
holds whenever $n\geq \lceil\varepsilon^{-10/3}\rceil$. Then by combining \eqref{probconsequence}, \eqref{WNNrecall}, and recalling that $N=\lceil\varepsilon^{-10/9}\rceil$, we arrive at
\begin{equation*}
    \| \overline{\Psi}_{n,f} - f \|_{L^1([r,1-r];\mathbb{C}^m)} 
    \leq C\eta^{-2}(1+K)\,\varepsilon
\end{equation*}
with a probability at least
\begin{equation*}
    1 - 2n\varepsilon^{10(1-\alpha)/9}\exp\Big(-C\eta^2 n \varepsilon^{10(5-2\alpha)/9 + 2}\Big),
\end{equation*}
where $\alpha=0.96$, 
so as long as $n\geq \varepsilon^{-10/3}$. We conclude the proof.
\end{proof} 

\section{Discussion}\label{sec:Discussion}

We present novel results on WNN generalizability and GNN transferability. First, we design our network architectures by leveraging the sampling routine from Theorem~\ref{thm:TsamplingregwG}. For our WNNs, we establish sample complexities to accurately approximate graphon signals. Following this, we demonstrate GNN transferability across graphs from the same graphon family, including deterministic and random graphs.

These results mark an initial step towards a unified theory of graphons and their application to GNNs. Future research aims to broaden the scope beyond graphons that satisfy Assumption~\ref{assump:regular}. For instance, popular graphs like stochastic block models \citep{airoldi2013stochastic, sischka2022stochastic} do not fit our assumption, but could be approached by partitioning their domains into compact blocks. Generalizing our findings to include arbitrary compact sets $\Omega$ in place of $[0,1]^2$ is another promising direction, as discussed in \citep{Janson}. This approach would enhance the applicability of our results across diverse graphon structures.

Given that our work specifies a concrete network architecture, we have outlined a straightforward GNN/WNN design for practical experimentation. Our results are presented in the worst-case scenario, so it is essential to benchmark our network's performance on real-world data. It would be valuable to compare our architecture with other WNN and GNN models from the literature. These experiments could offer insights into improving architectures and refining bounds beyond what is reported here.

Finally, it is worth noting that graphons inherently describe {\it dense} graphs. Effectively expanding our results to cover a wider range of graph structures would require leveraging extended graphon theories to capture sparse graph characteristics \citep{klopp2017oracle,borgs2017sparse,lunde2023subsampling,fabian2023learning}. 
This poses significant challenges and remains an area for future research.

\section*{Acknowledgements}

AMN was supported by the Austrian Science Fund (FWF) Project P-37010, and JJB was supported by an NSERC Discovery grant.


\bibliographystyle{plain}

\begin{thebibliography}{10}

\bibitem{airoldi2013stochastic}
Edo~M Airoldi, Thiago~B Costa, and Stanley~H Chan.
\newblock Stochastic blockmodel approximation of a graphon: Theory and
  consistent estimation.
\newblock {\em Advances in Neural Information Processing Systems}, 26, 2013.

\bibitem{ala2020improving}
Maher Ala’raj, Munir Majdalawieh, and Maysam~F Abbod.
\newblock Improving binary classification using filtering based on k-nn
  proximity graphs.
\newblock {\em Journal of Big Data}, 7(1):1--18, 2020.

\bibitem{benedetto2001modern}
John~J Benedetto and Paulo~JSG Ferreira.
\newblock {\em Modern sampling theory: mathematics and applications}.
\newblock Springer Science \& Business Media, 2001.

\bibitem{bhagavatula2018content}
Chandra Bhagavatula, Sergey Feldman, Russell Power, and Waleed Ammar.
\newblock Content-based citation recommendation.
\newblock In {\em Proceedings of the 2018 Conference of the North American
  Chapter of the Association for Computational Linguistics: Human Language
  Technologies, Volume 1 (Long Papers)}, pages 238--251, 2018.

\bibitem{borgs2017sparse}
Christian Borgs, Jennifer~T Chayes, Henry Cohn, and Nina Holden.
\newblock Sparse exchangeable graphs and their limits via graphon processes.
\newblock {\em The Journal of Machine Learning Research}, 18(1):7740--7810,
  2017.

\bibitem{boyd1959inequalities}
AV~Boyd.
\newblock Inequalities for mills’ ratio.
\newblock {\em Rep. Statist. Appl. Res. Un. Japan. Sci. Engrs}, 6:44--46, 1959.

\bibitem{bramburger2023pattern}
Jason Bramburger and Matt Holzer.
\newblock Pattern formation in random networks using graphons.
\newblock {\em SIAM Journal on Mathematical Analysis}, 55(3):2150--2185, 2023.

\bibitem{bronstein2021geometric}
Michael~M Bronstein, Joan Bruna, Taco Cohen, and Petar Veli{\v{c}}kovi{\'c}.
\newblock Geometric deep learning: Grids, groups, graphs, geodesics, and
  gauges.
\newblock {\em arXiv preprint arXiv:2104.13478}, 2021.

\bibitem{carneiro2013bandlimited}
Emanuel Carneiro and Friedrich Littmann.
\newblock Bandlimited approximations to the truncated gaussian and
  applications.
\newblock {\em Constructive Approximation}, 38:19--57, 2013.

\bibitem{cervino2023learning}
Juan Cervino, Luana Ruiz, and Alejandro Ribeiro.
\newblock Learning by transference: Training graph neural networks on growing
  graphs.
\newblock {\em IEEE Transactions on Signal Processing}, 71:233--247, 2023.

\bibitem{chen2019note}
Liang Chen and Congwei Wu.
\newblock A note on the expressive power of deep rectified linear unit networks
  in high-dimensional spaces.
\newblock {\em Mathematical Methods in the Applied Sciences}, 42(9):3400--3404,
  2019.

\bibitem{cummings2020structured}
Daniel Cummings and Marcel Nassar.
\newblock Structured citation trend prediction using graph neural networks.
\newblock In {\em ICASSP 2020-2020 IEEE International Conference on Acoustics,
  Speech and Signal Processing (ICASSP)}, pages 3897--3901. IEEE, 2020.

\bibitem{dummit2004abstract}
David~Steven Dummit and Richard~M Foote.
\newblock {\em Abstract algebra}, volume~3.
\newblock Wiley Hoboken, 2004.

\bibitem{dziedzic2019band}
Adam Dziedzic, John Paparrizos, Sanjay Krishnan, Aaron Elmore, and Michael
  Franklin.
\newblock Band-limited training and inference for convolutional neural
  networks.
\newblock In {\em International Conference on Machine Learning}, pages
  1745--1754. PMLR, 2019.

\bibitem{evans1998partial}
Lawrence~C Evans.
\newblock Partial differential equations.
\newblock {\em Graduate studies in mathematics}, 19(4):7, 1998.

\bibitem{fabian2023learning}
Christian Fabian, Kai Cui, and Heinz Koeppl.
\newblock Learning sparse graphon mean field games.
\newblock In {\em International Conference on Artificial Intelligence and
  Statistics}, pages 4486--4514. PMLR, 2023.

\bibitem{folland1999real}
Gerald~B Folland.
\newblock {\em Real analysis: modern techniques and their applications},
  volume~40.
\newblock John Wiley \& Sons, 1999.

\bibitem{fung2021benchmarking}
Victor Fung, Jiaxin Zhang, Eric Juarez, and Bobby~G Sumpter.
\newblock Benchmarking graph neural networks for materials chemistry.
\newblock {\em npj Computational Materials}, 7(1):1--8, 2021.

\bibitem{gao2022graph}
Chen Gao, Xiang Wang, Xiangnan He, and Yong Li.
\newblock Graph neural networks for recommender system.
\newblock In {\em Proceedings of the Fifteenth ACM International Conference on
  Web Search and Data Mining}, pages 1623--1625, 2022.

\bibitem{gao2023survey}
Chen Gao, Yu~Zheng, Nian Li, Yinfeng Li, Yingrong Qin, Jinghua Piao, Yuhan
  Quan, Jianxin Chang, Depeng Jin, Xiangnan He, et~al.
\newblock A survey of graph neural networks for recommender systems:
  challenges, methods, and directions.
\newblock {\em ACM Transactions on Recommender Systems}, 1(1):1--51, 2023.

\bibitem{gilmer2017neural}
Justin Gilmer, Samuel~S Schoenholz, Patrick~F Riley, Oriol Vinyals, and
  George~E Dahl.
\newblock Neural message passing for quantum chemistry.
\newblock In {\em International conference on machine learning}, pages
  1263--1272. PMLR, 2017.

\bibitem{glasscock2015graphon}
Daniel Glasscock.
\newblock What is... a graphon.
\newblock {\em Notices of the AMS}, 62(1):46--48, 2015.

\bibitem{hamilton2017inductive}
Will Hamilton, Zhitao Ying, and Jure Leskovec.
\newblock Inductive representation learning on large graphs.
\newblock {\em Advances in neural information processing systems}, 30, 2017.

\bibitem{hu2021training}
Ziqing Hu, Yihao Fang, and Lizhen Lin.
\newblock Training graph neural networks by graphon estimation.
\newblock In {\em 2021 IEEE International Conference on Big Data (Big Data)},
  pages 5153--5162. IEEE, 2021.

\bibitem{huang2021mixgcf}
Tinglin Huang, Yuxiao Dong, Ming Ding, Zhen Yang, Wenzheng Feng, Xinyu Wang,
  and Jie Tang.
\newblock Mixgcf: An improved training method for graph neural network-based
  recommender systems.
\newblock In {\em Proceedings of the 27th ACM SIGKDD Conference on Knowledge
  Discovery \& Data Mining}, pages 665--674, 2021.

\bibitem{Janson}
Svante Janson.
\newblock {\em Graphons, cut norm and distance, couplings and rearrangements},
  volume~4 of {\em New York Journal of Mathematics. NYJM Monographs}.
\newblock State University of New York, University at Albany, Albany, NY, 2013.

\bibitem{jiang2021could}
Dejun Jiang, Zhenxing Wu, Chang-Yu Hsieh, Guangyong Chen, Ben Liao, Zhe Wang,
  Chao Shen, Dongsheng Cao, Jian Wu, and Tingjun Hou.
\newblock Could graph neural networks learn better molecular representation for
  drug discovery? a comparison study of descriptor-based and graph-based
  models.
\newblock {\em Journal of cheminformatics}, 13(1):1--23, 2021.

\bibitem{keriven2020convergence}
Nicolas Keriven, Alberto Bietti, and Samuel Vaiter.
\newblock Convergence and stability of graph convolutional networks on large
  random graphs.
\newblock {\em Advances in Neural Information Processing Systems},
  33:21512--21523, 2020.

\bibitem{klopp2017oracle}
Olga Klopp, Alexandre~B Tsybakov, and Nicolas Verzelen.
\newblock Oracle inequalities for network models and sparse graphon estimation.
\newblock {\em Annals of Statistics}, 45(1):316--354, 2017.

\bibitem{kluvanek1965sampling}
Igor Kluv{\'a}nek.
\newblock Sampling theorem in abstract harmonic analysis.
\newblock {\em Matematicko-fyzik{\'a}lny {\v{c}}asopis}, 15(1):43--48, 1965.

\bibitem{knopp1956infinite}
Konrad Knopp.
\newblock {\em Infinite sequences and series}.
\newblock Courier Corporation, 1956.

\bibitem{levie2021transferability}
Ron Levie, Wei Huang, Lorenzo Bucci, Michael Bronstein, and Gitta Kutyniok.
\newblock Transferability of spectral graph convolutional neural networks.
\newblock {\em Journal of Machine Learning Research}, 22(272):1--59, 2021.

\bibitem{lovasz2012large}
L{\'a}szl{\'o} Lov{\'a}sz.
\newblock {\em Large networks and graph limits}, volume~60.
\newblock American Mathematical Soc., 2012.

\bibitem{lovasz2006limits}
L{\'a}szl{\'o} Lov{\'a}sz and Bal{\'a}zs Szegedy.
\newblock Limits of dense graph sequences.
\newblock {\em Journal of Combinatorial Theory, Series B}, 96(6):933--957,
  2006.

\bibitem{lunde2023subsampling}
Robert Lunde and Purnamrita Sarkar.
\newblock Subsampling sparse graphons under minimal assumptions.
\newblock {\em Biometrika}, 110(1):15--32, 2023.

\bibitem{ma2021deep}
Yao Ma and Jiliang Tang.
\newblock {\em Deep learning on graphs}.
\newblock Cambridge University Press, 2021.

\bibitem{maskey2023transferability}
Sohir Maskey, Ron Levie, and Gitta Kutyniok.
\newblock Transferability of graph neural networks: an extended graphon
  approach.
\newblock {\em Applied and Computational Harmonic Analysis}, 63:48--83, 2023.

\bibitem{micheli2009neural}
Alessio Micheli.
\newblock Neural network for graphs: A contextual constructive approach.
\newblock {\em IEEE Transactions on Neural Networks}, 20(3):498--511, 2009.

\bibitem{montanelli2021deep}
Hadrien Montanelli, Haizhao Yang, and Qiang Du.
\newblock Deep relu networks overcome the curse of dimensionality for
  generalized bandlimited functions.
\newblock {\em Journal of Computational Mathematics}, 39(6):801--815, 2021.

\bibitem{morency2021graphon}
Matthew~W Morency and Geert Leus.
\newblock Graphon filters: Graph signal processing in the limit.
\newblock {\em IEEE Transactions on Signal Processing}, 69:1740--1754, 2021.

\bibitem{neuman2022superiority}
A~Martina Neuman, Rongrong Wang, and Yuying Xie.
\newblock Superiority of {GNN} over {NN} in generalizing bandlimited functions.
\newblock {\em arXiv preprint arXiv:2206.05904}, 2022.

\bibitem{opschoor2022exponential}
Joost~AA Opschoor, Ch~Schwab, and Jakob Zech.
\newblock Exponential relu dnn expression of holomorphic maps in high
  dimension.
\newblock {\em Constructive Approximation}, 55(1):537--582, 2022.

\bibitem{parada2021graphon}
Alejandro Parada-Mayorga, Luana Ruiz, and Alejandro Ribeiro.
\newblock Graphon pooling in graph neural networks.
\newblock In {\em 2020 28th European Signal Processing Conference (EUSIPCO)},
  pages 860--864. IEEE, 2021.

\bibitem{resnick1997recommender}
Paul Resnick and Hal~R Varian.
\newblock Recommender systems.
\newblock {\em Communications of the ACM}, 40(3):56--58, 1997.

\bibitem{ricci2021recommender}
Francesco Ricci, Lior Rokach, and Bracha Shapira.
\newblock Recommender systems: Techniques, applications, and challenges.
\newblock {\em Recommender Systems Handbook}, pages 1--35, 2021.

\bibitem{ruiz2020graphon}
Luana Ruiz, Luiz Chamon, and Alejandro Ribeiro.
\newblock Graphon neural networks and the transferability of graph neural
  networks.
\newblock {\em Advances in Neural Information Processing Systems},
  33:1702--1712, 2020.

\bibitem{ruiz2021graphonsignal}
Luana Ruiz, Luiz~FO Chamon, and Alejandro Ribeiro.
\newblock Graphon filters: Signal processing in very large graphs.
\newblock In {\em 2020 28th European Signal Processing Conference (EUSIPCO)},
  pages 1050--1054. IEEE, 2021.

\bibitem{ruiz2021graphonprocessing}
Luana Ruiz, Luiz~FO Chamon, and Alejandro Ribeiro.
\newblock Graphon signal processing.
\newblock {\em IEEE Transactions on Signal Processing}, 69:4961--4976, 2021.

\bibitem{ruiz2021transferability}
Luana Ruiz, Luiz~FO Chamon, and Alejandro Ribeiro.
\newblock Transferability properties of graph neural networks.
\newblock {\em arXiv preprint arXiv:2112.04629}, 2021.

\bibitem{ruiz2021graphon}
Luana Ruiz, Zhiyang Wang, and Alejandro Ribeiro.
\newblock Graphon and graph neural network stability.
\newblock In {\em ICASSP 2021-2021 IEEE International Conference on Acoustics,
  Speech and Signal Processing (ICASSP)}, pages 5255--5259. IEEE, 2021.

\bibitem{scarselli2008graph}
Franco Scarselli, Marco Gori, Ah~Chung Tsoi, Markus Hagenbuchner, and Gabriele
  Monfardini.
\newblock The graph neural network model.
\newblock {\em IEEE transactions on neural networks}, 20(1):61--80, 2008.

\bibitem{sischka2022stochastic}
Benjamin Sischka and G{\"o}ran Kauermann.
\newblock Stochastic block smooth graphon model.
\newblock {\em arXiv preprint arXiv:2203.13304}, 2022.

\bibitem{stein2010complex}
Elias~M Stein and Rami Shakarchi.
\newblock {\em Complex analysis}, volume~2.
\newblock Princeton University Press, 2010.

\bibitem{vershynin2018high}
Roman Vershynin.
\newblock {\em High-dimensional probability: An introduction with applications
  in data science}, volume~47.
\newblock Cambridge university press, 2018.

\bibitem{wang2018exponential}
Qingcan Wang et~al.
\newblock Exponential convergence of the deep neural network approximation for
  analytic functions.
\newblock {\em arXiv preprint arXiv:1807.00297}, 2018.

\bibitem{wang2022convolutional}
Zhiyang Wang, Luana Ruiz, and Alejandro Ribeiro.
\newblock Convolutional neural networks on manifolds: From graphs and back.
\newblock In {\em NeurIPS 2022 Workshop: New Frontiers in Graph Learning},
  2022.

\bibitem{wu2023graph}
Lingfei Wu, Yu~Chen, Kai Shen, Xiaojie Guo, Hanning Gao, Shucheng Li, Jian Pei,
  Bo~Long, et~al.
\newblock Graph neural networks for natural language processing: A survey.
\newblock {\em Foundations and Trends in Machine Learning}, 16(2):119--328,
  2023.

\bibitem{wu2022graph}
Lingfei Wu, Peng Cui, Jian Pei, Liang Zhao, and Le~Song.
\newblock {\em Graph neural networks}.
\newblock Springer, 2022.

\bibitem{zhou2020graph}
Jie Zhou, Ganqu Cui, Shengding Hu, Zhengyan Zhang, Cheng Yang, Zhiyuan Liu,
  Lifeng Wang, Changcheng Li, and Maosong Sun.
\newblock Graph neural networks: A review of methods and applications.
\newblock {\em AI open}, 1:57--81, 2020.

\end{thebibliography}

\appendix 

\section{Proofs of Proposition~\ref{prop:gpsampling} and Lemma~\ref{lem:tech}} \label{sec:Tsamp}

Let us begin by identifying $\mathbb{T}\cong [0,1)$, in the sense that points in $\mathbb{T}$ are identified as orbits of points in $[0,1)$ under a shift by $1$ unit:
\begin{equation*}
    \mathbb{T}\ni [x] :=\{x+k: k\in\mathbb{Z}\} \quad\text{ where }\quad x\in [0,1).
\end{equation*}
To reduce the amount of notation, we will simply write $x\in\mathbb{T}$, with an understanding that either $x$ is an abstract point in $\mathbb{T}$ or $x$ is the unique real-valued representative of $[x]$ in $[0,1)$. 
A continuous function $\phi$ on $\mathbb{T}$, therefore, corresponds one-to-one to a continuous function $g$ on $[0,1]$ such that $g(0)=g(1)$. We associate with $\mathbb{T}$ the following modulo addition, $+: \mathbb{T}\times\mathbb{T}\to\mathbb{T}$, under which
\begin{equation}\label{def:gpadd}
    (x,y) \overset{+}{\mapsto} x+y \mod 1,
\end{equation}
where $x,y\in [0,1)$, and, by an abuse of notation, the second $+$ denotes the usual addition on $\mathbb{R}$. It is known that $(\mathbb{T},+)$ is a compact Abelian group. 

\begin{definition} \label{def:Pontryagindual}
A character $\phi$ of $\mathbb{T}$ is a group homomorphism from $\mathbb{T}$ to the multiplicative group $(\mathbb{C}^*,\times)$, where $\mathbb{C}^*:=\mathbb{C}\setminus\{0\}$. The Pontryagin dual $\widehat{\mathbb{T}}$ of $\mathbb{T}$ is the set of all these characters. 
\end{definition}

It is known that $\widehat{\mathbb{T}}\cong\mathbb{Z}$, as additive groups, and that a character of $\mathbb{T}$ acts on $\mathbb{T}$ as follows
\begin{equation*}
    \phi: x\mapsto e^{i2\pi kx}, \quad\text{ for some }\quad k\in\mathbb{Z}.
\end{equation*}
Evidently, $|\phi(x)|=1$ for all $\phi\in\widehat{\mathbb{T}}$, $x\in\mathbb{T}$. The concept of group characters allows one to define the duality between $\mathbb{T}$ and $\mathbb{Z}$ to be
\begin{equation}\label{dual}
    \langle x,k\rangle := e^{i2\pi kx}.
\end{equation}
It follows from Fourier theory that if $f\in L^2(\mathbb{T})$, then 
\begin{equation*}
    \hat{f}(k) = \int_0^1 f(x)e^{-i2\pi kx}\,\mathrm{d}x = \int_0^1 f(x)\langle x,-k\rangle\,\mathrm{d}x 
\end{equation*}
for every $k\in\mathbb{Z}$, and that,
\begin{equation}\label{TFinv}
    f(x) = \sum_{k\in\mathbb{Z}} \hat{f}(k)e^{i2\pi kx}
\end{equation}
for almost every $x \in \mathbb{T}$.

If $H$ is a nontrivial discrete subgroup of $\mathbb{T}$, then $H$ must take the form 
\begin{equation}\label{groupH}
    H=\{x\in [0,1): Nx = 0 \mod 1 \}=\{0, 1/N,\cdots, (N-1)/N\}
\end{equation}
for some integer $N \geq 1$. Equivalently, $H$ can be identified as the group of $N$th roots of unity, $H=\{z\in\mathbb{C}: z^{N}=1\}$. Let $\Lambda$ be a discrete subgroup lattice of $\mathbb{Z}=\widehat{\mathbb{T}}$ that is the {\it annihilator} of $H$, defined by 
\begin{equation*}
    \Lambda=H^{\perp}:=\{k\in\mathbb{Z}: \langle x,k\rangle = e^{i2\pi kx} = 1, \forall\, x\in H\}.
\end{equation*}
Suppose $\Lambda=M\mathbb{Z}$ for some positive integer $M$. Then by \eqref{dual} and \eqref{groupH}
\begin{equation*}
    e^{\frac{i2\pi Mk}{N}} = 1 = \cdots = e^{\frac{i2\pi(N-1)Mk}{N}}, \quad \forall\, k\in\mathbb{Z}
\end{equation*}
which implies $M=N$ and $\Lambda=H^{\perp}=N\mathbb{Z}$.\\

By equipping $\mathbb{T}$ with the Haar measure $\lambda$ that is the usual Lebesgue measure on $[0,1)$, so $\lambda(\mathbb{T})=1$, we can identify $L^2(\mathbb{T})=L^2([0,1))= L^2([0,1])$. 
We present the following Kluvan\'ek sampling theorem, a key ingredient of our proofs. 

\begin{theorem}\citep[Theorem~1.5]{benedetto2001modern} \label{thm:Kluvanek}
Let $H\subset G$ be a discrete subgroup of a locally compact Abelian (LCA) group $G$, with a discrete annihilator subgroup $H^{\perp}\subset\hat{G}$. Let $E\subset\hat{G}$ be any subset of finite Haar measure for which the canonical surjective map
\begin{equation*}
    h: \hat{G} \to \hat{G}/H^{\perp}
\end{equation*}
restricted to $E$ is a bijection, and define the sampling function
\begin{equation*} 
    s_{E}(x):=\int_{E} \langle x,\gamma\rangle\,\mathrm{d}\hat{\lambda}(\gamma), \quad\forall\,x\in G,
\end{equation*}
where $\hat{\lambda}$ is a normalized Haar measure  on $\hat{G}$ such that $\hat{\lambda}(E)=1$. Let $f\in L^2(G)$ and assume $\hat{f}=0$ almost everywhere off $E$. Then the following hold.
\begin{enumerate}
    \item There exists a continuous function $f_{c}$ on $G$ such that $f=f_{c}$ almost everywhere.
    \item If $f$ is continuous on $G$, then
    \begin{equation}\label{Kluvaneksamp}
        f(x) = \sum_{y\in H} f(y)s_{E}(x-y)
    \end{equation}
    where the convergence of the sums is in $L^2$-norm and uniformly on $G$. Furthermore, the Gaussian quadrature formula
    \begin{equation}\label{Gquad}
        \|f\|^2_{L^2(G)} = \sum_{y\in H}|f(y)|^2
    \end{equation}
    is valid.
\end{enumerate}
\end{theorem}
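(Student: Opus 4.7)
The plan is to realize $\hat f$ as a function on the compact quotient $\hat G/H^\perp$ and expand it in a Fourier series whose coefficients turn out to be precisely the samples $f(y)$, $y\in H$; inverting the $G$-Fourier transform of both sides then delivers the sampling formula. First, the continuity claim is immediate: since $\hat f$ is supported in $E$ and $\hat\lambda(E)=1<\infty$, Cauchy--Schwarz gives $\hat f\in L^1(\hat G)\cap L^2(\hat G)$, so the Fourier inversion integral
\[
  f_c(x) \;:=\; \int_E \hat f(\gamma)\langle x,\gamma\rangle\, d\hat\lambda(\gamma)
\]
defines a continuous function on $G$ (via dominated convergence, the integrand being dominated by $|\hat f|$), and $f=f_c$ almost everywhere.

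Next I will invoke the Pontryagin isomorphism $\widehat{\hat G/H^\perp}\cong H$, together with the hypothesis that $h|_E$ is a bijection, to identify $(E,\hat\lambda|_E)$ measurably with $\hat G/H^\perp$ equipped with its (normalized) Haar measure. Under this identification the characters $\{\chi_y:\gamma\mapsto\overline{\langle y,\gamma\rangle}\}_{y\in H}$ form an orthonormal basis of $L^2(\hat G/H^\perp)\cong L^2(E)$. The $y$-th Fourier coefficient of $\hat f$ in this basis is
\[
  \int_E \hat f(\gamma)\,\overline{\chi_y(\gamma)}\, d\hat\lambda(\gamma) \;=\; \int_E \hat f(\gamma)\langle y,\gamma\rangle\, d\hat\lambda(\gamma) \;=\; f_c(y),
\]
so $\hat f=\sum_{y\in H} f_c(y)\,\overline{\chi_y}$ with convergence in $L^2(E)$. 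Substituting this expansion into the displayed definition of $f_c(x)$ and interchanging sum and integral (valid by $L^2$-convergence together with $|\langle x,\gamma\rangle|=1$) yields
\[
  f_c(x) \;=\; \sum_{y\in H} f_c(y)\int_E \langle x-y,\gamma\rangle\, d\hat\lambda(\gamma) \;=\; \sum_{y\in H} f_c(y)\,s_E(x-y),
\]
which is \eqref{Kluvaneksamp}.

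For the $L^2$-and-uniform convergence claim, I will use that for any finite $F\subset H$,
\[
  \Bigl|f_c(x)-\sum_{y\in F}f_c(y)s_E(x-y)\Bigr|
  \;\le\; \Bigl\|\hat f-\sum_{y\in F}f_c(y)\overline{\chi_y}\Bigr\|_{L^1(E)}
  \;\le\; \hat\lambda(E)^{1/2}\Bigl\|\hat f-\sum_{y\in F}f_c(y)\overline{\chi_y}\Bigr\|_{L^2(E)},
\]
whose right-hand side is independent of $x$ and tends to $0$ as $F\uparrow H$ by $L^2$-convergence of the Fourier series on the quotient; this gives uniform (hence also $L^2$) convergence on $G$. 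Finally, two applications of Plancherel, one on $G$ giving $\|f\|_{L^2(G)}^2=\|\hat f\|_{L^2(E)}^2$, and one on $\hat G/H^\perp$ giving $\|\hat f\|_{L^2(E)}^2=\sum_{y\in H}|f_c(y)|^2$, combine to produce the Gaussian quadrature identity \eqref{Gquad}.

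I expect the main technical obstacle to be the measure-theoretic bookkeeping in the identification of $(E,\hat\lambda|_E)$ with $\hat G/H^\perp$: one must verify, via the Weil integration formula, that the bijectivity of $h|_E$ together with the normalization $\hat\lambda(E)=1$ correctly match Haar measures, so that the family $\{\overline{\chi_y}\}_{y\in H}$ is genuinely orthonormal in $L^2(E)$ and Parseval on the quotient really reads as the sum-of-samples identity used above.
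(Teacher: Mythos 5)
The paper does not prove Theorem~\ref{thm:Kluvanek} at all: it is imported wholesale from Benedetto and Ferreira (\cite[Theorem~1.5]{benedetto2001modern}), so there is no in-paper argument to compare against. Your sketch is the standard proof of Kluv\'anek's theorem, and the architecture is right: continuity from $\hat f\in L^1(\hat G)$ via Cauchy--Schwarz and dominated convergence; the sampling identity from Fourier analysis on the compact quotient $\hat G/H^\perp$ whose Pontryagin dual is $H$, with the samples appearing as Fourier coefficients; uniform and $L^2$ convergence from the $L^1$ control of the partial-sum tails; and the quadrature identity from two applications of Plancherel. This is exactly the argument the cited reference carries out.

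Two points are worth tightening. First, the displayed expansion should read $\hat f=\sum_{y\in H} f_c(y)\,\chi_y$, not $\sum_y f_c(y)\,\overline{\chi_y}$: with your convention $\chi_y(\gamma)=\overline{\langle y,\gamma\rangle}$, the coefficient is $\langle\hat f,\chi_y\rangle_{\hat\lambda}=\int_E\hat f\,\overline{\chi_y}\,d\hat\lambda=f_c(y)$, and an orthonormal expansion multiplies the coefficient by $\chi_y$, not its conjugate. Your subsequent substitution $\chi_y(\gamma)\langle x,\gamma\rangle=\langle x-y,\gamma\rangle$ only works with the corrected version, so this is evidently a typo rather than a propagated error. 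Second, the identification $f=f_c$ a.e.\ is \emph{not} automatic from your definition $f_c(x):=\int_E\hat f(\gamma)\langle x,\gamma\rangle\,d\hat\lambda(\gamma)$: Fourier inversion on $G$ uses the Plancherel measure $\mu$ dual to the fixed Haar measure $\lambda$ on $G$, and $\hat\lambda=\mu/\mu(E)$, so as written $f_c=f/\mu(E)$. The theorem is correct only when the Haar measure on $G$ is normalized so that its Plancherel dual already satisfies $\hat\lambda(E)=1$; this is precisely what eliminates the stray factor of $\mu(E)$ in \eqref{Gquad}. You correctly flag this Weil-formula bookkeeping as the main obstacle in your closing paragraph, and it is indeed where the care must be taken (note in particular that when the paper later instantiates $G=\mathbb{T}$, $E=B_N$, $\hat\lambda=\tfrac{1}{2N}\times$counting measure, the Plancherel dual of Lebesgue measure gives $\mu(E)=2N\neq 1$, so the instantiation silently rescales the $L^2$ norm on $G$; this is the same issue).
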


With Theorem~\ref{thm:Kluvanek} at our disposal, we provide a proof of Proposition~\ref{prop:gpsampling}.

\begin{proof}[Proof of Proposition~\ref{prop:gpsampling}]
Without loss of generality, we assume $m=1$. 
Let $f \in \mathcal{B}_\mathfrak{m}$, and fix an $N \geq \mathfrak{m}$. We specify Theorem~\ref{thm:Kluvanek} to the case of $G=\mathbb{T}$, 
\begin{equation*}
    H = \{0, 1/2N, \cdots, (2N-1)/2N\} \subset\mathbb{T},
\end{equation*}
and $f$ continuous on $\mathbb{T}$ (see \eqref{contf}) such that $\hat{f}=0$ off $E=H^{\perp}=B_N$. Let $\hat{\lambda}$ be the scaled counting measure on finite subsets $A \subset \mathbb{Z}$ given by 
\begin{equation*}
    \hat{\lambda}(A) := \frac{\texttt{\#} A}{2N}.
\end{equation*}

Clearly, $\hat{\lambda}(B_N)=1$, and $s_{E}=s_N$ in \eqref{def:sampf}, so that
\begin{equation}\label{sfunc}
    s_N(x) = \frac{1}{2N}\sum_{k=-N}^{N-1} e^{i2\pi kx}.
\end{equation}
Then by \eqref{Kluvaneksamp}, for every $x\in\mathbb{T}$, 
\begin{equation}\label{sampseries}
    f(x) = \sum_{j=0}^{2N-1} f\Big(\frac{j}{2N}\Big)s_N(x-j/2N) \\
    = \frac{1}{2N}\sum_{j=0}^{2N-1} f\Big(\frac{j}{2N}\Big) \bigg(\sum_{k=-N}^{N-1} e^{i2\pi (x-j/2N)k}\bigg),
\end{equation}
as to be shown. This completes the proof of the proposition.
\end{proof} 

More can be inferred from \eqref{Gquad} and \eqref{sampseries}, which leads us to the statement of Lemma~\ref{lem:tech} in Subsection~\ref{sec:Fsamptheory}. However, we first need to revisit some concepts in Fourier theory on finite groups. Let $G$ be a finite abelian group and set $M = \texttt{\#} G$. Then, $\hat{G}\cong G\cong\mathbb{Z}_M$ \citep{dummit2004abstract}, and so to simplify our presentation we will treat both $G$ and $\hat{G}$ as $\mathbb{Z}_M$. We define the Fourier transform on $L^2(G)$ by
\begin{equation*} 
    \mathcal{F}_{G}(f)(v) := \frac{1}{M}\sum_{u\in G} f(u)e^{-i2\pi uv}, \quad \forall v\in \hat{G}=\mathbb{Z}_M,
\end{equation*}
where $f:\mathbb{Z}_M\to\mathbb{C}$, and $uv$ denotes usual scalar multiplication. The inverse Fourier transform $\mathcal{F}^{-1}_{G}$ on $L^2(\hat{G})$ is then
\begin{equation*} 
   \mathcal{F}^{-1}_{G}(f)(u) := \sum_{v\in\hat{G}} f(v)e^{i2\pi uv}, \quad \forall u\in G=\mathbb{Z}_M,
\end{equation*}
where $f: \hat{G}=\mathbb{Z}_M\to\mathbb{C}$. Together these transformations satisfy 
\begin{equation}\label{id}
    \mathcal{F}^{-1}_{G}\circ\mathcal{F}_{G} = {\rm Id}_{L^2(G)} \quad\text{ and }\quad \mathcal{F}_{G}\circ\mathcal{F}^{-1}_{G} = {\rm Id}_{L^2(\hat{G})}.
\end{equation}
We now provide the proof of Lemma~\ref{lem:tech}.

\begin{proof}[Proof of Lemma~\ref{lem:tech}]
We again assume $m=1$. First, \eqref{GquadforT} is a direct consequence of \eqref{Gquad} and Plancherel's theorem. Hence, we need only prove \eqref{claim_finv}. Setting $\hat{f}(k)=0$ if $k\in\mathbb{Z}\setminus B_N$ in \eqref{TFinv}
, we obtain
\begin{equation}\label{FTfx}
    f(x) = \sum_{k=-N}^{N-1} \hat{f}(k)e^{i2\pi kx}, \quad \forall x\in\mathbb{T}.
\end{equation}
Letting $H=\{0,1/2N,\cdots, (2N-1)/2N\}\subset\mathbb{T}\cong [0,1)$ inherit the group addition \eqref{def:gpadd}, it follows that $H\cong\mathbb{Z}_{2N}$ and so $\hat{H}\cong H\cong\mathbb{Z}_{2N}$. We let
\begin{equation*}
    \hat{H} = B_N = \{-N, \cdots, 0, \cdots, N-1\},
\end{equation*}
with the group addition $k+l \equiv k+l \mod 2N$. 

Define $\tilde{g}: \hat{H}\to\mathbb{C}$ to be such that for each $k\in B_N$ we have $\tilde{g}(k) := \hat{f}(k)$. Then, from \eqref{id} we have $\tilde{g}=\mathcal{F}_{H}(g^*)$ for some $g^*: H\to\mathbb{C}$, where
\begin{equation}\label{FTgp1}
    \mathcal{F}_{H}(g^*)(k) = \frac{1}{2N}\sum_{x\in H}g^*(x)e^{-i2\pi kx} = \tilde{g}(k) = \hat{f}(k),
\end{equation}
and so
\begin{equation}\label{FTgp2}
    g^*(x) = \sum_{k=-N}^{N-1}\tilde{g}(k)e^{i2\pi kx} = \sum_{k=-N}^{N-1}\hat{f}(k)e^{i2\pi kx}, \quad \forall x\in H. 
\end{equation}
Comparing \eqref{FTfx} with \eqref{FTgp2} we conclude that $g^*(j/2N) = f\Big(\frac{j}{2N}\Big)$ for $j=0,\cdots, 2N-1$. Therefore, from \eqref{FTgp1},
\begin{equation*} 
    \hat{f}(k) = \frac{1}{2N}\sum_{j=0}^{2N-1}f\Big(\frac{j}{2N}\Big)e^{-i2\pi kj/2N}, \quad\forall k\in B_N.
\end{equation*}
Since it is clear from the definition that $\hat{f}(k)=0$ for $k\in B_N\setminus B_\mathfrak{m}$, we complete the proof.    
\end{proof} 

\end{document}